\numberwithin{equation}{section}
\theoremstyle{plain}
\newcommand{\cAhat}{\widehat{\cA}}
\newcommand{\cA}{\mathcal{A}}
\newcommand{\cX}{\mathcal{X}}
\newcommand{\kHat}{{k_{\min}}}
\newcommand{\R}{{\rm I}\kern-0.18em{\rm R}}
\newcommand{\p}{{\rm I}\kern-0.18em{\rm P}}
\newcommand{\E}{{\rm I}\kern-0.18em{\rm E}}
\newcommand{\1}{{\rm 1}\kern-0.24em{\rm I}}
\newcommand{\n}{\mathcal{N}}
\newcommand{\argmin}{\mathop{\mathrm{argmin}}}
\newtheorem{TH1}{Theorem}[section]
\newtheorem{prop}{Proposition}[section]
\newtheorem{lem}{Lemma}[section]
\newtheorem{defin}{Definition}[section]
\newtheorem{rem}{Remark}[section]
\newtheorem{assumption}{Assumption}
\newcommand{\Lr}{r}
\newcommand{\uderbar}[1]{\underset{\raise0.3em\hbox{$\smash{\scriptscriptstyle-}$}}{#1}}
\newcommand{\Ca}{\widehat{C}_{\alpha}}
\newcommand{\phiStar}{{\phi^*}}
\newcommand{\Fr}{F_{0,\hat \Lr}}
\newcommand{\chern}{\text{chern}}
\newcommand{\nsn}{\text{NSN}$^2$}
\newcommand{\psn}{\text{PSN}$^2$}
\newcommand{\nn}{\text{NN}$^2$}
\newcommand{\pn}{\text{PN}$^2$}
\newcommand{\nsns}{\text{NSN}$^2$\text{ }}
\newcommand{\psns}{\text{PSN}$^2$\text{ }}
\newcommand{\nns}{\text{NN}$^2$\text{ }}
\newcommand{\pns}{\text{PN}$^2$\text{ }}
\newcommand{\distas}[1]{\mathbin{\overset{#1}{\kern\z@\sim}}}%
\newsavebox{\mybox}\newsavebox{\mysim}
\newcommand{\distras}[1]{%
  \savebox{\mybox}{\hbox{\kern3pt$\scriptstyle#1$\kern3pt}}%
  \savebox{\mysim}{\hbox{$\sim$}}%
  \mathbin{\overset{#1}{\kern\z@\resizebox{\wd\mybox}{\ht\mysim}{$\sim$}}}%
}
\def\boxit#1{\vbox{\hrule\hbox{\vrule\kern6pt\vbox{\kern6pt#1\kern6pt}\kern6pt\vrule}\hrule}}
\begin{document}
\title{Neyman-Pearson Classification under High-Dimensional Settings}

\author{\name Anqi Zhao  \email anqizhao@fas.harvard.edu  \\
        \addr  Department of Statistics\\
         Harvard University
\AND 
  \name Yang Feng  \email yangfeng@stat.columbia.edu \\
       \addr Department of Statistics\\
       Columbia University
\AND 
   \name  Lie Wang \email liewang@math.mit.edu \\
       \addr  Department of Mathematics\\
       Massachusetts Institute of Technology 
\AND
\name Xin Tong \email xint@marshall.usc.edu \\
       \addr Department of Data Sciences and Operations \\
       Marshall Business School\\
       University of Southern California
}

\editor{\quad}

\maketitle

%
%

\begin{abstract}
Most existing binary classification methods target on the optimization of the overall classification risk and may fail to serve some real-world applications such as cancer diagnosis, where users are more concerned with the risk of misclassifying one specific class than the other.
Neyman-Pearson (NP) paradigm was introduced in this context as a novel statistical framework for handling asymmetric type I/II error priorities. It seeks classifiers with a minimal type II error and a constrained type I error under a user specified level.
This article is the first attempt to construct classifiers with guaranteed theoretical performance under the NP paradigm in high-dimensional settings. Based on the fundamental Neyman-Pearson Lemma, we used a plug-in approach to construct NP-type classifiers for Naive Bayes models. 
The proposed classifiers satisfy the NP oracle inequalities, which are natural NP paradigm counterparts of the oracle inequalities in classical binary classification. Besides their desirable theoretical properties, we also demonstrated their numerical advantages in prioritized error control via both simulation and real data studies.
\end{abstract}
\begin{keywords}
classification, high-dimension, Naive Bayes, Neyman-Pearson (NP) paradigm, NP oracle inequality, plug-in approach,  screening 
\end{keywords}

%
%

\section{Introduction}
Classification plays an important role in many aspects of our society. 
In medical research, identifying pathogenically distinct tumor types is central to advances in cancer treatments \citep{Golub.99, alderton2014breast}. 
In cyber security, spam messages and  virus make automatic categorical decisions a necessity.  Binary classification is arguably the simplest and most important form of classification problems, and can serve as a building block for more complicated applications. We focus our attention on binary classification in this work. A few common notations are introduced 
to facilitate our discussion.  
Let $(X, Y)$ be a random pair where $X \in \cX \subset \R^d$ is a vector of features and $Y \in \{0,1\}$ indicates $X$'s class label.
A \emph{classifier} $\phi : \mathcal{X} \to \{0,1\}$ is a mapping from $\mathcal{X}$ to $\{0,1\}$ that assigns $X$ to one of the classes.  
A \emph{classification loss function} is defined to assign a ``cost" to each misclassified instance $\phi(X)\neq Y$,
and the \emph{classification error} is defined as the expectation of this loss function with respect to the joint distribution of $(X,Y)$. 
We will focus our discussion on the 0-1 loss function  $\1\{\phi(X)\neq Y\}$ throughout the paper, where $\1(\cdot)$ denotes the indicator function.
Denote by $\p$ and $\E$ the generic probability distribution and expectation, whose meaning depends on specific contexts. 
The classification error is
$
R(\phi)=\E \1\{\phi(X)\neq Y\} = \p\left\{\phi(X)\neq Y\right\}
$.
The law of total probability allows us to decompose it into a weighted average of type~I error $R_0(\phi)=\p \left\{\phi(X)\neq Y|Y=0\right\}$ and type~II error $R_1(\phi)=\p \left\{\phi(X)\neq Y|Y=1\right\}$ as
\begin{equation}\label{EQ:risk break down}
R(\phi) \,=\, \p(Y=0)R_0(\phi)+\p(Y=1)R_1(\phi)\,.
\end{equation}

With the advent of high-throughput technologies, 
classification tasks have experienced an exponential growth in the feature dimensions throughout the past decade. 
The fundamental challenge of ``high dimension, low sample size" has motivated the development of a plethora of classification algorithms for various applications. 
While dependencies among features are usually considered a crucial characteristic of the data \citep{Ackermann.Strimmer.2009}, and can effectively reduce classification errors under suitable models and relative data abundance \citep{Shao.Wang.ea.2011,Cai.Liu.2011,Fan.Feng.ea.2011,Mai.Zou.ea.2012, Witten.Tibshirani.2012}, 
independence rules, with their superb scalability, become a rule of thumb when the feature dimension grows faster than the sample size \citep{Hastie.Tibshirani.ea.2009,james2013introduction}.  
Despite Naive Bayes models' reputation of being ``simplistic" by ignoring all dependency structure among features, 
they lead to simple classifiers that have proven worthy on high-dimensional data with remarkably good performances in numerous real-life applications.  
Taking the classical model setting of two-class Gaussian with a common covariance matrix,  \citet{Bickel.Levina.2004} showed the superior performance of Naive Bayes models over (naive implementation of) the Fisher linear discriminant rule under broad conditions in high-dimensional settings. 
\cite{Fan.Fan.2008} further established the necessity of feature selection for high-dimensional classification problems by showing that even independence rules can be as poor as random guessing due to noise accumulation.
Featuring both independence rule and feature selection, the (sparse) Naive Bayes model remains a good choice for classification when the sample size is \emph{fairly limited}. 


\subsection{Asymmetrical priorities on errors}
Most existing binary classification methods target on the optimization of the overall risk \eqref{EQ:risk break down} and may fail to serve the purpose when users' relative priorities over type I/II errors  differ significantly from those implied by  the marginal probabilities of the two classes.
A representative example of such scenario is the diagnosis of serious disease.
Let $1$ code the healthy class and $0$ code the diseased class.
Given that usually $$\p(Y=1) \gg \p(Y= 0)\,,$$
minimizing the overall risk \eqref{EQ:risk break down} might yield classifiers with small overall risk $R$ (as a result of small $R_1$) yet large $R_0$ --- a situation quite undesirable  in practice given  
flagging a healthy case incurs only extra cost of additional tests while failing to detect the disease endangers a life.

The neuroblastoma dataset introduced by \cite{Oberthur.2006} provides a perfect illustration of such intuition.
The dataset contains gene expression profiles on $d=10707$ genes from 246 patients in a German neuroblastoma trial, among which 56 are high-risk (labeled as 0) and 190  are low-risk (labeled as 1).
We randomly selected 41 `$0$'s and 123 `$1$'s as our training sample (such that the proportion of `$0$'s is about the same as that in the entire dataset), and tested the resulting classifiers on the rest 15 `$0$'s and 67 `$1$'s.
The average error rates of \psns (to be proposed; implemented here at significance level 0.05), Gaussian Naive Bayes (nb), penalized logistic regression (pen-log),  and Support Vector Machine (svm) over 1000 random splits are summarized  in Table \ref{table:1}.
\begin{table}[h]
\caption{Average error rates over 1000 random splits for neuroblastoma dataset.\label{table:1} }
\begin{center}
\begin{tabular}{l r r r r r  }
{Error Type} & \psn & nb &  pen-log & svm\\
\hline
type I\phantom{I} ($0$ as $1$) &  \underline{.038} &   .308 &  .529& .603\\
type II  ($1$ as $0$)  &  .761 &  .150 &  .103& .573 \\
\end{tabular}
\end{center}
\end{table}
All procedures except {\psn} led to high type~I errors, and are thus considered unsatisfactory given the more severe consequences of missing a diseased instance than vice versa. 

One existing solution to asymmetric error control is \emph{cost-sensitive learning}, which assigns two different costs as weights of the type~I/II errors \citep{Elkan01,ZadLanAbe03}. 
Despite many merits and practical values of this framework, limitations arise in applications when there is no consensus over how much costs to be assigned to each class, or more fundamentally, whether it is morally acceptable to assign costs in the first place. 
Also, when users have a specific target for type~I/II error control, cost-sensitive learning does not fit. Other methods aiming for small type~I error include the Asymmetric Support Vector Machine \citep{WuLinChenChen.2008}, and the $p$-value for classification \citep{DumbgenIglMunk.2005}. 
However,  the former has no theoretical guarantee on errors,  while the latter treats all classes as of equal importance.     %
\subsection{Neyman-Pearson (NP) paradigm and NP oracle inequalities}
Neyman-Pearson (NP) paradigm was introduced as a novel statistical framework for targeted type~I/II error control.
Assume type~I error $R_0$ as the prioritized error type, 
this paradigm seeks to control $R_0$ under a user specified level $\alpha$ with $R_1$ as small  as possible.
The \emph{oracle} is thus
\begin{equation}
\label{eq::goal}
\phi^* \,\in\, \argmin_{R_0(\phi)\leq\alpha}R_1(\phi)\,,
\end{equation}
where the \emph{significance level} $\alpha$ reflects the level of conservativeness towards type I error.
Given $\phi^*$ is unattainable in the learning paradigm, the best within our capability is to construct a data dependent classifier $\hat{\phi}$ that mimics it.

Despite its practical importance, NP classification has not received much attention in the statistics and machine learning communities.   \citet{CanHowHus02} initiated the theoretical treatment of NP classification.  
Under the same framework, \citet{Sco05} and \citet{ScoNow05} derived several results for traditional statistical learning such as PAC bounds or oracle inequalities.  By combining type I and type II errors in sensible ways, \citet{Sco07} proposed a performance measure for NP classification. More recently,  \citet{BlaLeeSco10} developed a general solution to semi-supervised novelty detection by reducing it to NP classification.  
Other related works include \citet{CasChe03} and  \citet{HanCheSun08}.
A common issue with methods in this line of literature is that they all follow an empirical risk minimization (ERM) approach, and use some forms of relaxed empirical type I error constraint in the optimization program.
As a result, all type I errors can only be proven to satisfy some relaxed upper bound.
Take the framework set up by \citet{CanHowHus02} for example. Given $\varepsilon_0>0$, they proposed the program
$$
\min_{\phi\in\mathcal{H},\hat R_0(\phi)\leq \alpha + \varepsilon_0/2}\hat R_1(\phi)\,,
$$
where $\mathcal{H}$ is a set of classifiers with finite Vapnik-Chervonenkis  dimension, and $\hat R_0$, $\hat R_1$ are the empirical type I and type II errors respectively. It is shown that  with high probability, the solution $\hat{\phi}$ to the above program satisfies simultaneously: i) the type I error $R_0(\hat{\phi})$ is bounded from above by $\alpha + \varepsilon_0$, and ii) the type II error $R_1(\hat{\phi})$ is bounded from above by $R_1(\phi^*)+\varepsilon_1$ for some $\varepsilon_1>0$.

\citet{RigTon11} is a significant departure from the previous NP classification literature. 
This paper argues that a good classifier $\hat{\phi}$ under the NP paradigm  should respect the chosen significance level $\alpha$, rather than some relaxation of it. 
More precisely,  two \textbf{NP oracle inequalities} should be satisfied simultaneously with high probability:
\begin{itemize}
\item[(I)] the type~I error constraint is respected, i.e., $R_0(\hat{\phi})\leq\alpha$.
\item[(II)] the excess type~II error $R_1(\hat{\phi}) - R_1({\phi}^*)$ diminishes with explicit rates (w.r.t. sample size).
\end{itemize}
Recall that, for a classifier $\hat h$, the classical oracle inequality insists that with high probability 
\begin{equation}
\label{classical}
\text{the excess risk $R(\hat h)- R(h^*)$ diminishes with explicit rates,}
\end{equation} 
where $h^*(x)=\1(\eta(x)\geq 1/2)$ is the Bayes classifier, in which $\eta(x)=\mathbb{E}[Y|X=x]=\p(Y=1|X=x)$ is the regression function of $Y$ on $X$ (see \citet{Koltchinskii.2008} and references within). 
The two NP oracle inequalities defined above can be thought of as a generalization of \eqref{classical} that provides a novel characterization of classifiers' theoretical performances under the NP paradigm. 

Using a more stringent empirical type I error constraint (than the level $\alpha$),  \citet{RigTon11} established NP oracle inequalities for its proposed classifiers under convex loss functions (as opposed to the indicator loss).  They also proved an interesting negative result: under the binary loss, ERM approaches (convexification or not) cannot guarantee diminishing excess type~II error as long as one insists type~I error of the proposed classifier be bounded from above by $\alpha$ with high probability. 
This negative result motivated  a plug-in approach to NP classification in \citet{Tong.2013}.

%

\subsection{Plug-in approaches}
Plug-in methods in classical binary classification have been well studied in the literature, where the usual plug-in target is the Bayes classifier $\1(\eta(x)\geq 1/2)$. Earlier works gave rise to pessimism of the plug-in approach to classification.  For example, under certain assumptions, \citet{Yang99} showed plug-in estimators cannot achieve excess risk with rates faster than $O(1/\sqrt{n})$, while direct methods can achieve rates up to $O(1/n)$ under \textit{margin assumption} \citep{MamTsy99, Tsy04, TsyGee05,TarGee06}.   However, it was shown in  \citet{Audibert05fastlearning} that plug-in classifiers $\1(\hat \eta_n \geq 1/2)$ based on local polynomial estimators can achieve rates faster than $O(1/n)$, with a smoothness condition on $\eta$ and  the margin assumption.

The oracle classifier under the NP paradigm arises from its close connection to the Neyman-Pearson Lemma in statistical hypothesis testing. 
Hypothesis testing bears strong resemblance to binary classification if we assume the following  model. Let $P_1$ and $P_0$ be two \textit{known} probability distributions on $\cX \subset \R^d$. Assume that $Y\sim \text{Bern}(\zeta)$ for some $\zeta \in (0,1)$, and the conditional distribution of $X$ given $Y$ is $P_Y$.
Given such a model, the goal of statistical hypothesis testing is to determine if we should reject the null hypothesis that $X$ was generated from $P_0$. 
To this end, we construct a randomized test $\phi:\cX \to [0,1]$ that rejects the null with probability $\phi(X)$.  
Two types of errors arise: type~I error occurs when $P_0$ is rejected yet $X\sim P_0$, and type~II error occurs when $P_0$ is not rejected yet $X\sim P_1$. 
The Neyman-Pearson paradigm in hypothesis testing amounts to choosing $\phi$ that solves the following constrained optimization problem
$$
\text{maximize } \E[\phi(X)|Y=1]\,,
\text{ subject to }  \E[\phi(X)|Y= 0 ]\leq\alpha\,,
$$
where $\alpha \in (0,1)$ is the significance level of the test. A solution to this constrained optimization problem is called  \emph{a most powerful test} of level $\alpha$. The Neyman-Pearson Lemma gives mild sufficient conditions for the existence of such a test.

\begin{lem}[Neyman-Pearson Lemma]\label{lemma:NP}
Let $P_1$ and $P_0$ be two probability measures with densities $p
$ and $q$ respectively, and denote the density ratio as $r(x)=p(x)/q(x)$.
For a given significance level $\alpha$, let $C_{\alpha}$ be such that
$P_0\{r(X)>C_{\alpha}\}\leq\alpha$ and $P_0\{r(X)\geq C_{\alpha}\}\geq\alpha$.  
Then,
the most powerful test of level $\alpha$ is
\begin{equation*}
\phi^*(X)=\left\{
 \begin{array}{ll}
     1 & \text{if $\,\,r(X)>C_{\alpha}$}\,,\\
     0 & \text{if $\,\,r(X)<C_{\alpha}$}\,,\\
     \frac{\alpha-P_0\{r(X)>C_{\alpha}\}}{P_0\{r(X)=C_{\alpha}\}} & \text{if $\,\,r(X)=C_{\alpha}$}\,.
   \end{array}      \right.
\end{equation*}
\end{lem}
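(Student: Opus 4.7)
The plan is to follow the classical pointwise-comparison proof of the Neyman-Pearson Lemma. First I would verify that $\phi^*$ is a test of size exactly $\alpha$ under the null, and then show that for any competing test $\phi$ with $\E[\phi(X)\mid Y=0]\leq\alpha$, the difference $\E[\phi^*(X)\mid Y=1]-\E[\phi(X)\mid Y=1]$ is nonnegative via a single integrand that is pointwise nonnegative on $\cX$.

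For the size computation, I would decompose $\E[\phi^*(X)\mid Y=0]$ according to the three regions that define $\phi^*$. The region $\{r(X)>C_\alpha\}$ contributes $P_0\{r(X)>C_\alpha\}$, the region $\{r(X)<C_\alpha\}$ contributes zero, and on $\{r(X)=C_\alpha\}$ the randomization weight has been engineered exactly so that its contribution equals $\alpha-P_0\{r(X)>C_\alpha\}$. Summing these three terms gives $\alpha$. The two defining conditions $P_0\{r(X)>C_\alpha\}\leq\alpha$ and $P_0\{r(X)\geq C_\alpha\}\geq\alpha$ ensure that the randomization weight lies in $[0,1]$, so $\phi^*$ is a well-defined randomized test.

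For the power comparison, let $\phi$ be any test with $\E[\phi(X)\mid Y=0]\leq\alpha$ and consider the integrand
\[
\Delta(x)=\bigl(\phi^*(x)-\phi(x)\bigr)\bigl(p(x)-C_\alpha q(x)\bigr).
\]
On $\{r>C_\alpha\}$ we have $p-C_\alpha q>0$ and $\phi^*=1\geq\phi$; on $\{r<C_\alpha\}$ we have $p-C_\alpha q<0$ and $\phi^*=0\leq\phi$; on $\{r=C_\alpha\}$ the second factor vanishes. Thus $\Delta(x)\geq 0$ pointwise, and integrating against the dominating measure yields
\[
\E[\phi^*(X)\mid Y=1]-\E[\phi(X)\mid Y=1]\;\geq\; C_\alpha\bigl(\E[\phi^*(X)\mid Y=0]-\E[\phi(X)\mid Y=0]\bigr).
\]
Using the first step, $\E[\phi^*(X)\mid Y=0]=\alpha\geq\E[\phi(X)\mid Y=0]$, so the right-hand side is nonnegative and optimality follows.

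The only delicate point I expect to wrestle with is the boundary set $\{r(X)=C_\alpha\}$. When $P_0\{r(X)=C_\alpha\}=0$ the randomization weight in the statement is a $0/0$ expression, but in that degenerate case the two defining inequalities force $P_0\{r(X)>C_\alpha\}=\alpha$, so the randomized branch has null mass and $\phi^*$ reduces to the pure likelihood-ratio test, already of level $\alpha$. With that caveat dispatched, the pointwise-nonnegativity argument on $\Delta(x)$ handles every remaining situation uniformly and the proof is essentially a two-line calculation.
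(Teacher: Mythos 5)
Your proof is correct: it is the standard pointwise-comparison argument, with the size calculation showing $\E[\phi^*(X)\mid Y=0]=\alpha$ and the nonnegativity of $(\phi^*-\phi)(p-C_\alpha q)$ giving optimality, and your handling of the degenerate boundary case $P_0\{r(X)=C_\alpha\}=0$ is the right observation (one can add that this set is then also $P_1$-null, since $P_1\{r=C_\alpha\}=C_\alpha P_0\{r=C_\alpha\}$, so the randomized branch is immaterial under both measures). Note that the paper itself states this lemma as classical background and gives no proof, so there is nothing to compare against; your argument is exactly the textbook proof one would cite, with the only implicit step being $C_\alpha\geq 0$ (which holds since $r\geq 0$) in the final inequality.
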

Under mild continuity assumption, we take the NP \emph{oracle} 
\begin{align}\label{eq::oracle}
\phi^*(x) \,=\, \phi^*_{\alpha}(x) \,=\, \1\{p(x)/q(x)\geq C_{\alpha}\} \,=\, \1\{r(x)\geq C_{\alpha}\}\,.
\end{align}
as our plug-in target for NP classification.
With kernel density estimates $\hat p$, $\hat q$, and a proper estimate of the threshold level $\widehat C_{\alpha}$, \cite{Tong.2013} constructed a plug-in classifier $\1\{\hat p(x)/\hat q(x)\geq \widehat C_{\alpha}\}$ that satisfies both NP oracle inequalities with high probability when the dimensionality is small, leaving the high-dimensional case an unchartered territory.

\subsection{Contribution}
In the big data era, 
NP classification framework faces the same curse of dimensionality as its classical counterpart. 
Despite its wide potential applications, this paper is the \emph{first attempt} to construct performance-guaranteed classifiers under the NP paradigm in high-dimensional settings. 
Based on the Neyman-Pearson Lemma, we employ Naive Bayes models and propose a computationally feasible plug-in approach to construct classifiers that satisfy the NP oracle inequalities. 
We also improve the \textit{detection condition}, a critical theoretical assumption first introduced in \cite{Tong.2013}, for effective threshold level estimation that grounds the good NP properties of these classifiers. Necessity of the new detection condition is also discussed.   
Note that classifiers proposed in this work are not  straightforward extensions of \cite{Tong.2013}: 
kernel density estimation is now applied in combination with feature selection,  and the threshold level is estimated in a more precise way by order statistics that require only moderate sample size --- while \citet{Tong.2013} resorted to the Vapnik-Chervonenkis theory and required sample size much bigger than what is available in most high-dimensional applications.


The rest of the paper is organized as follows. Two screening based plug-in NP-type classifiers are presented in Section \ref{sec::methods}, where theoretical properties are also discussed. Performance of the proposed classifiers is demonstrated in Section \ref{sec::numeric} by both simulation studies and real data analysis. 
We conclude in Section \ref{sec::discussion} with a short discussion. 
The technical proofs are relegated to the Appendix.

\section{Methods\label{sec::methods}}
In this section, we first introduce several notations and definitions, with a focus on the \textit{detection condition}.  Then we present the plug-in procedure, together with its theoretical properties.  

\subsection{Notations and definitions\label{sec::notations-definitions}}
We introduce here several notations adapted from \citet{Audibert05fastlearning}.
For $\beta>0$,  denote by $\lfloor\beta\rfloor$ the largest integer strictly less than $\beta$.  For any $x, x'\in \mathbb{R}$ and any $\lfloor\beta\rfloor$ times continuously differentiable real-valued function $g(\cdot)$ on $\mathbb{R}$, we denote by $g_x$ its Taylor polynomial of degree $\lfloor\beta\rfloor$ at point $x$.
For $L>0$,  the $(\beta, L, [-1,1])$-H\"{o}lder class of functions, denoted by $\Sigma(\beta, L, [-1,1])$, is the set of functions $g:[-1,1]\rightarrow \mathbb{R}$ that are $\lfloor\beta\rfloor$ times continuously differentiable and satisfy, for any $x,x'\in[-1,1]$, the inequality $|g(x')-g_x(x')|\leq L |x-x'|^{\beta}.$
The $(\beta, L, [-1,1])$-H\"{o}lder class of density is defined as
$$
\mathcal{P}_{\Sigma}(\beta, L, [-1,1])=\left\{f\,:\,f\geq0, \int f = 1, f\in\Sigma(\beta, L, [-1,1])\right\}\,.
$$

We will use $\beta$-valid kernels (kernels of order $\beta$, \citet{Tsy09}) for all the kernel estimation throughout the theoretical discussion, the definition of which is as follows.
\begin{defin}\label{DEF:BETA kernel}
Let $K(\cdot)$ be a real-valued function on $\mathbb{R}$ with support $[-1,1]$.  The function $K(\cdot)$ is a $\beta$-valid kernel if it satisfies $\int K = 1$, $\int |K|^v <\infty$ for any $v \geq 1$, $\int |t|^{\beta}|K(t)|dt<\infty$, and in the case $\lfloor\beta\rfloor\geq 1$, it satisfies $\int t^l K(t)dt = 0$ for any $l\in\mathbb{N}$ such that $1\leq l\leq \lfloor\beta\rfloor$.
\end{defin}
We assume that all the $\beta$-valid kernels considered in the theoretical part of this paper are constructed from Legendre polynomials, and are thus Lipschitz and bounded, satisfying the kernel conditions for the important technical Lemma \ref{lemma::A1_1-dim}.
\begin{defin}[margin assumption]\label{Def: margin}
A function $f(\cdot)$ is said to satisfy margin assumption of order $\bar\gamma$ with respect to probability distribution $P$ at the level $C^{*}$ if there exists a positive constant $M_0$, such that for any $\delta\geq0$,
$$
P\{|f(X)-C^*|\leq \delta\} \,\leq\, M_0\delta^{\bar\gamma}\,.
$$
\end{defin}
This assumption  was first introduced in \citet{Polonik95}. In the classical binary classification framework,  \cite {MamTsy99} proposed a similar condition named ``margin condition"
 by requiring  most data to be away from the optimal decision boundary. 
In the classical classification paradigm, definition \ref{Def: margin} reduces to the ``margin condition" by taking $f = \eta$ and $C^*=1/2$, with $\{x: |f(x)-C^*| = 0\} = \{x: \eta(x)=1/2\}$ giving the decision boundary of the Bayes classifier.  
On the other hand,  unlike the classical paradigm where the optimal threshold level is known and does not need an estimate, the optimal threshold level $C_{\alpha}$ in the NP paradigm is unknown and needs to be estimated, suggesting  the necessity of having sufficient data around the decision boundary to detect it well.  This concern motivated the following condition improved from \citet{Tong.2013}.

\begin{defin}[detection condition] 
\label{def::detection}
A function $f(\cdot)$ is said to satisfy detection condition  of order $\uderbar \gamma$ with respect to $P$ (i.e., $X\sim P$) at level $(C^*,\delta^*)$ 
if there exists a positive constant $M_1$, such that for any $\delta \in (0,\delta^*)$,
$$P\{C^{*} \leq f(X) \leq C^{*} + \delta \} \,\geq\, M_1 \delta^{\uderbar\gamma}\,.$$
\end{defin}

A detection condition works as an opposite force to the margin assumption, and is basically an assumption on the lower bound of probability. 
Though we take here a power function as the lower bound, so that it is simple and aesthetically similar to the margin assumption, any increasing $u(\cdot)$ on $R^+$ with $\lim_{x\rightarrow 0+} g(x) = 0$ should be able to serve the purpose.  
The version of detection condition we would use to establish the NP inequalities for the (to be) proposed classifiers takes $f = r$, $C^* = C_{\alpha}$, and $P = P_0$ (recall that $P_0$ is the conditional distribution of $X$ given $Y=0$).  

Now we argue why such a condition is \textit{necessary} to achieve the NP oracle inequalities.   
Consider the simpler case where the density ratio $r$ is known, and we only need a proper estimate of the threshold level $\widehat{C}_\alpha$. If there is nothing like the detection condition (Definition \ref{def::detection} involves a power function, but the idea is just to have any kind of lower bound), we would have, for some $\delta>0$, 
\begin{align}\label{eq::detection-not-hold}
P_0\{ C_{\alpha} \leq r(X) \leq C_{\alpha}+\delta\} \,=\, 0\,.
\end{align}  
In getting the threshold estimate $\widehat{C}_{\alpha}$ of $\hat \phi (x) = \1\{r(x)\geq \widehat{C}_{\alpha}\}$, we can not distinguish any threshold level between $C_{\alpha}$ and $C_{\alpha}+\delta$. In particular, it is possible that 
$$
\widehat{C}_{\alpha} > C_{\alpha} + \delta/2\,.
$$
But then the excess type II error is bounded from below as follows

$$ 
R_1(\hat \phi) - R_1(\phi^*) =  P_1\{C_{\alpha} < r(X) < \widehat{C}_{\alpha}\} > P_1\{C_{\alpha} < r(X) < C_{\alpha}+\delta/2\}\,,
$$
where the last quantity can be positive. Therefore, the second NP oracle inequality (diminishing excess type II error) does not hold for $\hat \phi$. 
Since some detection condition is necessary in this simpler case, it is certainly necessary in our real setup. 

Note that Definition \ref{def::detection} is a significant improvement of the detection condition formulated in \cite{Tong.2013}, which requires 
$$P\{C^{*}-\delta  \leq f(X) \leq C^{*} \} \wedge P\{C^{*} \leq f(X) \leq C^{*} + \delta \} \,\geq\, M_1 \delta^{\uderbar\gamma}\,.$$
We are able to drop the lower bound for the first piece due to an improved layout of the proofs. Intuitively, our new detection condition ensures an upper bound on $\widehat C_{\alpha}$. But we do not need an extra condition to get a lower bound of $\widehat C_{\alpha}$, because of the type I error bound requirement (see the proof of Proposition \ref{prop::R1} for details).  
 

\subsection{Neyman-Pearson plug-in procedure}
Suppose the sampling scheme is fixed as follows. 
\begin{assumption}\label{assumption::independence-split}
Assume the training sample contains 
$n$ i.i.d. observations $\mathcal{S}^1=\{U_1,\cdots,U_n\}$ from class 1 with density $p$, and 
 $m$ i.i.d. observations $\mathcal{S}^0=\{V_1,\cdots,V_m\}$  from class 0 with density $q$.
Given fixed $n_1$, $n_2$, $m_1$, $m_2$ and $m_3$ such that $n_1+n_2 = n$, $m_1+m_2+m_3=m$,
we further decompose $\mathcal{S}^1$ and $\mathcal{S}^0$ into independent subsamples as:
$\mathcal{S}^1 = \mathcal{S}^1_1 \cup \mathcal{S}^1_2$, and  $\mathcal{S}^0 = \mathcal{S}^0_1 \cup \mathcal{S}^0_2 \cup \mathcal{S}^0_3$, where $|\mathcal{S}^1_1| = n_1$, $|\mathcal{S}^1_2| = n_2$, $|\mathcal{S}^0_1| = m_1$, $|\mathcal{S}^0_2| = m_2$, $|\mathcal{S}^0_3| = m_3$.  
\end{assumption}
The sample splitting idea has been considered in the literature, such as in \citet{Meinshausen.Buhlmann.2010} and \citet{robins2006adaptive}. Given these samples, we introduce the following plug-in procedure.
\begin{defin}\label{pro::np-plug-in}
 {\bf Neyman-Pearson plug-in procedure}
\begin{itemize}
\item[\underline{Step 1}] Use $\mathcal{S}^1_1$, $\mathcal{S}^1_2$, $\mathcal{S}^0_1$, and $\mathcal{S}^0_2$ to construct a density ratio estimate $\hat r$. The specific use of each subsample will be introduced in Section \ref{sec::density-ratio-estimate}.
\item[\underline{Step 2}] Given $\hat\Lr,$ choose a threshold estimate $\Ca$ from the set 
${\hat \Lr}(\mathcal{S}^0_3) = \{\hat\Lr(V_{i+m_1+m_2}) \}_{i=1}^{m_3}$.
\end{itemize}
\end{defin}

Denote by ${\hat \Lr}_{(k)}(\mathcal{S}^0_3)$ the $k$-th order statistic of ${\hat \Lr}(\mathcal{S}^0_3)$, $k \in \{1,\cdots,m_3\}$. 
The corresponding plug-in classifier by setting $\Ca={\hat \Lr}_{(k)}(\mathcal{S}^0_3)$ is
\begin{eqnarray}\label{eq:threshold-generic}
\label{eq::psiK}
\hat{\phi}_k (x) =  \1\{ \hat \Lr(x) \geq {\hat \Lr}_{(k)}(\mathcal{S}^0_3)\}\,.
\end{eqnarray}
\noindent A generic procedure for choosing the optimal $k$ will be given in Section \ref{sec::threshold-estimate}.  

\subsection{Threshold estimate $\widehat C_{\alpha}$\label{sec::threshold-estimate}}
For any arbitrary density ratio estimate $\hat r$, 
we employ a proper order statistic ${\hat \Lr}_{(k)}(\mathcal{S}^0_3)$ to estimate the threshold $C_{\alpha}$, and establish a probabilistic upper bound for the type I error of $\hat{\phi}_k$ for each $k \in \{1,\cdots,m_3\}$.   

\begin{prop}
\label{prop::general delta}
For any arbitrary density ratio estimate $\hat r$, 
let $\hat \phi_k(x)=\1\{ \hat \Lr(x) \geq {\hat \Lr}_{(k)}(\mathcal{S}^0_3)\}$. 
It holds for any $\delta \in (0, 1)$  and $k\in \{1,\cdots, m_3\}$ that 
\begin{eqnarray}
\label{eq::Bin-bound}
\p\{ R_0(\hat{\phi} _k) > \delta\}
 \,\leq\, \text{Beta.cdf}_{k, \,m_3+1-k}\left( 1-\delta \right)\,,
\end{eqnarray}
where $\text{Beta.cdf}_{k, \,m_3+1-k}(\cdot)$ is the \textsc{cdf} of Beta$(k, \,m_3+1-k)$.
The inequality becomes equality when $\Fr(t)=P_0\{ \hat \Lr(X) \leq t \}$ is continuous almost surely.
\end{prop}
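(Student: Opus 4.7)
The plan is to condition on $\hat r$ and reduce the claim to a classical calculation about order statistics of an i.i.d.\ sample. Under Assumption \ref{assumption::independence-split}, the subsample $\mathcal{S}^0_3$ is independent of the data $\mathcal{S}^1 \cup \mathcal{S}^0_1 \cup \mathcal{S}^0_2$ used to construct $\hat r$, so conditional on $\hat r$ the values $T_i := \hat r(V_{i+m_1+m_2})$, $i = 1, \ldots, m_3$, are i.i.d.\ with common CDF $F_{0,\hat r}(t) = P_0\{\hat r(X) \leq t\}$, and they are independent of a fresh $X \sim P_0$ used to evaluate $R_0(\hat\phi_k)$. Writing $F_{0,\hat r}^-(t) := P_0\{\hat r(X) < t\}$ for the left limit, the conditional type I error takes the closed form $R_0(\hat\phi_k) = 1 - F_{0,\hat r}^-(T_{(k)})$, so the event $\{R_0(\hat\phi_k) > \delta\}$ coincides with $\{F_{0,\hat r}^-(T_{(k)}) < 1-\delta\}$.

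Assume first that $F_{0,\hat r}$ is continuous almost surely, so $F_{0,\hat r}^- = F_{0,\hat r}$. The probability integral transform then gives $F_{0,\hat r}(T_i) \sim U(0,1)$ i.i.d.\ conditional on $\hat r$, and monotonicity of $F_{0,\hat r}$ implies that $F_{0,\hat r}(T_{(k)})$ is exactly the $k$-th order statistic of $m_3$ i.i.d.\ Uniform$(0,1)$ variables, hence $F_{0,\hat r}(T_{(k)}) \sim \text{Beta}(k, m_3+1-k)$. Consequently
\begin{equation*}
\p\{ R_0(\hat\phi_k) > \delta \mid \hat r \} \,=\, \p\{ F_{0,\hat r}(T_{(k)}) < 1 - \delta \mid \hat r \} \,=\, \text{Beta.cdf}_{k,\,m_3+1-k}(1-\delta).
\end{equation*}
Because the right-hand side is a deterministic function of $k$, $m_3$, and $\delta$, taking the expectation over $\hat r$ yields the advertised equality.

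For the general case where $F_{0,\hat r}$ may have atoms, I would invoke the Beta--Binomial identity $\text{Beta.cdf}_{k,m_3+1-k}(u) = \sum_{j=k}^{m_3}\binom{m_3}{j} u^j (1-u)^{m_3-j}$ together with the Binomial representation $\p(T_{(k)} < t \mid \hat r) = \text{Beta.cdf}_{k,m_3+1-k}(F_{0,\hat r}^-(t))$, and couple via the quantile map $T_i = F_{0,\hat r}^{-1}(U_i)$ with $U_i$ i.i.d.\ Uniform$(0,1)$, which yields $F_{0,\hat r}(T_{(k)}) \geq U_{(k)}$ almost surely. The main obstacle will be translating $\{F_{0,\hat r}^-(T_{(k)}) < 1-\delta\}$ into an event of the form $\{T_{(k)} < t\}$ whose probability is dominated by the Beta CDF: one must pick $t$ right at the boundary where $F_{0,\hat r}^-(t) \leq 1-\delta$ and exploit left- versus right-continuity of $F_{0,\hat r}$ to avoid losing the atom mass at $T_{(k)}$. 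Continuity of $F_{0,\hat r}$ collapses these distinctions and recovers equality, matching the final clause of the statement.
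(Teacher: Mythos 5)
Your continuous-case argument is correct, and it is essentially the paper's mechanism: condition on $\hat r$, use the independence of $\mathcal{S}^0_3$ from the data that built $\hat r$, and reduce to the fact that $F_{0,\hat r}(T_{(k)})$ is the $k$-th order statistic of $m_3$ i.i.d.\ uniforms, hence $\mathrm{Beta}(k,m_3+1-k)$; the paper packages these same facts as Lemma \ref{lem::betaBinomial} (via R\'enyi's representation) and Lemma \ref{lem::new}. So the equality clause under almost sure continuity of $F_{0,\hat r}$ is fully proved in your write-up.

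The gap is the general inequality for arbitrary $\hat r$, which you only sketch, and the obstacle you flag there is not a technicality — it is where your setup breaks. Having (faithfully to the $\geq$ in $\hat\phi_k$) written $R_0(\hat\phi_k)=1-F_{0,\hat r}^-(T_{(k)})$ with the left limit $F_{0,\hat r}^-$, the quantile coupling $T_i=F_{0,\hat r}^{-1}(U_i)$ gives $F_{0,\hat r}^-(T_{(k)})\leq U_{(k)}$ (while $F_{0,\hat r}(T_{(k)})\geq U_{(k)}$), so $\{F_{0,\hat r}^-(T_{(k)})<1-\delta\}\supseteq\{U_{(k)}<1-\delta\}$: the coupling bounds $\p\{R_0(\hat\phi_k)>\delta\}$ from \emph{below} by the Beta cdf, not from above. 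No choice of the boundary point $t$ rescues this, because when $F_{0,\hat r}$ has an atom at the realized threshold the left-limit version genuinely misbehaves; e.g.\ for a degenerate estimate $\hat r\equiv c$ one has $T_{(k)}=c$, $R_0(\hat\phi_k)=P_0\{\hat r(X)\geq c\}=1$, so the left-hand side of \eqref{eq::Bin-bound} equals $1$ while the right-hand side is strictly less than $1$. The paper closes the general case by bounding the strict tail $P_0\{\hat r(X)>T_{(k)}\}=1-F_{0,\hat r}(T_{(k)})$ instead (Lemma \ref{lem::new}): the event $\{F_{0,\hat r}(T_{(k)})<1-\delta\}$ is exactly $\{\#\{i:F_{0,\hat r}(T_i)<1-\delta\}\geq k\}$, a Binomial$(m_3,\tau)$ tail with success probability $\tau=\p\{F_{0,\hat r}(T_1)<1-\delta\}\leq 1-\delta$ by Lemma \ref{lem::calculus}, and monotonicity of the binomial tail in $\tau$ plus the Beta--binomial duality gives \eqref{eq::Bin-bound}; the discrepancy between the $>$-tail and your $\geq$-tail (equivalently between $F_{0,\hat r}$ and $F_{0,\hat r}^-$) is precisely what the almost-sure continuity of $F_{0,\hat r}$ removes, and that continuity is indeed assumed wherever the proposition is invoked (Lemma \ref{prop::R0}, Theorem \ref{theorem::2}). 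To complete your proof you should therefore abandon the left-limit formulation for the inequality and run the counting/binomial argument on $1-F_{0,\hat r}(T_{(k)})$ as above, rather than trying to push the quantile coupling through the atom.
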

In view of the above proposition, 
a sufficient condition for the classifier $\hat{\phi}_k$ to satisfy NP Oracle Inequality (I) at tolerance level $\delta_3 \in (0,1)$ is thus
\begin{eqnarray}
\label{cond::betaCDF}
 \text{Beta.cdf}_{k, \,m_3+1-k}\left( 1-\alpha \right) \,\leq\, \delta_3\,.
\end{eqnarray}
Despite the potential tightness of \eqref{eq::Bin-bound}, we are not able to derive an explicit formula for the minimum $k$ that satisfies \eqref{cond::betaCDF}.
To get an explicit choice for $k$, we resort to concentration inequalities for an alternative.  

\begin{prop}
\label{prop::general k}
For any arbitrary density ratio estimate $\hat r$, 
let $\hat \phi_k(x)=\1\{ \hat \Lr(x) \geq {\hat \Lr}_{(k)}(\mathcal{S}^0_3)\}$. 
It holds for any $\delta_3 \in (0, 1)$  and $k\in \{1,\cdots, m_3\}$  that  
\begin{eqnarray}
\label{eq::Chebyshev-bound}
\p\{ R_0(\hat{\phi} _k) > g(\delta_3, m_3, k) \}
\,\leq \, \delta_3\,,
\end{eqnarray}
where
\begin{equation}
\label{eq::g}
g(\delta_3, m_3, k) = 
\frac{ m_3+1-k}{m_3+1} +  
\sqrt{\frac{k(m_3+1-k)}{\delta_3(m_3+2)(m_3+1)^2}}\,.
\end{equation}
\end{prop}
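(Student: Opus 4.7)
The plan is to piggyback on the tight Beta-CDF bound from Proposition \ref{prop::general delta} and then convert it, via Chebyshev's inequality, into the explicit formula $g(\delta_3,m_3,k)$. The first observation is that Proposition \ref{prop::general delta} is exactly a statement of stochastic dominance: writing $B\sim \text{Beta}(k,m_3+1-k)$ and $B':=1-B\sim\text{Beta}(m_3+1-k,k)$, the inequality
\[
\p\{R_0(\hat\phi_k)>\delta\}\;\leq\;\text{Beta.cdf}_{k,m_3+1-k}(1-\delta)
\;=\;\p(B\leq 1-\delta)\;=\;\p(B'\geq\delta)
\]
says that $R_0(\hat\phi_k)$ is stochastically dominated by $B'$. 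Hence, to control $\p\{R_0(\hat\phi_k)>t\}$ for any threshold $t$, it suffices to control $\p(B'>t)$.

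Next I would write down the moments of $B'$: with $a=m_3+1-k$ and $b=k$,
\[
\mu := \E B' = \frac{a}{a+b} = \frac{m_3+1-k}{m_3+1},
\qquad
\sigma^2 := \var(B') = \frac{ab}{(a+b)^2(a+b+1)} = \frac{k(m_3+1-k)}{(m_3+1)^2(m_3+2)}.
\]
Applying the one-sided form of Chebyshev's inequality $\p(B'-\mu>\lambda)\leq \sigma^2/\lambda^2$ with the choice $\lambda=\sigma/\sqrt{\delta_3}$ gives $\p(B'>\mu+\sigma/\sqrt{\delta_3})\leq \delta_3$. The right-hand side $\mu+\sigma/\sqrt{\delta_3}$ is precisely $g(\delta_3,m_3,k)$ as defined in \eqref{eq::g}. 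Combining this with the stochastic dominance step yields $\p\{R_0(\hat\phi_k)>g(\delta_3,m_3,k)\}\leq\delta_3$, which is the claim.

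I do not expect any genuine obstacle here: the heavy lifting (handling whether $F_{0,\hat r}$ is continuous, the coupling with uniform order statistics, and independence of $\mathcal{S}^0_3$ from $\hat r$) is already encapsulated in Proposition \ref{prop::general delta}, so the present statement is essentially a bookkeeping step --- identify the stochastic dominator as Beta$(m_3+1-k,k)$, compute its mean and variance, and apply Chebyshev with the deviation tuned to $\delta_3$. The only point worth a line of care is matching the one-sided Chebyshev bound with the form of $g$: since the standard two-sided Chebyshev $\p(|B'-\mu|\geq \lambda)\leq \sigma^2/\lambda^2$ already suffices (the left tail is harmless), either version leads to the stated $g$.
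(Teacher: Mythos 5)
Your proposal is correct and follows essentially the same route as the paper: the paper also reduces the claim via Proposition~\ref{prop::general delta} to a tail bound for a $\text{Beta}(m_3+1-k,\,k)$ random variable and then applies a Chebyshev-type bound (its Lemma~\ref{lem::Beta}) with the relative deviation $\epsilon = \sqrt{k}\{\delta_3(m_3+2)(m_3+1-k)\}^{-1/2}$, which is exactly your choice $\lambda=\sigma/\sqrt{\delta_3}$ written multiplicatively. The only cosmetic remark is that what you invoke is the two-sided Chebyshev inequality restricted to the upper tail (the genuine one-sided Cantelli bound is sharper), but as you note yourself this suffices and matches the paper's computation.
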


\medskip
Let
$
\mathcal{K} = \mathcal{K}(\alpha,\delta_3,m_3) 
= \{k \in \{1,\cdots,m_3\}:
g(\delta_3, m_3, k) \leq \alpha \}
$. 
Proposition  \ref{prop::general k} implies that  $k\in \mathcal{K}(\alpha,\delta_3,m_3) $ is a sufficient condition for the classifier $\hat{\phi}_k$ to satisfy NP Oracle Inequality (I). The next step is to characterize $\mathcal{K}$  and  choose some $k\in\mathcal{K}$, so that $\hat \phi_k$  has small excess type II error.  Clearly, we would like to find the smallest element in $\mathcal{K}$.

\begin{prop}
\label{prop::kmin}
The minimum $k \in \{1,\cdots,m_3+1\}$ that satisfies $g(\delta_3,m_3,k) \leq \alpha$ is 
\begin{equation}
\label{eq::kmin}
k_{\min}(\alpha,\delta_3,m_3) \,=\, \left\lceil (m_3+1)A_{\alpha,\delta_3}(m_3) \right\rceil,
\end{equation}
where $\lceil z\rceil$ denotes the smallest integer larger than or equal to $z$, and  
\begin{equation*}
\label{eq::A}
A_{\alpha,\delta_3}(m_3) = \frac{1+2\delta_3(m_3+2)(1-\alpha) + \sqrt{1+4\delta_3(1-\alpha)\alpha(m_3+2)}}
{2\left\{\delta_3(m_3+2)+1 \right\}}\,.
\end{equation*}
Moreover,
\begin{enumerate}
\item $A_{\alpha,\delta_3}(m_3) \in (1-\alpha,1)$.
\item ${\hat  r}_{(k_{\min}(\alpha, \delta_3, m_3))}(\mathcal{S}_3^0)$ is asymptotically the empirical $(1-\alpha)$-th quantile of  $\Fr$ in the sense that  
\begin{equation*}
\label{eq::limit}
\lim_{m_3\to\infty}\frac{k_{\min}(\alpha,\delta_3,m_3)}{m_3}\,=\, \lim_{m_3\to\infty} A_{\alpha,\delta_3}(m_3)\,=\,1 - \alpha\,.
\end{equation*}

\item 
For any $m_3 \geq 4/(\alpha\delta_3)$, we have
$k_{\min}(\alpha,\delta_3,m_3) \leq m_3$, and thus
$$\mathcal{K}(\alpha,\delta_3,m_3) \,=\, \left\{ k_{\min}(\alpha,\delta_3,m_3),k_{\min}(\alpha,\delta_3,m_3)+1,\ldots,m_3\right\}.$$ 
\end{enumerate}
\end{prop}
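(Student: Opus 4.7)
The core task is to invert the inequality $g(\delta_3,m_3,k)\leq\alpha$ to find the smallest admissible integer $k$. I would first reparametrize by setting $t = (m_3+1-k)/(m_3+1)$, so that $k(m_3+1-k)/(m_3+1)^2 = t(1-t)$, and write $D = \delta_3(m_3+2)$. The inequality becomes
\begin{equation*}
g \,=\, t + \sqrt{t(1-t)/D} \,\leq\, \alpha.
\end{equation*}
Since the square root is nonnegative, any solution must satisfy $t\leq\alpha$; given that, I square both sides (both sides nonnegative) and rearrange to obtain the equivalent quadratic constraint
\begin{equation*}
(D+1)t^2 - (2D\alpha+1)t + D\alpha^2 \,\geq\, 0.
\end{equation*}

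Next I analyze this quadratic in $t$. Its discriminant is $(2D\alpha+1)^2 - 4(D+1)D\alpha^2 = 1 + 4D\alpha(1-\alpha) > 0$, giving two real roots $t^*_\pm$. Evaluating the quadratic at $t=0$ yields $D\alpha^2>0$ and at $t=\alpha$ yields $\alpha(\alpha-1)<0$, so $0 < t^*_- < \alpha < t^*_+$. Consequently, among $t\in[0,\alpha]$ the inequality $g\leq\alpha$ is equivalent to $t\leq t^*_-$, i.e.\ $k/(m_3+1)\geq 1-t^*_- = A_{\alpha,\delta_3}(m_3)$ after algebraic simplification using $2(D+1) - (2D\alpha+1) = 1+2D(1-\alpha)$. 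Taking the ceiling gives the claimed formula~\eqref{eq::kmin}.

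For part~1 of the ``Moreover'' list, the bounds $A\in(1-\alpha,1)$ reduce after clearing denominators to verifying $\sqrt{1+4D\alpha(1-\alpha)} > |1-2\alpha|$ (for the lower bound, trivial when $\alpha\geq 1/2$, and otherwise a direct squaring argument) and $\sqrt{1+4D\alpha(1-\alpha)} < 1+2D\alpha$ (both sides positive, and squaring gives $-\alpha<D\alpha$). Part~2 is immediate: as $m_3\to\infty$, $D\to\infty$, and inspecting leading orders in the numerator and denominator of $A$ yields $A\to 1-\alpha$; then $k_{\min}/m_3 = \lceil(m_3+1)A\rceil/m_3 \to 1-\alpha$.

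The main technical piece is part~3, showing $k_{\min}\leq m_3$ whenever $m_3\geq 4/(\alpha\delta_3)$. This reduces to $(m_3+1)A\leq m_3$, equivalently $t^*_-\geq 1/(m_3+1)$. I would rationalize:
\begin{equation*}
t^*_- \,=\, \frac{2D\alpha^2}{(2D\alpha+1)+\sqrt{1+4D\alpha(1-\alpha)}},
\end{equation*}
and use $\sqrt{1+4D\alpha(1-\alpha)}\leq 1+2D\alpha$ to get $t^*_-\geq D\alpha^2/(2D\alpha+1)$. The required bound $(m_3+1)t^*_-\geq 1$ then follows from $\alpha(m_3+1)\geq 2 + 1/(D\alpha)$, which, under $m_3\geq 4/(\alpha\delta_3)$, is verified by noting $\alpha m_3\geq 4/\delta_3\geq 4$ so the left side is at least $2+\alpha$, while the right side is at most $2+1/4$ since $D\alpha\geq \delta_3 m_3\alpha\geq 4$. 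Once $k_{\min}\leq m_3$ is established, the characterization $\mathcal{K} = \{k_{\min},\ldots,m_3\}$ is automatic from the definition of $k_{\min}$ as the smallest admissible $k$ together with the fact that $m_3\in\mathcal{K}$ (indeed $g(\delta_3,m_3,m_3) = 0 + \sqrt{m_3/(\delta_3(m_3+2)(m_3+1)^2)} \to 0$ and is easily checked to be $\leq\alpha$ under the stated condition). The fiddly threshold bookkeeping in part~3 is the one place I expect to double-check signs carefully; everything else is elementary manipulation of the explicit quadratic.
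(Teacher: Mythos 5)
Your argument is correct, and for the central equivalence it mirrors the paper: both you and the authors square $\sqrt{k(m_3+1-k)}/\{\sqrt{\delta_3(m_3+2)}(m_3+1)\}\leq \alpha-\frac{m_3+1-k}{m_3+1}$ (valid once the linear constraint $t\leq\alpha$, i.e.\ $k\geq(1-\alpha)(m_3+1)$, is imposed) and reduce to the same quadratic; you locate the relevant root $t_-^*$ by sign checks at $t=0$ and $t=\alpha$, while the paper writes the solution set as $k\geq(m_3+1)\max\{1-\alpha,A_{\alpha,\delta_3}(m_3)\}$ and invokes $A_{\alpha,\delta_3}(m_3)>1-\alpha$ — the same content in different bookkeeping. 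Where you genuinely diverge is part~3: the paper only uses the slack sufficient condition $(m_3+1)A_{\alpha,\delta_3}(m_3)+1\leq m_3$, introduces a free parameter $\Delta\in(0,\alpha)$, solves an auxiliary quadratic for the condition $A_{\alpha,\delta_3}(m_3)\leq 1-\Delta$, and sets $\Delta=\alpha/2$ to land on $m_3\geq 4/(\alpha\delta_3)$; your route is sharper and more direct, using the exact equivalence $k_{\min}\leq m_3\Leftrightarrow (m_3+1)t_-^*\geq 1$, the rationalized form $t_-^*=2D\alpha^2/\{(2D\alpha+1)+\sqrt{1+4D\alpha(1-\alpha)}\}$, and the bound $t_-^*\geq D\alpha^2/(2D\alpha+1)$. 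Two small blemishes to fix: (i) in the final comparison your intermediate claim that the left side is ``at least $2+\alpha$'' is too weak to beat $2+1/4$ when $\alpha<1/4$; the facts you cite ($\alpha m_3\geq 4/\delta_3\geq 4$) actually give $\alpha(m_3+1)\geq 4+\alpha>2+1/(D\alpha)$, so state that stronger bound; (ii) your parenthetical evaluation of $g(\delta_3,m_3,m_3)$ drops the $\frac{1}{m_3+1}$ term (you wrote the $k=m_3+1$ first term with the $k=m_3$ second term), but this check is redundant anyway, since the established equivalence $g(\delta_3,m_3,k)\leq\alpha\Leftrightarrow k\geq(m_3+1)A_{\alpha,\delta_3}(m_3)$ already shows the admissible set is upward closed, so $\mathcal{K}=\{k_{\min},\ldots,m_3\}$ follows once $k_{\min}\leq m_3$.
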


\noindent Introduce shorthand notations ${k_{\min}} = {k_{\min}}(\alpha, \delta_3, m_3)$, $\hat r_{(k)} =\hat r_{(k)}(\mathcal{S}_3^0)$, and $\widehat C_{\alpha} =  \hat r_{(\min\{k_{\min},m_3\})}$. 
We will take
\begin{align}\label{eq::phi-hat}
\hat{\phi} (x) \,=\, \1\{ \hat r(x) \geq \widehat C_{\alpha}\} \,=\,  \left\{
\begin{array}{ll}
\1\{\hat r(x) \geq \hat r_{(k_{\min})}\}\,,&\text{if $\,\,k_{\min} \leq m_3$}\,,\\
\1\{\hat r(x) \geq \hat r_{(m_3)}\}\,,& \text{if $\,\,k_{\min} = m_3+1$}
\end{array}\right.
\end{align}
as the default NP plug-in classifier for any arbitrary $\hat r$.
An alternative threshold estimate that also guarantees type I error bound is derived in the Appendix \ref{sec::alternative threshold}. 
Assume $m_3\geq 4/(\alpha\delta_3)$ for the rest of the theoretical discussion.
It follows from Proposition \ref{prop::kmin} that $k_{\min} \leq m_3$, and thus $\widehat C_{\alpha} =  \hat r_{(k_{\min})}$, $\hat\phi = {\hat \phi}_{(k_{\min})}$ with guaranteed type I error control.

\begin{rem}
\label{rem::quantileIntuition}
Note that 
$\lim_{m_3\to\infty}{k_{\min}}/{\lceil m_3(1-\alpha)\rceil} = 1$.
Thus, choosing the $k_{\min}$-th order statistic of $\hat r(\mathcal{S}^0_3)$ as the threshold can be viewed as a modification to the classical approach of estimating the $1-\alpha$ quantile of $F_{0,\hat r}$ by the $\lceil m_3(1-\alpha)\rceil$-th order statistic of ${\hat \Lr}(\mathcal{S}^0_3)$.
Recall that the oracle $C_\alpha$ is actually the $1-\alpha$ quantile of distribution $F_{0,r},$
so the intuition is that $\widehat C_{\alpha}$ is asymptotically (when $m_3\rightarrow \infty$) equivalent to the $1-\alpha$ quantile of $F_{0,\hat \Lr},$ which in turn converges (when $n_1, n_2, m_1, m_2 \rightarrow \infty$) to $C_\alpha$ as the $1-\alpha$ quantile of $F_{0, \Lr}$ under moderate conditions.
\end{rem}

\begin{lem}
\label{prop::R0}

Let  $\alpha, \delta_3\in(0,1)$. In addition to Assumption \ref{assumption::independence-split}, suppose $\hat \Lr$ be such that $F_{0,\hat \Lr}$ is continuous almost surely.
Then for any $\delta_4 \in (0,1)$ and $m_3  \geq 4/(\alpha\delta_3)$,
the distance between $R_0( \hat{\phi})$ ($\hat{\phi}$ as defined in \eqref{eq::phi-hat})  and $R_0(\phiStar)$ can be bounded as 
\begin{eqnarray*}
\p\{
|R_0( \hat{\phi} ) 
- R_0( \phiStar) | > \xi_{\alpha, \delta_3,m_3}(\delta_4) \}
\,\leq\, \delta_4\,,
\end{eqnarray*}
where 
\begin{align}
\label{eq::xi}
\xi_{\alpha, \delta_3,m_3}(\delta_4) = \sqrt{\frac{\kHat(m_3+1-\kHat)}{(m_3+2)(m_3+1)^2\delta_4}} + A_{\alpha,\delta_3}(m_3) - (1-\alpha) + \frac{1}{m_3+1}\,.
\end{align}
If $m_3 \geq \max(\delta_3^{-2}, \delta_4^{-2})$, we have 
$
\xi_{\alpha, \delta_3,m_3}(\delta_4) \leq  ({5}/{2}){m_3^{-1/4}}.
$
\end{lem}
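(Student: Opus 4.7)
The plan is to exploit continuity of $F_{0,\hat r}$ to obtain an exact Beta law for $R_0(\hat\phi)$, apply Chebyshev's inequality to control its fluctuation around the mean, and use Proposition~\ref{prop::kmin} to bound the deterministic bias coming from the choice of $k_{\min}$ relative to the ideal index $(m_3+1)(1-\alpha)$.

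First I would identify the distribution of $R_0(\hat\phi)$. By Assumption~\ref{assumption::independence-split}, $\mathcal{S}_3^0$ is independent of $\hat r$. Conditioning on $\hat r$ and using that $F_{0,\hat r}$ is continuous a.s., the variables $\{F_{0,\hat r}(\hat r(V_i))\}_{i=1}^{m_3}$ are iid Uniform$(0,1)$, so the $k_{\min}$-th order statistic $F_{0,\hat r}(\hat r_{(k_{\min})}(\mathcal{S}_3^0))$ follows $\mathrm{Beta}(k_{\min}, m_3+1-k_{\min})$, a law free of $\hat r$ and therefore also the unconditional law. Since $R_0(\hat\phi) = 1 - F_{0,\hat r}(\hat r_{(k_{\min})}(\mathcal{S}_3^0))$, I conclude $R_0(\hat\phi) \sim \mathrm{Beta}(m_3+1-k_{\min}, k_{\min})$ with mean $\mu := (m_3+1-k_{\min})/(m_3+1)$ and variance $\sigma^2 := k_{\min}(m_3+1-k_{\min})/[(m_3+2)(m_3+1)^2]$. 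Chebyshev's inequality then gives $\p\{|R_0(\hat\phi) - \mu| > \sqrt{\sigma^2/\delta_4}\} \leq \delta_4$.

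Next I would handle the deterministic bias $|\mu - R_0(\phi^*)|$. Under the NP oracle setup $R_0(\phi^*) = \alpha$, and Proposition~\ref{prop::kmin} gives $k_{\min}/(m_3+1) \in [A,\, A + 1/(m_3+1))$ with $A := A_{\alpha,\delta_3}(m_3)$, so $\mu \in (1-A-1/(m_3+1),\, 1-A]$. Part~1 of Proposition~\ref{prop::kmin} ensures $1-A < \alpha$, hence $|\mu - \alpha| = \alpha - \mu \leq (A - (1-\alpha)) + 1/(m_3+1)$. Combining with the Chebyshev bound via the triangle inequality yields the first claim with exactly the $\xi$ in~(\ref{eq::xi}).

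Finally, for $\xi \leq (5/2)m_3^{-1/4}$ when $m_3 \geq \max(\delta_3^{-2},\delta_4^{-2})$, each of the three contributions is bounded separately by $m_3^{-1/4}$ times a constant. The variance term uses $k_{\min}(m_3+1-k_{\min}) \leq (m_3+1)^2/4$ and $\delta_4 \geq m_3^{-1/2}$ to give $\sqrt{\sigma^2/\delta_4} \leq (1/2) m_3^{-1/4}$. For the bias term, algebraic rearrangement yields
$$
A - (1-\alpha) \,=\, \frac{2\alpha - 1 + \sqrt{1 + 4\delta_3(m_3+2)\alpha(1-\alpha)}}{2(\delta_3(m_3+2)+1)},
$$
and applying $4\alpha(1-\alpha) \leq 1$ together with $2\alpha - 1 \leq 1 \leq \sqrt{1+\delta_3(m_3+2)}$ bounds this by $1/\sqrt{1+\delta_3(m_3+2)} \leq m_3^{-1/4}$ using $\delta_3 \geq m_3^{-1/2}$. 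The ceiling term $1/(m_3+1) \leq m_3^{-1/4}$ is smaller still, and summing gives $(5/2)m_3^{-1/4}$. The main obstacle is the algebraic manipulation of $A_{\alpha,\delta_3}(m_3) - (1-\alpha)$ into the tractable $O((\delta_3 m_3)^{-1/2})$ estimate; the Beta-law identification and the Chebyshev step are otherwise routine.
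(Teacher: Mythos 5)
Your proposal is correct and follows essentially the same route as the paper's proof: identify the Beta$(m_3+1-k_{\min},\,k_{\min})$ law of $R_0(\hat\phi)$ via the probability integral transform under continuity of $F_{0,\hat r}$ and independence of $\mathcal{S}^0_3$ from $\hat r$ (the content of Lemmas \ref{lem::betaBinomial}--\ref{lem::new}), apply Chebyshev's inequality (Lemma \ref{lem::Beta}) to the fluctuation around the mean $(m_3+1-k_{\min})/(m_3+1)$, and bound the deterministic bias through Proposition \ref{prop::kmin}. Your final bound $\xi_{\alpha,\delta_3,m_3}(\delta_4)\le (5/2)m_3^{-1/4}$ is obtained by a slightly different but equally valid grouping of the terms ($\tfrac12+1+1$ instead of the paper's $\tfrac12+\tfrac12+\tfrac12$ plus remainders), so no gap remains.
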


\begin{prop}
\label{prop::R1}
Let $\alpha, \delta_3, \delta_4 \in (0,1)$. In addition to assumptions of Lemma \ref{prop::R0}, assume that the density ratio $\Lr$ satisfies the margin assumption of order $\bar\gamma$ at level $C_{\alpha}$ (with constant $M_0$) and detection condition of order $\uderbar \gamma$ at 
level $(C_{\alpha}, \delta^*)$ (with constant $M_1$), both with respect to distribution  $P_0$.  
\noindent
If $m_3 \geq \max\{ 4/{(\alpha\delta_3)}, \delta_3^{-2}, \delta_4^{-2},(\frac{2}{5}M_1{\delta^*}^{\uderbar\gamma})^{-4}\}$,  the excess type II error of the classifier $\hat{\phi}$ defined in \eqref{eq::phi-hat} satisfies with probability at least $1-\delta_3-\delta_4$,
\begin{align*}
&R_1(\hat{\phi}) - R_1({\phi}^*)\\
&\leq\, 
2M_0 \left[\left\{\frac{|R_0( \hat{\phi}) - R_0( \phi^*)|}{M_1}\right\}^{1/\uderbar{\gamma}} + 2  \| \hat \Lr - \Lr \|_{\infty} \right]^{1 + \bar\gamma} 
+ C_{\alpha} |R_0( \hat{\phi}) - R_0( \phi^*)|\\
&\leq\,
2M_0 \left[\left(\frac{2}{5}m_3^{1/4}M_1\right)^{-1/\uderbar{\gamma}} + 2  \| \hat \Lr - \Lr \|_{\infty} \right]^{1 + \bar\gamma} 
+ C_{\alpha} \left(\frac{2}{5} m_3^{1/4}\right)^{-1}\,.
\end{align*}
\end{prop}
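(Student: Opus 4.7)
The plan is to decompose the excess type II error into a ``threshold mismatch'' piece involving $R_0(\hat\phi) - R_0(\phi^{*})$ and two ``boundary mass'' integrals. Writing $A = \{\phi^{*}=1, \hat\phi = 0\}$ and $B = \{\phi^{*}=0, \hat\phi = 1\}$ for the disagreement sets and using $p = r q$, direct algebra gives
\begin{equation*}
R_1(\hat\phi) - R_1(\phi^{*}) \,=\, -C_{\alpha}\bigl(R_0(\hat\phi) - R_0(\phi^{*})\bigr) + \int_A (r - C_{\alpha})\, q\, dx + \int_B (C_{\alpha} - r)\, q\, dx\,,
\end{equation*}
so the first term is already at most $C_{\alpha}\,|R_0(\hat\phi) - R_0(\phi^{*})|$, matching the final piece of the target bound.

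The key geometric observation is that on $A \cup B$ the density ratio deviates from $C_{\alpha}$ by at most $\epsilon := \|\hat r - r\|_{\infty} + |\widehat C_{\alpha} - C_{\alpha}|$: on $A$, $0 \leq r - C_{\alpha} \leq (r - \hat r) + (\widehat C_{\alpha} - C_{\alpha}) \leq \epsilon$ by using $\hat r < \widehat C_{\alpha}$, and the symmetric bound holds on $B$. Applying the margin assumption to the disjoint subsets $A, B \subseteq \{|r - C_{\alpha}| \leq \epsilon\}$ separately, each has $P_0$-mass at most $M_0 \epsilon^{\bar\gamma}$, so the two boundary integrals together are at most $2 M_0 \epsilon^{1+\bar\gamma}$.

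The main obstacle is converting $|\widehat C_{\alpha} - C_{\alpha}|$ into the quantity $\{|R_0(\hat\phi) - R_0(\phi^{*})|/M_1\}^{1/\uderbar\gamma}$ via the detection condition, which I plan to do by splitting on the position of $\widehat C_{\alpha}$ relative to $C_{\alpha}$. If $\widehat C_{\alpha} - C_{\alpha} > \|\hat r - r\|_{\infty}$, then $B = \emptyset$ (any $x$ with $\hat r(x) \geq \widehat C_{\alpha}$ satisfies $r(x) > C_{\alpha}$), while $A \supseteq \{C_{\alpha} \leq r \leq \widehat C_{\alpha} - \|\hat r - r\|_{\infty}\}$ up to the null event $\{\hat r = \widehat C_{\alpha}\}$ handled by the continuity of $F_{0,\hat r}$, so the detection condition yields
\begin{equation*}
|R_0(\hat\phi) - R_0(\phi^{*})| \,=\, P_0(A) \,\geq\, M_1 \bigl(\widehat C_{\alpha} - C_{\alpha} - \|\hat r - r\|_{\infty}\bigr)^{\uderbar\gamma}\,,
\end{equation*}
which inverts to $\widehat C_{\alpha} - C_{\alpha} \leq \{|R_0(\hat\phi) - R_0(\phi^{*})|/M_1\}^{1/\uderbar\gamma} + \|\hat r - r\|_{\infty}$. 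The assumption $m_3 \geq \bigl(\tfrac{2}{5} M_1 {\delta^{*}}^{\uderbar\gamma}\bigr)^{-4}$, combined with Lemma \ref{prop::R0}, ensures the detection argument lies in $(0, \delta^{*})$. In the complementary regimes, either $|\widehat C_{\alpha} - C_{\alpha}| \leq \|\hat r - r\|_{\infty}$ directly, or $\widehat C_{\alpha} < C_{\alpha} - \|\hat r - r\|_{\infty}$ forces $A = \emptyset$, making $R_1(\hat\phi) - R_1(\phi^{*}) = -\int_B r q\, dx \leq 0$ and the claim trivial.

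Substituting $\epsilon \leq \{|R_0(\hat\phi) - R_0(\phi^{*})|/M_1\}^{1/\uderbar\gamma} + 2\|\hat r - r\|_{\infty}$ into the $2 M_0 \epsilon^{1+\bar\gamma}$ bound yields the first displayed inequality. The second follows by replacing $|R_0(\hat\phi) - R_0(\phi^{*})|$ with the bound $(5/2) m_3^{-1/4}$ from Lemma \ref{prop::R0} (probability at least $1 - \delta_4$); a union bound with the type I error event $\{R_0(\hat\phi) \leq \alpha\}$ from Proposition \ref{prop::general k} (probability at least $1 - \delta_3$) then yields the stated probability $1 - \delta_3 - \delta_4$.
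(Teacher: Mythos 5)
Your proposal is correct, and its skeleton coincides with the paper's: the identity you start from is exactly the paper's decomposition \eqref{eq::decomposition} (with $A=\widehat{G}\setminus G^*$ and $B=G^*\setminus\widehat{G}$), the margin assumption is applied to the disagreement sets in the same way, the detection condition converts $|R_0(\hat\phi)-R_0(\phi^*)|$ into an upper bound on $\widehat{C}_\alpha-C_\alpha$, and Lemma \ref{prop::R0} supplies the $\tfrac{5}{2}m_3^{-1/4}$ rate. The genuine difference is how the piece over $B=\{\phi^*=0,\hat\phi=1\}$ is controlled. The paper works on the extra event $\{R_0(\hat\phi)\le\alpha\}$ (probability at least $1-\delta_3$, via Propositions \ref{prop::general k} and \ref{prop::kmin}), which forces $\widehat{C}_\alpha\ge C_\alpha-\|\hat r-r\|_\infty$ and hence $B\subset\{C_\alpha-2\|\hat r-r\|_\infty\le r\le C_\alpha\}$; this is the source of the $\delta_3$ in the stated probability. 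Your three-way split on $\widehat{C}_\alpha-C_\alpha$ versus $\pm\|\hat r-r\|_\infty$ dispenses with that event: when $\widehat{C}_\alpha<C_\alpha-\|\hat r-r\|_\infty$ you note $A=\emptyset$ so the excess type II error is nonpositive and the claim is trivial, and otherwise $|\widehat{C}_\alpha-C_\alpha|$ is absorbed into $\epsilon$. Consequently the union bound you append with the type I event is redundant: your argument in fact delivers the conclusion with probability at least $1-\delta_4$, which implies the stated $1-\delta_3-\delta_4$. The one step that deserves to be spelled out is the range check for the detection condition: you invoke it at $\delta=\widehat{C}_\alpha-C_\alpha-\|\hat r-r\|_\infty$, while Definition \ref{def::detection} only covers $\delta\in(0,\delta^*)$. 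To exclude $\delta\ge\delta^*$, apply the condition at every $\delta'<\delta^*$ and let $\delta'\uparrow\delta^*$ to get $|R_0(\hat\phi)-R_0(\phi^*)|=P_0(A)\ge M_1(\delta^*)^{\uderbar\gamma}$, which contradicts $|R_0(\hat\phi)-R_0(\phi^*)|\le\xi_{\alpha,\delta_3,m_3}(\delta_4)<\tfrac{5}{2}m_3^{-1/4}\le M_1(\delta^*)^{\uderbar\gamma}$ on the Lemma \ref{prop::R0} event under your $m_3$ condition; these are exactly the ingredients you cite, so this is an exposition point rather than a gap (the paper sidesteps it by applying detection at $(|R_0(\hat\phi)-R_0(\phi^*)|/M_1)^{1/\uderbar\gamma}$, which lies below $\delta^*$ on that event by construction).
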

Given the above proposition, we can control the excess type II error as long as the uniform deviation of density ratio estimate $\|\hat r-r\|_{\infty}$ is controlled. 
In the following subsection, we will introduce estimates $\hat r$ and provide bounds for $\|\hat r-r\|_{\infty}$.

\subsection{Density ratio estimate $\hat r$\label{sec::density-ratio-estimate}}
Denote the marginal densities of class 1 and 0 as $p_j$ and $q_j$ ($j=1,\cdots,d$) respectively, 
Naive Bayes models for the density ratio take the form
\begin{align*}
r(x) = \prod_{j=1}^d \frac{p_{j}(x_j)}{q_{j}(x_j)}\,, \quad\text{where $x_j$ is the $j$-th component of $x$}\,.
\end{align*}

The subsamples $\mathcal{S}^1_1=\{U_i\}_{i={1}}^{n_1}$, $\mathcal{S}^1_2=\{U_{i+n_1}\}_{i={1}}^{n_2}$, $\mathcal{S}^0_1=\{V_i\}_{i={1}}^{m_1}$ and $\mathcal{S}^0_2=\{V_{i+m_1}\}_{i={1}}^{m_2}$ are used to construct (nonparametric/parametric) estimates of $p_j$ and $q_j$  for $j = 1,\cdots,d$.

\vskip 6pt
\noindent \textbf{Nonparametric estimate of the density ratio}.
For marginal densities $p_j$ and $q_j$, we apply kernel estimates 
$\,\,\hat p_{j}(x_j) = \{(n_1+n_2) h_1\}^{-1}\sum_{i=1}^{n_1+n_2} K\left(\frac{U_{i,j} - x_j}{h_1}\right)$, and  
$\,\,\hat q_{j}(x_j) =  \{(m_1+m_2) h_0\}^{-1}\sum_{i=1}^{m_1+m_2} K\left(\frac{V_{i,j} - x_j}{h_0}\right)$,
where 
$K(\cdot)$ is the kernel function,
$h_1,h_0$ are the bandwidths, 
and $V_{i,j}$ and $U_{i,j}$ denote the $j$-th component of $V_{i}$ and $U_{i}$ respectively. 
The resulting nonparametric estimate  is
\begin{eqnarray}
\label{eq::rHatI}
\hat \Lr_{\text{N}}( x) 
= \prod_{j=1}^d \frac{\hat p_{j}(x_j)}{\hat q_{j}(x_j)}\,.
\end{eqnarray}

\noindent \textbf{Parametric estimate of the density ratio}.
Assume the two-class Gaussian model 
$
X|Y=0 \sim \n({\mu}^0,\Sigma) \text{ and } X|Y=1 \sim \n({\mu}^1,\Sigma ),
$ where $\Sigma=\mbox{diag}(\sigma_1^2,\cdots,\sigma_d^2)$. 
We estimate $\mu^0$, $\mu^1$ and $\Sigma$ using their sample versions $\hat \mu^0$, $\hat \mu^1$ and $\hat \Sigma$.
Under this model, the density ratio function is given by
\begin{eqnarray*}
\label{eq::rI}
\Lr_{\text{P}}( x)
\,=\, \exp\left\{\left(\mu^1-\mu^0\right)' \Sigma^{-1}x + \frac{1}{2}   (\mu^0)^\prime\Sigma^{-1} \mu^0 -  \frac{1}{2} (\mu^1)^\prime\Sigma^{-1}\mu^1 \right\}\,,
\end{eqnarray*}
and the corresponding parametric estimate is
\begin{equation}
\label{eq::rIhatVec}
\hat \Lr_{\text{P}}( x) 
\,=\, \exp\left\{\left(\hat\mu^1-\hat\mu^0\right)' \hat\Sigma^{-1}x + \frac{1}{2}  (\hat\mu^0)^\prime\hat\Sigma^{-1} \hat\mu^0 - \frac{1}{2}  (\hat\mu^1)^\prime\hat\Sigma^{-1}\hat\mu^1 \right\}\,.
\end{equation}
\subsection{Screening-based density ratio estimate and plug-in procedures\label{sec::screen-density-ratio-estimate}} 

For ``high dimension, low sample size" applications, complex models that take into account all features usually fail; even Naive Bayes models that ignore feature dependency might lead to poor performance due to noise accumulation \citep{Fan.Fan.2008}.  A common solution in these scenarios is to first study marginal relations between the response and each of the features \citep{Fan.Lv.2008, li2012feature}.  By selecting the most important individual features, we greatly reduce the model size, and other models can be applied after this screening step.   We now introduce screening based variants of $\hat r_{\text{N}}$ and $\hat  r_{\text{P}}$. Let  $F^{0}_j$ and  $F^{1}_j$ denote the \textsc{cdf}s of $q_j$ and $p_j$ respectively, for $j=1,\cdots,d$. 
  Step 1 of Procedure \ref{pro::np-plug-in} introduced in Section \ref{sec::notations-definitions} is now decomposed into a screening substep and an estimation substep.



\vskip 6pt

\noindent\textbf{\underline{N}onparametric \underline{S}creening-based \underline{N}P \underline{N}aive Bayes (\nsn) classifier}
\begin{description}
\item [\underline{Step 1.1}]  Select features using  $\mathcal{S}^0_1$ and $\mathcal{S}^1_1$ as follows:  
\begin{equation}
\label{eq::Atau}
\widehat{\mathcal{A}}_{\tau} 
\,=\, \left\{ 1 \leq j \leq d:
\| \hat{F}^{0}_{j} -\hat{F}^{1}_{j} \| _{\infty} \geq \tau\right\},
\end{equation}
where $\tau>0$ is some  threshold level, and
\begin{equation}
\label{eq::ecdf}
\hat{F}^{0}_{j}(x_j) \,=\, \frac{1}{m_1}\sum_{i=1}^{m_1} \1(V_{i,j} \leq x_j)\,,\,\,\,
\hat{F}^{1}_{j}(x_j)\,=\,\frac{1}{n_1}\sum_{i=1}^{n_1} \1(U_{i,j} \leq x_j)
\end{equation}
are the empirical \textsc{cdf}s.
\item [\underline{Step 1.2}] Use $\mathcal{S}^0_2$ and $\mathcal{S}^1_2$  to construct kernel estimates of $q_{j}$ and $p_{j}$ for $j \in \widehat{\mathcal{A}}_{\tau}$. The density ratio estimate is given by 
$$
\hat{\Lr}^S_{\text{N}}(x) \,=\, \prod_{j \in \widehat{\cA}_{\tau}} \frac{\hat p_{j}(x_j)}{\hat q_{j}(x_j)}\,.
$$
\item [\underline{Step 2}] Given $\hat{\Lr}^S_{\text{N}}$, use $\mathcal{S}^0_3$ to get a threshold estimate $(\hat{\Lr}^S_{\text{N}})_{(k_{\min})}$ as in \eqref{eq::phi-hat}. 

\end{description}
The resulting \nsns classifier  is 
\begin{align}\label{eq::nsn2}
\hat{\phi}_{\text{NSN}^2} (x) \,=\, \1\left\{\hat r^S_{\text{N}}(x) \geq (\hat{\Lr}^S_{\text{N}})_{(k_{\min})}\right\}.
\end{align}

\noindent\textbf{\underline{P}arametric \underline{S}creening-based \underline{N}P \underline{N}aive Bayes (\psn) classifier}

\noindent The \psns procedure is similar to \nsn, except the following two differences. In Step 1.1,  features are now selected based on $t$-statistics ($\widetilde{\cA}_{\tau}$ represent the index set of the selected features). In Step 1.2, $p_j$, $q_j$ for $j \in \widetilde{\cA}_{\tau}$ follow two-class Gaussian model, and the resulting parametric screening-based density ratio estimate is
$$
\hat r^S_{\text{P}}(x) = \prod_{j \in \widetilde{\cA}_{\tau}} \frac{\tilde p_{j}(x_j)}{\tilde q_{j}(x_j)}\,.
$$
The corresponding \psns classifier is thus given by
\begin{align}\label{eq::psn2}
\hat{\phi}_{\text{PSN}^2} (x) = \1\left\{\hat r^S_{\text{P}}(x) \geq (\hat r^S_{\text{P}})_{(k_{\min})}\right\}.
\end{align}

We assume the domains of all $p_j$ and $q_j$ to be $[-1,1]$ for all the following theoretical discussion. 
We will prove NP oracle inequalities for $\hat{\phi}_{\text{NSN}^2}$, and those  for $\hat{\phi}_{\text{PSN}^2}$ can be  developed similarly. Recall that by Proposition \ref{prop::R1}, we need an upper bound for $\|\hat r^S_N - r\|_{\infty}$.  Necessarily, performance of the screening step should be studied. To this end, we assume that only a small fraction of the $d$ features have marginal differentiating power.  
\begin{assumption}\label{assumption::exact-recovery}
There exists a signal set $\cA\subset \{1,\cdots,d\}$ with size  $|\cA|=s\ll d$ such that  $\inf_{j\in\mathcal{A}}\| F^{0}_j - F^{1}_j\| _{\infty} \geq D$ for some positive constant $D$, and $F^{0}_j = F^{1}_j$ for $j\notin\mathcal{A}$.

\end{assumption}

The following proposition shows that Step 1.1  achieves exact recovery ($\widehat{\mathcal{A}}_{\tau}  = \cA$) with high probability for some properly chosen $\tau$.

\begin{prop}[exact recovery]
\label{prop::exact recovery epsilon}
Let $\delta_1\in(0,1)$. In addition to Assumptions \ref{assumption::independence-split} and \ref{assumption::exact-recovery}, suppose $n_1 \wedge m_1 \geq 8D^{-2}\log(4d/\delta_1)$.
Then for  any $\tau \in \left[ \Delta_0, D -\Delta_0 \right],$ 
where $\Delta_0 =  \sqrt{\frac{\log(4d/\delta_1)}{2n_1}} + \sqrt{\frac{\log(4d/\delta_1)}{2m_1}}$, 
the screening substep $\mbox{Step 1.1}$ \eqref{eq::Atau} satisfies
$$\p(\widehat{\mathcal{A}}_{\tau} =\mathcal{A})\geq 1-\delta_1\,.$$
\end{prop}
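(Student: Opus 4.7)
The plan is to use the Dvoretzky--Kiefer--Wolfowitz (DKW) inequality to control the sup-norm deviations of the $2d$ empirical CDFs from their population counterparts simultaneously, and then combine two applications of the triangle inequality to separate signal from noise features.

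First, I would apply DKW coordinatewise. Setting $\eps_1=\sqrt{\log(4d/\delta_1)/(2n_1)}$ and $\eps_0=\sqrt{\log(4d/\delta_1)/(2m_1)}$, DKW gives $\p(\|\hat F^1_j-F^1_j\|_\infty>\eps_1)\le 2e^{-2n_1\eps_1^2}=\delta_1/(2d)$ and analogously for class $0$. A union bound across $j=1,\dots,d$ and across the two classes produces a good event $\mathcal{E}$ with $\p(\mathcal{E})\ge 1-\delta_1$ on which $\|\hat F^0_j-F^0_j\|_\infty\le\eps_0$ and $\|\hat F^1_j-F^1_j\|_\infty\le\eps_1$ hold for every $j$. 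On $\mathcal{E}$, two triangle-inequality steps show that $\bigl|\|\hat F^0_j-\hat F^1_j\|_\infty-\|F^0_j-F^1_j\|_\infty\bigr|\le \eps_0+\eps_1=\Delta_0$ for all $j$, which is the central deterministic consequence of the DKW concentration.

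Next, I would split on whether $j\in\mathcal{A}$. For $j\in\mathcal{A}$, Assumption \ref{assumption::exact-recovery} gives $\|F^0_j-F^1_j\|_\infty\ge D$, so on $\mathcal{E}$ the reverse triangle inequality yields $\|\hat F^0_j-\hat F^1_j\|_\infty\ge D-\Delta_0\ge\tau$, placing $j$ in $\widehat{\mathcal{A}}_{\tau}$. For $j\notin\mathcal{A}$, $F^0_j=F^1_j$ and hence $\|\hat F^0_j-\hat F^1_j\|_\infty\le\Delta_0\le\tau$, excluding $j$ from $\widehat{\mathcal{A}}_{\tau}$ (the measure-zero equality case being harmless under the implicit continuity of the feature distributions). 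Combining both cases, $\widehat{\mathcal{A}}_{\tau}=\mathcal{A}$ on $\mathcal{E}$, which is the desired conclusion.

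A final sanity check is that the interval $[\Delta_0,D-\Delta_0]$ is non-empty: the hypothesis $n_1\wedge m_1\ge 8D^{-2}\log(4d/\delta_1)$ is exactly calibrated to force $\eps_0,\eps_1\le D/4$, hence $\Delta_0\le D/2$, so a valid threshold $\tau$ exists. I do not expect a genuine obstacle here: the proof is essentially bookkeeping, with the one subtlety being to verify that the DKW probability budget $\delta_1/(2d)$ per ECDF, totaled over $2d$ ECDFs, matches the $\log(4d/\delta_1)$ appearing in $\Delta_0$. No smoothness of $p_j$, $q_j$ is required for this step, which is why DKW (distribution-free) is the right tool rather than any kernel-based deviation bound.
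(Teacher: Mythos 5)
Your proposal is correct and follows essentially the same route as the paper's proof: DKW applied to each of the $2d$ empirical \textsc{cdf}s with budget $\delta_1/(2d)$, a union bound to get a good event of probability at least $1-\delta_1$, and triangle inequalities to show signal features clear $D-\Delta_0\geq\tau$ while noise features fall below $\Delta_0\leq\tau$. The one wrinkle, possible equality $\|\hat F^0_j-\hat F^1_j\|_\infty=\tau$ when $\tau=\Delta_0$, is handled in the paper not by a measure-zero/continuity remark (the statistic $\|\hat F^0_j-\hat F^1_j\|_\infty$ is discretely supported, so continuity of the feature distributions does not by itself make that event null) but simply by putting ``$\geq$'' in the DKW failure events so the good event carries strict inequalities, giving $\|\hat F^0_j-\hat F^1_j\|_\infty<\Delta_0\leq\tau$ for noise features and hence their exclusion.
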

Now we are ready to control the uniform deviation of density ratio estimate given in Step 1.2. 

\begin{assumption}\label{assumption::beta-valid}
The marginal densities $p_j$, $q_j\in \mathcal{P}_{\Sigma}(\beta, L, [-1,1])$ for all $j = 1, \cdots, d$, and there exists $\uderbar \mu > 0$ such that $p_j, q_j \geq \uderbar \mu$ for all $j \in \mathcal{A}$. There exists some constant $\bar C>0$, such that $\|r\|_{\infty}\leq \bar C$, and there is a uniform absolute upper bound for $\|p_j^{(l)}\|_{\infty}$ and $\|q_j^{(l)}\|_{\infty}$ for $j \in \mathcal{A}$ and $l\in [0, \lfloor \beta\rfloor]$.  Moreover, the kernel $K$ in the nonparametric density estimates  is $\beta$-valid and $L'$-Lipschitz. 
\end{assumption}
Smoothness conditions (Assumption \ref{assumption::beta-valid}) and the margin assumption were used together in the classical classification literature. However, it is not entirely obvious why Assumption \ref{assumption::beta-valid} does not render the detection condition redundant. We refer interested readers to Appendix \ref{sec::assumption3 and detection condition} for more detailed discussion.  

Let $C^1_j$ and $C^0_j$ be the constants $C$ in Lemma \ref{lemma::A1_1-dim} when applied to $p_j$ and $q_j$ respectively.  Assumption \ref{assumption::beta-valid} ensures the existence of absolute constants $C^1 \geq \sup_{j\in\mathcal{A}}C^1_j$ and $C^0 \geq \sup_{j\in\mathcal{A}}C^0_j$.

\begin{prop}[uniform deviation of density ratio estimate]\label{prop::joint_r}
Under Assumptions \ref{assumption::independence-split} - \ref{assumption::beta-valid},  
for any  $\delta_1, \delta_2 \in (0,1)$, if
$n_1 \wedge m_1 \geq 8D^{-2}\log(4d/\delta_1)$, 
$\sqrt{\frac{\log(2n_2 s/\delta_2)}{n_2{h_1}}} \leq \min(1, \underline{\mu}/C^1)$, 
$\sqrt{\frac{\log(2m_2 s/\delta_2)}{m_2{h_0}}} \leq \min(1, \underline{\mu}/C^0)$, 
and the screening threshold $\tau$ is specified as in Proposition \ref{prop::exact recovery epsilon},
we have
\begin{equation}
\label{eq::T}
 \p\left( \| \hat r^S_{\text{N}} - r \|_{\infty}
\leq  T  \right) \,\geq\, 1-\delta_1 - \delta_2\,,
\end{equation}
where $T = {B} e^B \|r\|_{\infty}$ with
$$
B
\,=\, s
\left\{ \frac{C^1\sqrt{\frac{\log(2n_2 s/\delta_2)}{n_2{h_1}}}}{ \underline{\mu} - C^1\sqrt{\frac{\log(2n_2 s/\delta_2)}{n_2{h_1}}}}
+ \frac{C^0\sqrt{\frac{\log(2m_2 s/\delta_2)}{m_2{h_0}}}}{ \underline{\mu} - C^0\sqrt{\frac{\log(2m_2 s/\delta_2)}{m_2{h_0}}}} \right\}\,.
$$
\noindent Moreover, assume that $n_2\wedge m_2 \geq 1/\delta_2$, $|\mathcal{A}|=s \leq (n_2 \wedge m_2)^{\frac{\beta}{2(\beta+1)}}$,  and the bandwidths $h_1= (\log n_2/n_2)^{\frac{1}{2\beta+1}}$ and $h_0=(\log m_2/m_2)^{\frac{1}{2\beta+1}}$, then there exists an absolute constant $C_2>0$ such that 
$$
 \p\left[\| \hat r^S_{\text{N}} - r \|_{\infty}\leq C_2 \, s\left\{\left(\frac{\log n_2}{n_2}\right)^{\frac{\beta}{2\beta+1}} + \left(\frac{\log m_2}{m_2}\right)^{\frac{\beta}{2\beta+1}}\right\} \right] \,\geq\, 1-\delta_1-\delta_2\,.$$ 
\end{prop}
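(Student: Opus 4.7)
The plan is to combine the exact-recovery guarantee of Proposition \ref{prop::exact recovery epsilon} with a union-bound concentration inequality for the individual kernel density estimates, and then lift the resulting marginal errors to a product error by taking logarithms. Throughout, I will exploit the independence of the subsamples used in the screening step (Step 1.1) and the estimation step (Step 1.2).

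First, on the event $\mathcal{E}_1 = \{\widehat{\mathcal{A}}_\tau = \mathcal{A}\}$, which has probability at least $1-\delta_1$ under the stated conditions, the truncated estimate collapses to $\hat r^S_{\text{N}}(x) = \prod_{j\in\mathcal{A}} \hat p_j(x_j)/\hat q_j(x_j)$. Under Assumption \ref{assumption::exact-recovery}, $F^0_j = F^1_j$ on $j\notin\mathcal{A}$ forces $p_j = q_j$ a.e., so the true ratio also reduces to $r(x) = \prod_{j\in\mathcal{A}} p_j(x_j)/q_j(x_j)$. This removes the apparent dependence on $d$ and reduces the problem to $2s$ one-dimensional density estimation problems.

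Next, I would apply Lemma \ref{lemma::A1_1-dim} to each of the $2s$ marginal kernel estimates with confidence parameter $\delta_2/(2s)$, and union bound to obtain an event $\mathcal{E}_2$ (determined by $\mathcal{S}_2^0\cup\mathcal{S}_2^1$, independent of $\mathcal{E}_1$) of probability at least $1-\delta_2$ on which, simultaneously for all $j\in\mathcal{A}$,
\begin{equation*}
\|\hat p_j - p_j\|_\infty \le C^1 \sqrt{\tfrac{\log(2n_2 s/\delta_2)}{n_2 h_1}}, \qquad \|\hat q_j - q_j\|_\infty \le C^0 \sqrt{\tfrac{\log(2m_2 s/\delta_2)}{m_2 h_0}}.
\end{equation*}
The hypotheses on $n_2, m_2, h_1, h_0$ force these bounds to be strictly smaller than $\underline{\mu}$, which keeps the denominators $\hat q_j$ bounded away from zero and the ratios $\hat p_j/\hat q_j$ well-defined.

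To pass from marginal to product error, write $\hat p_j = p_j(1+a_j)$ and $\hat q_j = q_j(1+b_j)$ where $|a_j| \le \|\hat p_j - p_j\|_\infty/\underline{\mu}$ and $|b_j|\le \|\hat q_j - q_j\|_\infty/\underline{\mu}$. Then $\hat r^S_{\text{N}}(x)/r(x) = \prod_{j\in\mathcal{A}}(1+a_j)/(1+b_j)$, and using the elementary inequality $|\log(1+t)| \le |t|/(1-|t|)$ valid for $|t|<1$, the bounds above yield
\begin{equation*}
\bigl|\log\bigl(\hat r^S_{\text{N}}(x)/r(x)\bigr)\bigr| \;\le\; \sum_{j\in\mathcal{A}}\Bigl(\tfrac{|a_j|}{1-|a_j|} + \tfrac{|b_j|}{1-|b_j|}\Bigr) \;\le\; B
\end{equation*}
on $\mathcal{E}_1\cap\mathcal{E}_2$. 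Combined with $|e^B-1|\le Be^B$, this gives $\|\hat r^S_{\text{N}} - r\|_\infty \le \|r\|_\infty (e^B - 1) \le Be^B\|r\|_\infty = T$, and $\p(\mathcal{E}_1\cap\mathcal{E}_2) \ge 1-\delta_1-\delta_2$ follows from independence.

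For the explicit rate, I plug in $h_1 = (\log n_2/n_2)^{1/(2\beta+1)}$ and $h_0 = (\log m_2/m_2)^{1/(2\beta+1)}$; a direct calculation shows the square-root factors decay as $(\log n_2/n_2)^{\beta/(2\beta+1)}$ and $(\log m_2/m_2)^{\beta/(2\beta+1)}$ (with $\log(s/\delta_2)$ terms absorbed by $s\le n_2$ and $\delta_2^{-1}\le n_2\wedge m_2$). The sparsity constraint $s\le (n_2\wedge m_2)^{\beta/(2(\beta+1))}$ is exactly what forces $B\to 0$, hence $e^B = O(1)$, and $\|r\|_\infty\le \bar C$ finishes the argument with an absolute constant $C_2$. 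The main obstacle will be the careful bookkeeping in this last step: one must verify that the denominators $\underline{\mu}-C^1\sqrt{\cdots}$ are bounded below by a positive absolute constant under the assumed regime so that the ratio bounds in $B$ are comparable to their numerators, which is what converts the product-form bound $Be^B$ into the clean sum-rate stated in the proposition.
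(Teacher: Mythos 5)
Your proposal is correct and follows essentially the same route as the paper's proof: exact recovery from Proposition \ref{prop::exact recovery epsilon}, a union bound over the $2s$ marginal kernel estimates via Lemma \ref{lemma::A1_1-dim} at level $\delta_2/(2s)$, passing to logarithms of the ratio to accumulate the per-coordinate errors into $B$, and then exponentiating to get $T = Be^B\|r\|_\infty$ before plugging in the bandwidths and the sparsity bound on $s$. Your minor variations (using independence of the two subsample events instead of a Bonferroni bound, the $(1+a_j)/(1+b_j)$ decomposition with $|\log(1+t)|\le |t|/(1-|t|)$ in place of the log-deviation form of Lemma \ref{lemma::A1_1-dim}, and $e^B-1\le Be^B$ in place of the mean value theorem) are cosmetic and equally valid.
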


The condition $|\mathcal{A}|=s \leq (n_2 \wedge m_2)^{\frac{\beta}{2(\beta+1)}}$ in the above proposition ensures that the upper bound of the uniform deviation diminishes as sample sizes $n_2$, $m_2$ go to infinity. Now we are in a position to present the theorem finale of  \nsn.

\begin{TH1}[NP Oracle Inequalities for $\hat \phi_{\text{NSN}^2}$]\label{theorem::2}
In addition to  Assumptions \ref{assumption::independence-split} - \ref{assumption::beta-valid},  
assume the density ratio $\Lr$ satisfies the margin assumption of order $\bar\gamma$ at level $C_{\alpha}$ and detection condition of order $\uderbar \gamma$ at level $(C_{\alpha}, \delta^*)$, both with respect to $P_0$.
For any given $\delta_1, \delta_2, \delta_3, \delta_4 \in (0,1)$, 
let the \nsn\text{ }classifier $\hat \phi_{\text{NSN}^2}$ be defined as in \eqref{eq::nsn2},
with the screening threshold $\tau$ specified as in Proposition \ref{prop::exact recovery epsilon} and kernel bandwidths $h_1= (\log n_2/n_2)^{\frac{1}{2\beta+1}}$ and $h_0=(\log m_2/m_2)^{\frac{1}{2\beta+1}}$, and $\hat \Lr^S_N$ be such that $F_{0,\hat \Lr^S_N}$ is continuous almost surely. 
For subsample sizes that satisfy 
$n_1 \wedge m_1 \geq 8D^{-2}\log(4d/\delta_1)$, 
$n_2 \wedge m_2 \geq \max\{\delta_2^{-1}, s^{\frac{2(\beta+1)}{\beta}}\}$, $\sqrt{\frac{\log(2n_2 s/\delta_2)}{n_2{h_1}}} \leq \min(1, \underline{\mu}/C^1)$, 
 $\sqrt{\frac{\log(2m_2 s/\delta_2)}{m_2{h_0}}} \leq \min(1, \underline{\mu}/C^0)$, \\ and
$m_3 \geq \max\{ 4/{(\alpha\delta_3)}, \delta_3^{-2}, \delta_4^{-2},(\frac{2}{5}M_1{\delta^*}^{\uderbar\gamma})^{-4}\}$, 
there exists an absolute constant $\tilde C>0$ such that with probability at least $1-\delta_1 - \delta_2-\delta_3-\delta_4$,
\begin{eqnarray*}
\label{eq::thm_finale_2}
&\text{(I\phantom{I})}&R_0(\hat\phi_{\text{NSN}^2}) \,\leq\, \alpha\,,\\
&\text{(II)}&R_1(\hat\phi_{\text{NSN}^2}) - R_1(\phi^*) \,\leq\, \tilde C \left\{m_3^{-(\frac{1}{4} \wedge  \frac{1+\bar\gamma}{4\uderbar{\gamma}})} + s^{1+\bar\gamma}\left(\frac{\log n_2}{n_2}\right)^{\frac{\beta(1+\bar\gamma)}{2\beta+1}}+ s^{1+\bar\gamma}\left(\frac{\log m_2}{m_2}\right)^{\frac{\beta(1+\bar\gamma)}{2\beta+1}}\right\}\,.
\end{eqnarray*}
\end{TH1}

Theorem \ref{theorem::2} establishes the NP oracle inequalities for $\hat \phi_{\text{NSN}^2}$.  To help understand the conditions of this theorem, recall that Assumption \ref{assumption::independence-split} is about sample splitting, Assumption \ref{assumption::exact-recovery}  is on minimal signal strength for active feature set, Assumption \ref{assumption::beta-valid}  is on marginal densities and kernels in nonparametric estimates, and the margin assumption and detection condition describe the neighbourhood of the oracle decision boundary. Note that the subsample sizes $n_1$ and $m_1$ do not enter the upper bound for the excess type II error explicitly.  Instead, we have size requirements on them so that the important features are kept with high probability $1-\delta_1$ in the screening substep.  The tolerance parameter $\delta_2$ arises from the nonparametric estimation of densities, the parameter $\delta_3$ is for the tolerance on violation of type I error bound,  
 and $\delta_4$ arises from controlling $|R_0(\hat \phi_{\text{NSN}^2}) - R_0(\phi^*)|$. 


\section{Numerical investigation}\label{sec::numeric}

In this section, we analyze  two simulated examples and two real datasets to demonstrate the performance of our newly proposed \nsns and \psns classifiers, in comparison with their corresponding non-screening counterparts (denoted as NN$^2$ and PN$^2$ respectively) as well as three popular methods under the classical framework: Gaussian Naive Bayes (nb), penalized logistic regression (pen-log), and Support Vector Machine (svm). We use R package ``e1071" for nb and svm, and the R package ``glmnet" for pen-log. 
To facilitate the presentation, we summarize the four Neyman-Pearson Naive Bayes classifiers in Table \ref{tb::4variants}.

\begin{table}[h]
\centering
\caption{A summary of the four Neyman-Pearson Naive Bayes classifiers. \label{tb::4variants}}
{\renewcommand{\arraystretch}{1.5}
\begin{tabular}{l|c|c}
& Screening-based & Non-screening\\\hline
Non-parametric
& $\hat{\phi}_{\text{NSN}^2} (x) = \1\left\{\hat r^S_{\text{N}}(x) \geq (\hat{\Lr}^S_{\text{N}})_{(k_{\min})}\right\}$ 
& $\hat{\phi}_{\text{NN}^2} (x) = \1\left\{\hat r_{\text{N}}(x) \geq (\hat{\Lr}_{\text{N}})_{(k_{\min})}\right\}$\\\hline
Parametric
& $\hat{\phi}_{\text{PSN}^2} (x) = \1\left\{\hat r^S_{\text{P}}(x) \geq (\hat{\Lr}^S_{\text{P}})_{(k_{\min})}\right\}$ 
& $\hat{\phi}_{\text{PN}^2} (x) = \1\left\{\hat r_{\text{P}}(x) \geq (\hat{\Lr}_{\text{P}})_{(k_{\min})}\right\}$\\
\end{tabular}
}
\end{table}

To train the classifiers in Table \ref{tb::4variants}, we set $\alpha = 0.05$, $\delta_1=0.05$, and $\delta_3=0.05$ throughout this section unless specified otherwise. 
In Assumption \ref{assumption::independence-split}, motivated by Proposition \ref{prop::exact recovery epsilon}, we take 
$m_1 = \min\{10\log(4d/\delta_1), m/4\}\1(\text{screening})$, $n_1=\min\{10\log(4d/\delta_1), n/2\} \1(\text{screening})$,
$m_2 = \lfloor m/2 \rfloor - m_1$, $n_2 = n - n_1$, and
$m_3 = m - \lfloor m/2\rfloor$. 

Due to the absence of information with respect to the true $p$ and $q$, the theoretical screening cutoff that achieves exact recovery is not feasible in practice. 
We resort to an empirical permutation-based approach \citep{Fan.Feng.ea.2011a} as a substitute.  Specifically, the screening substep in  \nsns is executed as follows: 
\begin{enumerate}
	\item Combine  $\mathcal{S}^0_1$ and $\mathcal{S}^1_1$ into $\{( X_{i},Y_i)\}_{i=1}^{m_1+n_1},$
 where $ X_{i} \in \mathcal{S}^0_1 \cup \mathcal{S}^1_1,$ and $Y_i$ is $ X_{i}$'s class label.
	\item Calculate the marginal $D$-statistic for each feature:
	$$
D_j = \|\hat{F}^{0}_{j}-\hat{F}^{1}_{j}\|_{\infty}, 
\quad j = 1,2,\cdots,d\,,
     $$
     where $\hat{F}^{0}_{j}(x)= \sum_{i:Y_i = 0} \1(X_{i,j}\leq x_j)$ and
$\hat{F}^{1}_{j}(x) = \sum_{i:Y_i = 1} \1(X_{i,j}\leq x_j)$.
	\item Let $\pi=\{\pi(1),\cdots,\pi(m_1+n_1)\}$ be a random permutation of $\{1,\cdots,(m_1+n_1)\}$. 
For $j=1,\cdots,d$, compute
$
D_j^{\text{null}}= \|\hat{F}^{0,\text{null}}_{j}-\hat{F}^{1,\text{null}}_{j}\|_{\infty}$,
where $\hat{F}^{0,\text{null}}_{j}(x_j)= \sum_{i:Y_{\pi(i)} = 0} \1(X_{i,j}\leq x_j)$, 
$\hat{F}^{1,\text{null}}_{j}(x_j) = \sum_{i:Y_{\pi(i)} = 1} \1(X_{i,j}\leq x_j)$.
	\item For some pre-specified $Q \in [0,1],$ let $\omega(Q)$ be the $Q$-th quantile of $\{D_j^{\text{null}}: j = 1,\cdots,d\}$ and select $\widehat{\cA} = \{j: D_j \geq \omega(Q)\}$. Here, $Q$ is a tuning parameter that keeps the percentage of noise features that pass the screening around $1-Q$.
\end{enumerate}
The same permutation idea is applied to the screening substep of \psn.
$Q$ is set at 0.95 throughout this section.

\subsection{Simulation}

Samples in both simulated examples are generated from the model
\begin{equation*}
\label{eq::independence}
p(x) = \prod_{j=1}^d p_{j}(x_j),\quad q(x) = \prod_{j=1}^d q_{j}(x_j)
\end{equation*}
at 3 different dimensions: $d \in \{10, 100,1000 \}$. 
Sparsity for $d=100$ and $1000$ is imposed by setting $p_{j} = q_{j}$ for all $j > 10$. 
Seven different training sample sizes: $m=n \in \{$200, 400, 800, 1600, 3200, 6400, 12800$\}$ are considered. The number of replications for each scenario is 1000. Test errors are estimated using the average of 1000 independent observations from each class for each replication.

\subsubsection{Example 1: normals with different means}
Assume the two-class conditional densities $p \sim \n(0.5(1^\prime_{10}, 0^\prime_{d-10})^\prime, {I}_{d})$
 and $q \sim \n( 0_{d}, {I}_{d})$ where ${I}_{d}$ is the identity matrix.
At significance level $\alpha = 0.05,$ the oracle type I/II risks are
$R_0(\phi^*_{\alpha}) = 0.05$ and $R_1(\phi^*_{\alpha}) = 0.53$ respectively. 

%
%
We first evaluate the screening performance of \psns and \nsns with results presented in Table \ref{tb::screening1}. 
Both $t$-statistic (in \psn) and $D$-statistic (in \nsn) are able to  pick up most of the true signals while keeping the false positive rates at around $1-Q$.\\

\begin{table}[h]
\caption{Average screening performance summarized over 1000 independent replications at sample sizes $m=n=400$ and $Q = 0.95$ with standard errors in parentheses. \label{tb::screening1}}
\centering
\scriptsize{
\begin{tabular}{lrr|rr|rr}
&\multicolumn{2}{c|}{\# of selected features} & \multicolumn{2}{c|}{\# of missed signals}& \multicolumn{2}{c}{\# of false positive} \\
\hline
$d$ 
& \multicolumn{1}{c}{$t$-stat } & \multicolumn{1}{c|}{$D$-stat} 
& \multicolumn{1}{c}{$t$-stat } & \multicolumn{1}{c|}{$D$-stat} 
& \multicolumn{1}{c}{$t$-stat } & \multicolumn{1}{c}{$D$-stat}
\\\hline
10	
&9.11 (1.14)& 8.11 (1.63)
&0.89 (1.14)&1.89 (1.63)
&0 (0)&0 (0)	\\
100
&14.64 (3.46)&12.43 (3.38)
&0.78 (0.90)&2.00 (1.39)	
&5.43 (3.17)&4.43 (2.77)\\
1000
&59.99 (9.77)&58.82 (9.87)
&0.48 (0.66)&1.14 (1.05)
&50.47 (9.71)&49.96 (9.78)	\\\end{tabular}}
\end{table}

\vspace{-0.2in}
\begin{figure}[!h]
  \caption{Average errors of $\hat{\phi}$'s over 1000 independent replications for each combination of $(d,m,n)$.
 \label{fig::Ex1_risks}}
  \centering
\includegraphics[width=\textwidth]{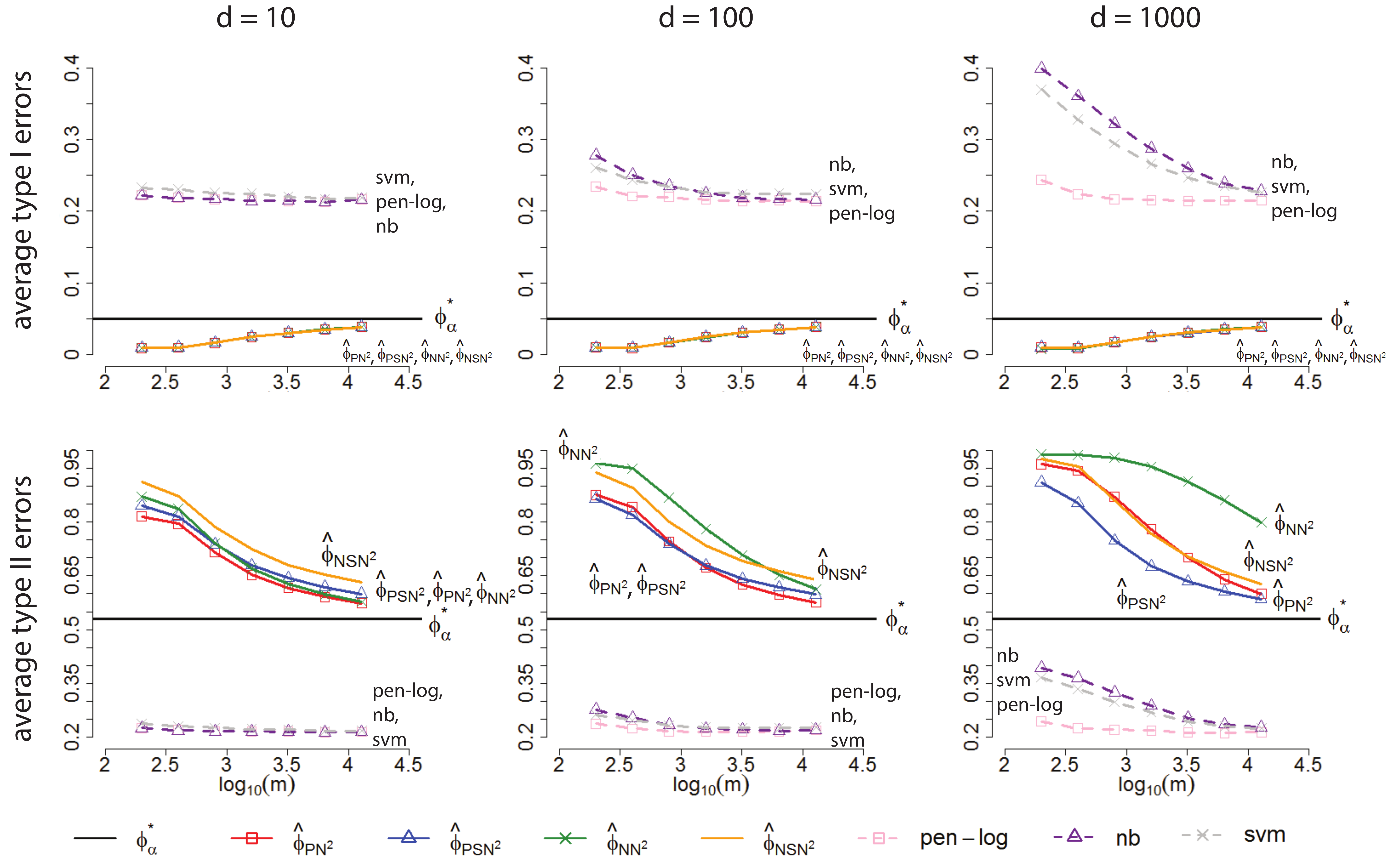}
\end{figure}

%
%

We then move on to evaluate the trend of type I and type II errors as the sample size increases in Figure \ref{fig::Ex1_risks}.  
All the Neyman-Pearson based classifiers have type I error approaching $\alpha$ from below as sample size increases and they have similar type I errors at each sample size. However, nb, pen-log and svm all lead to a type I error larger than $\alpha$.

By enlarging the second row of Figure \ref{fig::Ex1_risks}, one would observe the differences in type II errors among \pn, \psn, \nn,  \nsn. In the case of $d=10$ when all features are signals, \pns performs the best throughout all sample sizes since it assumes the correct model without the unnecessary screening substep. When sample size is small, \psns outperforms \nn, but \nns gradually catches up on larger samples. In the case of $d = 100$, screening helps \psns to take the lead at low sample sizes. The advantage of screening fades off as the sample size increases. In the case of $d=1000$, \psns dominates all other three classifiers throughout the sample size range we investigate.  

Overall, the advantage of \psns over \nsn, and \pns over \nns are uniform across all dimensions and sample sizes.
This is consistent with the intuition that when the data are from a two-class Gaussian model, the parametric methods lead to more efficient estimators than nonparametric counterparts.


\subsubsection{Example 2: normal vs. mixture normal}
Normality assumption is violated in the second example. Assume $p \sim 0.5\n( a,\Sigma) + 0.5 \n (- a, \Sigma)$ and $q \sim \n( 0_d,I_d)$,
where $ a = (\frac{3}{\sqrt{10}} 1^\prime_{10},  0^\prime_{d-10})^\prime$, 
$\Sigma = \left( \begin{array}{cc}10^{-1} I_{10} & 0\\ 0 & I_{d-10} \end{array}\right)$. At significance level $\alpha = 0.05,$ 
the oracle type I/II risks are
$R_0(\phi^*_{\alpha}) = 0.05$ and $R_1(\phi^*_{\alpha}) = 0.027$ respectively.

%
%
The performance of the screening substep of \psns and \nsns is shown in Table \ref{tb::scr2}. 
While both screening methods keep the false positive rates at around $1-Q$, the parametric screening method (\psn) with $t$-statistic misses almost all signals. This is not surprising since $t$-statistics rank features by differences in means and the two groups have exactly the same marginal mean and variance across all dimensions.

\begin{table}[h]
\caption{
Average screening performance summarized over 1000 independent replications at sample sizes $m=n=400$ and $Q = 0.95$ with standard errors in parentheses. \label{tb::scr2} 
}
\centering
\scriptsize{
\begin{tabular}{lrr|rr|rr}
&\multicolumn{2}{c|}{\# of selected features} & \multicolumn{2}{c|}{\# of missed signals}& \multicolumn{2}{c}{\# of false positive} \\
\hline
$d$ 
& \multicolumn{1}{c}{$t$-stat } & \multicolumn{1}{c|}{$D$-stat} 
& \multicolumn{1}{c}{$t$-stat } & \multicolumn{1}{c|}{$D$-stat} 
& \multicolumn{1}{c}{$t$-stat } & \multicolumn{1}{c}{$D$-stat}
\\\hline
10	 
&1.76 (1.53)&8.13 (1.83)	
& 8.24 (1.53)&1.87 (1.83)
&	0 (0) &0 (0)	\\
100  
&5.93 (3.44)&11.96 (3.57)	
& 9.38 (0.80)&2.34 (1.59) 
& 5.31 (3.17)& 4.29 (2.68)	\\
1000
&50.69 (9.60)&58.78 (9.87)
& 9.50 (0.69)&1.26 (1.04)
&50.19 (9.51)&50.04 (9.62)	\\

\end{tabular}}
\end{table}

%
%
\begin{figure}[!h]
  \caption{Average error rates of $\hat{\phi}$'s over 1000 independent replications for each combination of $(d,m,n)$.
Error rates are computed as the average of 1000 independent testing data points from each class in each replication, and then average over replications. \label{fig::Ex2_risks}}
  \centering
\includegraphics[width=\textwidth]{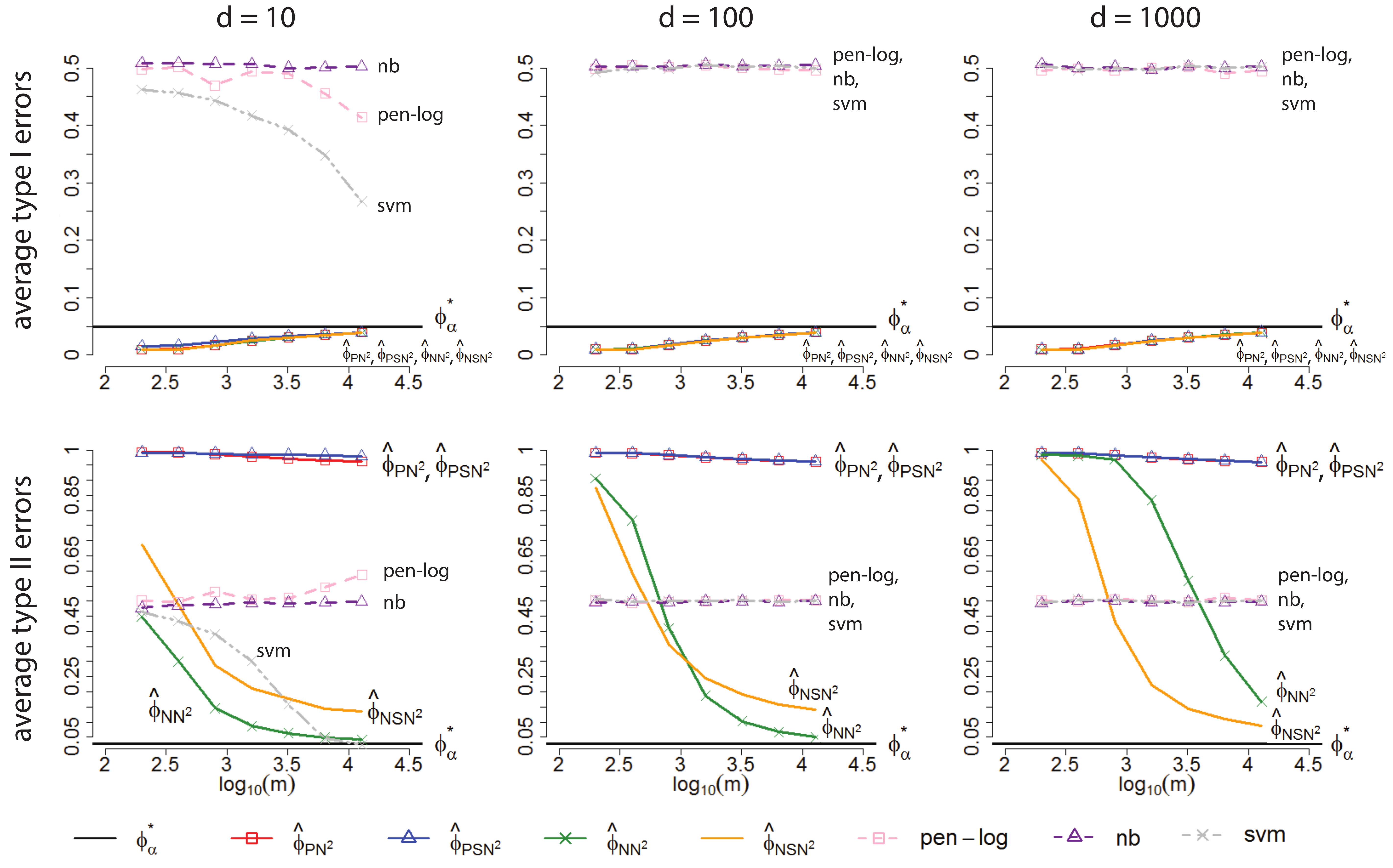}
\end{figure}

Figure  \ref{fig::Ex2_risks} presents the average error rates. 
The same reason that causes the above fiasco of $t$-statistic screening
reduces \psns and \pns to nothing more than, if not less than, two unfair random coins with probability 0.05 of landing 1, while the behaviors of nb and pen-log bear more resemblance to that of fair random coins.
This fundamental difference is due to that the classical framework aims to minimize the overall risk, and therefore tends to distribute errors evenly when the sample size for the two classes are about the same. 
The  \nsns and \nns based on nonparametric assumptions, on the other hand, perform very well on non-normal data. 
Their difference in type II error performances  are similar as in Example 1.

From the two simulation examples, it is clear that the
screening-based \nsns and \psns \\ exhibit advantages over their non-screening counterparts under 
high-dimensional settings. When the normality assumption is violated, and the sample sizes are reasonably large for efficient kernel estimates, 
\nsns prevails over \psns. 
As a rule of thumb, for high-dimensional classification problems that emphasize type I error control, we recommend \nsns if the sample size is relatively large and \psns otherwise. 

\subsection{Real data analysis}
In addition to the neuroblastoma dataset analyzed in the introduction, we now demonstrate the performance of \psns and \nsns for targeted asymmetric error control on two additional real datasets.

\subsubsection{p53 mutants dataset}
The p53 mutants dataset \citep{danziger2006functional} contains $d=5407$ attributes extracted from biophysical experiments for 16772 mutant p53 proteins, among which 143 are determined as ``active" and the rest as ``inactive" via in vivo assays.

All 143 active samples and the first $1500$ inactive samples are included in our analysis.
We treat the active class as class 0 and aimed to control the error of missing an active under $\alpha = 0.05$.
This dataset is split  into a training set with 100 observations from the active class and 1000 observations from the inactive class, and a testing set with the remaining observations. \psns is used as the representative of our proposed methods, as the class 0 sample size is small for nonparametric methods.
The average type I and type II errors over 1000 random splits are shown in Table \ref{tb::realData}. Compared with pen-log, nb and svm,
\psns performs much better in controlling the type I error. 

\begin{table}[h]
\caption{Average errors over 1000 random splits  with standard errors in parentheses. $\alpha = 0.05$, $\delta_1 = 0.05$, $Q=0.95$, and $\delta_3 = 0.1$.
 \label{tb::realData}}
\vskip 6pt
\centering
{\renewcommand{\arraystretch}{1.5}
\begin{tabular}{lrrrr}
&\multicolumn{1}{c}{\psn}
&\multicolumn{1}{c}{pen-log}
&\multicolumn{1}{c}{nb}
&\multicolumn{1}{c}{svm}\\\hline
type I  &\underline{.019 (.028)} & .162 (.060)  &.056 (.034) &.484 (.222)\\
type II &.461 (.291)&.010 (.004)&.458 (.033) &.004 (.003)
\end{tabular}
}
\end{table}

%
\subsubsection{Email spam dataset}
Now, we consider an e-mail spam dataset available at \url{https://archive.ics.uci.edu/ml/datasets/Spambase}, which contains 4601 observations with 57 features, among which 2788 are class 0 (non-spam) and 1813 are class 1 (spam).
We first standardize each feature and add 5000 synthetic features consisting of independent $\n(0,1)$ variables to make the problem more challenging.
The augmented data has $n=4601$ observations with $d=5057$ features. 
This augmented dataset is split  into a training set with 1000 observations from each class and a testing set with the remaining observations. We use \nsns since the sample size is relatively large.
The average type I and type II errors over 1000 random splits are shown in Table \ref{tb::spam}.  

To evaluate the flexibility of \nsns in terms of prioritized error control, we also  report the performance when the priority is switched to control the type II error below $\alpha=0.05$. The results in Table \ref{tb::spam} demonstrate that  \nsns is able to control either type I or type II error depending on the specific need of the practitioner.
%

\begin{table}[h]
\caption{Average errors over 1000 random splits with standard errors in parentheses. 
 $\alpha = 0.05$, $\delta_1 = 0.05$, $Q=0.95$, and $\delta_3=0.05$. The suffix after \nsns indicates the type of error it targets to control under $\alpha$.
 \label{tb::spam}}
 
\vskip 4pt

\centering
{\renewcommand{\arraystretch}{1.5}
\begin{tabular}{lrrrrr}
&\multicolumn{1}{c}{\nsn-$R_0$}
&\multicolumn{1}{c}{\nsn-$R_1$}
&\multicolumn{1}{c}{pen-log}
&\multicolumn{1}{c}{nb}
&\multicolumn{1}{c}{svm}\\\hline
type I &.\underline{019 (.007)}&.488 (.078)&.064 (.007)&.444 (.018)&.203 (.013)\\
type II &.439 (.057)&\underline{.020 (.009)}&.133 (.015)&.054 (.008)&.235 (.017)
\end{tabular}
}
\end{table}

\section{Discussion\label{sec::discussion}}

The Neyman-Pearson classification framework is an important and interesting paradigm to explore beyond the Naive Bayes models considered in this work.  For example, we can relax the independence assumption on \psn, and consider a general covariance matrix. Also, we can consider NP-type classifiers with decision boundaries involving feature interactions.  It is also worthwhile to study the non-probabilistic approaches under high-dimensional NP paradigm. Methods of potential interest include the $k$ nearest neighbor \citep{weiss2010text}  and the centroid based classifiers \citep{Tibshirani.Hastie.ea.2002, hall2010optimal}. However, the NP oracle inequalities are likely to be replaced by a new theoretical formulation for these methods.

A benefit of the present approach is that, for any given estimator $\hat{r}$, we have a uniform method to determine the proper threshold level in the plug-in classifiers.  
However, it would be interesting to develop new ways to estimate the threshold level $C_{\alpha}$ that is adaptive to the  particular method used to approximate the density ratio $r$.  
\appendix

\section{Technical Lemmas and Proofs}\label{sec::supple}
Let $\text{Bin.cdf}_{n, \,p}(\cdot)$ denote the \textsc{cdf} of Bin$(n,p)$, and $\text{Beta.cdf}_{a,\, b}(\cdot)$ denote the \textsc{cdf} of Beta$(a,b)$.
The following lemma proves a duality between the beta and binomial distributions. 

\begin{lem}[Beta-binomial duality]
\label{lem::betaBinomial}
For any $p\in[0,1]$ and $k \in \{1, \ldots, n\}$, it holds that 
\begin{eqnarray*}
1- \text{Bin.cdf}_{n, \,p}(k-1) &=& \text{Beta.cdf}_{k, \,n+1-k}\left( p\right)\,. 
\end{eqnarray*}
\end{lem}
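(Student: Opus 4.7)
The plan is to prove the identity by exhibiting a probabilistic coupling that simultaneously realizes a Binomial and a Beta random variable on the same sample space, via uniform order statistics.

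First I would introduce $U_1,\ldots,U_n$ i.i.d.\ Uniform$[0,1]$ and the associated order statistics $U_{(1)}\leq\cdots\leq U_{(n)}$. For fixed $p\in[0,1]$ I would define the counting variable $N_p = \sum_{i=1}^n \1\{U_i\leq p\}$, which is manifestly $\text{Bin}(n,p)$. The elementary but crucial observation is the event equality
\begin{equation*}
\{U_{(k)}\leq p\} \,=\, \{N_p \geq k\}\,,
\end{equation*}
since the $k$-th smallest of the $U_i$'s lies below $p$ if and only if at least $k$ of the $U_i$'s do.

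Next I would invoke the standard fact that $U_{(k)} \sim \text{Beta}(k,n+1-k)$. A self-contained justification can be obtained by computing the density of $U_{(k)}$ directly: for $t\in[0,1]$, the event $\{U_{(k)}\in[t,t+dt]\}$ corresponds to choosing which of the $n$ variables falls in the infinitesimal interval (there are $n$ ways), which $k-1$ of the remaining $n-1$ fall below $t$ (probability $t^{k-1}$), and the other $n-k$ fall above $t$ (probability $(1-t)^{n-k}$), yielding the density $\frac{n!}{(k-1)!(n-k)!} t^{k-1}(1-t)^{n-k}$, i.e., Beta$(k,n+1-k)$. Taking probabilities of both sides of the displayed event equality then gives
\begin{equation*}
\text{Beta.cdf}_{k,\,n+1-k}(p) \,=\, \p(U_{(k)}\leq p) \,=\, \p(N_p \geq k) \,=\, 1 - \text{Bin.cdf}_{n,\,p}(k-1)\,,
\end{equation*}
which is the claimed identity.

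There is no real obstacle here; the only subtlety is justifying the order-statistic distribution, which can either be quoted from any standard probability text or verified in one line by the infinitesimal argument sketched above. An alternative route would be repeated integration by parts on $\int_0^p t^{k-1}(1-t)^{n-k}\,dt$, which directly reshuffles the beta integral into the binomial tail sum, but the order-statistic coupling is cleaner and makes the identity conceptually transparent.
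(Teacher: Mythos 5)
Your proposal is correct and follows essentially the same route as the paper's proof: both use i.i.d.\ uniforms, the event equality $\{U_{(k)}\leq p\}=\{N_p\geq k\}$ with $N_p\sim\text{Bin}(n,p)$, and the fact that $U_{(k)}\sim\text{Beta}(k,n+1-k)$ (which the paper attributes to R\'enyi's representation, while you verify it by the standard density computation). No gaps.
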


\begin{proof}[Proof of Lemma \ref{lem::betaBinomial}]
Let $U_1, \ldots, U_n$ be $n$ {i.i.d.\,}Uniform$[0,1]$.
For any $p \in [0,1]$,  let $N_p = \sum_{i=1}^n \1{\{U_i \leq p\}}$ denote the number of $U_i$'s that are less or equal to $p$. 
Given
$$\p( \1{\{U_i \leq p\}} =1) \,\,=\,\, \p(U_i \leq p) \,\,=\,\, p\,,\quad \1{\{U_i \leq p\}}\,\, \sim\,\, \text{Bern}(p) \quad \forall i\,,$$
we have $N_p \sim \text{Bin}(n,p)$, and therefore
\begin{eqnarray}
\label{eq::duality1}
\p\left( N_p \geq k\right)\,\,=\,\,  1-  \p\left( N_p \leq k-1\right)\,\,=\,\, 1- \text{Bin.cdf}_{n,p}(k-1)\,.
\end{eqnarray}
On the other hand, 
let $U_{(k)}$ denote the $k$-th order statistic of $\{U_i\}_{i=1}^n$. 
It follows from the definition of order statistics that 
\begin{eqnarray}
\label{eq::duality2}
\left\{ N_p \geq k\right\}
\,\,=\,\, \left\{\text{at least $k$ of $U_1, \ldots, U_n$ are less or equal to $p$}\right\}
\,\,=\,\, \left\{ U_{(k)} \leq p\right\}\,.
\end{eqnarray} 
Combining \eqref{eq::duality1} with \eqref{eq::duality2}
yields 
\begin{eqnarray*}
1- \text{Bin.cdf}_{n,p}(k-1)
\,\,=\,\, \p\left( N_p \geq k\right) 
\,\,=\,\, \p\left( U_{(k)} \leq p\right)
\,\, =\,\, \text{Beta.cdf}_{k, \,n+1-k}\left( p\right)\,,
\end{eqnarray*}
where the last equality follows from
$U_{(k)} \sim \text{Beta}(k, n+1-k)$ $(k = 1, \ldots, n)$ 
as a direct implication of R\'{e}nyi's representation.
This completes the proof. 
\end{proof}

\begin{lem}
\label{lem::calculus}
Let $Z$ be a random variable from \textsc{cdf} $F$. 
We have
\begin{eqnarray}
\label{eq::calculus}
P_F\left\{F(Z) < \delta\right\} \,\,\leq\,\, \delta\,, \quad
P_F\left\{F(Z) > \delta\right\} \,\,\geq\,\, 1-\delta\,\,\quad 
\forall \delta \in [0,1]\,.
\end{eqnarray}
For continuous $F$, the inequality becomes equality as 
\begin{eqnarray}
\label{eq::calculus2}
P_F\left\{F(Z) < \delta\right\} \,\,=\,\, \delta\,, \quad
P_F\left\{F(Z) > \delta\right\} \,\,=\,\, 1-\delta\,\,\quad  \forall \delta \in [0,1]\,.
\end{eqnarray}. 
\end{lem}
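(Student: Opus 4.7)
The plan is to prove both inequalities via the generalized inverse (quantile function) $F^{-1}(u)=\inf\{x:F(x)\geq u\}$, leveraging the standard Galois-type relation
\[
F^{-1}(u)\leq x \,\,\Longleftrightarrow\,\, u\leq F(x),
\]
which holds for any right-continuous CDF $F$. This single identity will convert probability statements about $F(Z)$ into probability statements about $Z$, which $F$ itself controls by definition.

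First I would handle the strict inequality $\{F(Z)<\delta\}$. By the Galois relation, $F(Z)\geq\delta$ is equivalent to $Z\geq F^{-1}(\delta)$, and so $\{F(Z)<\delta\}=\{Z<F^{-1}(\delta)\}$. Therefore
\[
P_F\{F(Z)<\delta\}\,=\,P_F\{Z<F^{-1}(\delta)\}\,=\,F(F^{-1}(\delta)^-)\,\leq\,\delta,
\]
where the last inequality uses the defining property of $F^{-1}(\delta)$: for every $x<F^{-1}(\delta)$ one has $F(x)<\delta$, hence the left limit $F(F^{-1}(\delta)^-)\leq\delta$. Next, for $\{F(Z)>\delta\}$, I would take a sequence $u_k\downarrow\delta$ with $u_k>\delta$ and write
\[
P_F\{F(Z)>\delta\}\,=\,\lim_{k\to\infty}P_F\{F(Z)\geq u_k\}\,=\,\lim_{k\to\infty}P_F\{Z\geq F^{-1}(u_k)\}\,\geq\,\lim_{k\to\infty}(1-u_k)\,=\,1-\delta,
\]
using continuity of probability from below together with the same left-limit argument applied to the complementary event.

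For the continuous case, the extra structure gives $F(F^{-1}(\delta))=\delta$ and $F(F^{-1}(\delta)^-)=F(F^{-1}(\delta))=\delta$ whenever $\delta\in[0,1]$ lies in the range of $F$; moreover, $P_F\{F(Z)=\delta\}=0$ because $F(Z)\sim\mathrm{Uniform}[0,1]$ (the classical probability integral transform). Consequently both bounds collapse to equalities, yielding \eqref{eq::calculus2}.

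The only thing to be careful about is the non-continuous case: one must distinguish $F(F^{-1}(\delta)^-)$ from $F(F^{-1}(\delta))$ and avoid conflating $\{F(Z)<\delta\}$ with $\{F(Z)\leq\delta\}$, since jumps of $F$ may produce atoms of $F(Z)$ at values $\delta$. The Galois relation above sidesteps this issue cleanly, so I do not anticipate any genuine obstacle; the argument is essentially a bookkeeping exercise in the quantile transform.
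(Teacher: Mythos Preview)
Your argument is correct and essentially coincides with the paper's: your quantile $F^{-1}(\delta)=\inf\{x:F(x)\geq\delta\}$ is precisely the paper's $t_1$, and the left-limit bound $F(F^{-1}(\delta)^-)\leq\delta$ is exactly what the paper uses. The only cosmetic difference is in the second inequality, where the paper introduces a separate point $t_2=\inf\{t:F(t)>\delta\}$ and argues directly, whereas you recycle the first inequality via a limit $u_k\downarrow\delta$; both routes are valid and equally short.
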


\begin{proof}
Let $t_1 = \min\left\{ t: F(t) \geq \delta \right\}$.  Given the right continuity of $F$, it can be easily proved by contradiction that  i) $F(t_1-) = F(t_1) = \delta$ if $F$ is continuous at $t_1$, and ii) $F(t_1-) < \delta \leq F(t_1)$ if $F$ is discontinuous at $t_1$.
Thus, 
$$P_F\{F(Z) < \delta\} \,=\, P_F(Z <  t_1) \,=\, F(t_1-) \,\leq\, \delta\,.$$
Likewise, let $t_2 =\inf\left\{ t: F(t) > \delta \right\}$. We have i) $F(t_2-) = F(t_2) = \delta$ if $F$ is continuous at $t_2$, and ii) $F(t_2-) < \delta \leq F(t_2)$ if $F$ is discontinuous at $t_2$. 
As a result, 
$$
P_F\{ F(Z) > \delta\} \,=\, P_F\left\{ Z \geq t_2 \right\} \,=\, 1 - P_F\left\{ Z < t_2 \right\} \,\geq\, 1- \delta\,.$$
This completes the proof. 
\end{proof}

\begin{lem}
\label{lem::new}
Let $\mathcal{S} = \{Z_i\}_{i=1}^n$ be a set $n$ {i.i.d.\,}random variables from distribution $F$, and let $Z_{(k)}$ denote its $k$-th order statistic $(k = 1, \ldots, n)$. 
For any $\delta \in (0,1)$, 
the probability of a new, independent realization $Z$ from $F$ to be greater than $Z_{(k)}$ satisfies
\begin{eqnarray}
&&\p\left\{ P_F\left( Z > Z_{(k)} \mid \mathcal{S} \right) > \delta \right\}
\,\leq\, 
1- \text{Bin.cdf}_{n, 1-\delta}(k-1)\,,\label{eq::wangzheng}\\
&&\p\left\{ P_F\left( Z > Z_{(k)} \mid \mathcal{S} \right) < \delta \right\}
\,\geq\, 
1- \text{Bin.cdf}_{n,\delta}(n-k) \,=\,\text{Bin.cdf}_{n,1-\delta}(k-1)\label{eq::wangzheng2}\,.
\end{eqnarray}
The inequalities become equalities if $F$ is continuous.
\end{lem}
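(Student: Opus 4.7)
The key observation is that the conditional probability simplifies nicely: since $Z \sim F$ is independent of $\mathcal{S}$,
$$P_F\bigl(Z > Z_{(k)} \mid \mathcal{S}\bigr) \,=\, 1 - F(Z_{(k)})\,,$$
so the two events of interest, $\{P_F(Z>Z_{(k)}\mid \mathcal{S}) > \delta\}$ and $\{P_F(Z>Z_{(k)}\mid \mathcal{S}) < \delta\}$, are respectively $\{F(Z_{(k)}) < 1-\delta\}$ and $\{F(Z_{(k)}) > 1-\delta\}$. The whole proof reduces to controlling the law of the random variable $F(Z_{(k)})$.

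For the easy continuous case that yields equality, I would note that $U_i := F(Z_i)$ are i.i.d.\ Uniform$[0,1]$ and $F(Z_{(k)}) = U_{(k)} \sim \mathrm{Beta}(k, n+1-k)$ by R\'enyi's representation. Thus
$$\p\{F(Z_{(k)}) < 1-\delta\} \,=\, \text{Beta.cdf}_{k, n+1-k}(1-\delta)\,,$$
and an application of the beta-binomial duality (Lemma \ref{lem::betaBinomial}) turns the right-hand side into $1 - \text{Bin.cdf}_{n, 1-\delta}(k-1)$, giving \eqref{eq::wangzheng} with equality; the complementary event handles \eqref{eq::wangzheng2}.

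For general (possibly atomic) $F$, I would use the generalized inverse $t_p = \inf\{t : F(t) \geq p\}$ with $p = 1-\delta$. Right-continuity and monotonicity give $F(t) < p \Longleftrightarrow t < t_p$, so
$$\{F(Z_{(k)}) < 1-\delta\} \,=\, \{Z_{(k)} < t_{1-\delta}\} \,=\, \Bigl\{\textstyle\sum_{i=1}^n \1\{Z_i < t_{1-\delta}\} \geq k\Bigr\}\,.$$
The right-hand indicator sum is Binomial with parameters $(n, F(t_{1-\delta}^-))$, and by definition of $t_{1-\delta}$ we have $F(t_{1-\delta}^-) \leq 1-\delta$. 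Stochastic monotonicity of Bin$(n,\cdot)$ in the success probability then yields
$$\p\{F(Z_{(k)}) < 1-\delta\} \,\leq\, \p\{\text{Bin}(n, 1-\delta) \geq k\} \,=\, 1 - \text{Bin.cdf}_{n, 1-\delta}(k-1)\,,$$
which proves \eqref{eq::wangzheng}. The second inequality \eqref{eq::wangzheng2} follows by the symmetric argument: writing $\{F(Z_{(k)}) > 1-\delta\}^c = \{F(Z_{(k)}) \leq 1-\delta\}$, expressing this complement as a binomial tail via $s_{1-\delta} = \sup\{t: F(t)\leq 1-\delta\}$, and reusing stochastic monotonicity in the opposite direction.

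\textbf{Main obstacle.} The only delicate point is the bookkeeping around atoms of $F$: one has to match strict versus non-strict inequalities carefully when converting $\{F(Z_{(k)}) \lessgtr p\}$ into events about $\{Z_{(k)} \lessgtr t_p\}$, and verify that the success probability of the resulting Bernoulli trials lies on the correct side of $1-\delta$. Once that is handled, the argument is purely combinatorial and invokes Lemma \ref{lem::betaBinomial} at the end to convert the Bin$(n, 1-\delta)$ tail into the Beta cdf form.
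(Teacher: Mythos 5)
Your proof is correct and follows essentially the same route as the paper's: reduce the event to $\{F(Z_{(k)}) \lessgtr 1-\delta\}$, recognize it as a binomial exceedance event, and use monotonicity of the binomial tail in the success probability, which is at most (resp.\ at least) $1-\delta$. The only cosmetic difference is that you do the atom bookkeeping on the original scale via the generalized inverse $t_{1-\delta}$ and $F(t_{1-\delta}^-)$, whereas the paper counts the transformed variables $\1\{F(Z_i)<1-\delta\}$ and delegates the same bound on the success probability to its Lemma \ref{lem::calculus}; both are valid, and your Beta/duality treatment of the continuous equality case matches how the paper uses Lemma \ref{lem::betaBinomial} downstream.
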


\begin{proof}[Proof of Lemma  \ref{lem::new}.]
Rewrite the left-hand side of \eqref{eq::wangzheng} as
\begin{eqnarray}
\label{eq::LHS}
&&\p\left\{ P_F\left( Z > Z_{(k)} \mid \mathcal{S} \right) > \delta \right\}
\,\,=\,\,
\p\left\{ 1- 
P_F\left( Z \leq Z_{(k)}  \mid \mathcal{S} \right) > \delta \right\}\nonumber\\
&=&
\p\left\{ 1 - F( Z_{(k)}) > \delta \right\}\,\,=\,\,
\p\left\{ F( Z_{(k)}) < 1- \delta \right\}.
\end{eqnarray}
To bound the probability of $\{F( Z_{(k)}) < 1- \delta\}$, 
let $N_{1-\delta} \,\,=\,\, \sum_{i=1}^n \1_{\{F(Z_i) < 1-\delta\}}$ denote the number of  $F(Z_i)$'s that are less than $1-\delta$.
It follows from 
$F(Z_{(1)}) \leq F(Z_{(2)})\leq \ldots\leq F(Z_{(n)})$ that
\begin{eqnarray}
\label{eq::equiv}
&&\left\{ F( Z_{(k)}) < 1- \delta \right\} \,\,=\,\, \left\{ F( Z_{(i)}) < 1- \delta\,,\,\, i = 1,\ldots, k \right\}
\,=\, \left\{ N_{1-\delta} \geq k \right\}\nonumber
\,,\\
&&\p\left\{ F( Z_{(k)}) < 1- \delta \right\}\,\,=\,\,\p\left( N_{1-\delta} \geq k \right)\,.
\end{eqnarray}
Let $\tau = P_F\left\{F(Z_1) < 1-\delta\right\}$ denote the success probability of $N_{1-\delta}$ as a binomial. 
It follows from \eqref{eq::calculus} that $\tau \leq 1-\delta$.
Given $\text{Bin.cdf}_{n,p}(k-1)$ being decreasing in $p$ for any fixed $n$ and $k$, we have
\begin{eqnarray}
\label{eq::binomial}
\p\left( N_{1-\delta} \geq k \right)
\,\,=\,\, 1 - \text{Bin.cdf}_{n,\tau}(k-1) \,\,\leq 1- \text{Bin.cdf}_{n, \,1-\delta}(k-1)
\end{eqnarray}
as a result of 
The equalities hold for continuous $F$. 
Connecting \eqref{eq::LHS}, \eqref{eq::equiv}, and \eqref{eq::binomial} together yields 
\begin{eqnarray*}
\p\left\{ P_F\left( Z > Z_{(k)} \mid \mathcal{S} \right) > \delta \right\}
&=&
\p\left\{ F( Z_{(k)}) < 1- \delta \right\}
\,\,=\,\,
\p\left( N_{1-\delta} \geq k \right) \\
&\leq&1 - \text{Bin.cdf}_{n,\,1-\delta}(k-1)\,. 
\end{eqnarray*}
Likewise, 
let $M_{1-\delta} \,\,=\,\, \sum_{i=1}^n \1_{\{F(Z_i) > 1-\delta\}}$ be a binomial random variable with size $n$ and success rate $\tau^\prime \,=\,P_F\{F(Z_i) > 1-\delta\}\,\geq \, \delta$ that represents the number of  $F(Z_i)$'s that are greater than $1-\delta$.
The left-hand side of \eqref{eq::wangzheng2} can be rewritten as
\begin{eqnarray}
\label{eq::LHS2}
&&\p\left\{ P_F\left( Z > Z_{(k)} \mid \mathcal{S} \right)< \delta \right\}
\,=\,
\p\left\{ F( Z_{(k)}) > 1- \delta \right\}\nonumber\\
&=&
\p\left\{ F( Z_{(i)})  > 1- \delta\,,\,\, i = k, \ldots, n \right\}
\,=\, \p\left\{ M_{1-\delta} \geq n+1-k \right\}\nonumber\\
&=&1 - \p\left\{ M_{1-\delta} \leq n-k \right\} \,=\, 1-\text{Bin.cdf}_{n,\tau^\prime}(n-k)\nonumber\\
&\geq& 1-\text{Bin.cdf}_{n,\,\delta}(n-k).
\end{eqnarray}
This completes the proof. 
\end{proof}

\begin{proof}[Proof of Proposition \ref{prop::general delta}]
Letting $Z_i = \hat{r}_i$, $n = m_3$ in Lemma \eqref{lem::new} yields
\begin{eqnarray*}
\p\{R_0 ( \hat{\phi}_k)> \delta\} &\leq& 
1- \text{Bin.cdf}_{m_3,\,1-\delta}(k-1)\,.
\end{eqnarray*}
This, together with Lemma \ref{lem::betaBinomial}, completes the proof. 
\end{proof}

%

\begin{lem}
\label{lem::Beta}
For random variable $Z \sim \text{Beta}(a,b),$ 
and any $\epsilon > 0,$ 
we have
\begin{align}
\p\{ Z > (1+\epsilon)\E Z\} < 
\p( \left| Z - \E Z \right| > \epsilon \E Z ) < \frac{b\epsilon^{-2}}{(a+b+1)a}.
\end{align}
\end{lem}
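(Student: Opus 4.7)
The plan is to handle the two inequalities separately. The first is essentially a tautology: since $\{Z > (1+\epsilon)\E Z\}$ is a subset of $\{Z - \E Z > \epsilon \E Z\} \subseteq \{|Z - \E Z| > \epsilon \E Z\}$, monotonicity of probability gives the first bound immediately (strict inequality holding whenever $\p(Z < (1-\epsilon)\E Z) > 0$, which is true for $Z$ supported on $[0,1]$ with nondegenerate Beta distribution).

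For the second inequality, the natural tool is Chebyshev's inequality applied with the known moments of the Beta distribution. I would recall that for $Z \sim \text{Beta}(a,b)$,
\begin{equation*}
\E Z \,=\, \frac{a}{a+b}, \qquad \text{Var}(Z) \,=\, \frac{ab}{(a+b)^2(a+b+1)}.
\end{equation*}
Chebyshev then yields
\begin{equation*}
\p(|Z - \E Z| > \epsilon \E Z) \,\leq\, \frac{\text{Var}(Z)}{(\epsilon \E Z)^2} \,=\, \frac{ab/[(a+b)^2(a+b+1)]}{\epsilon^2 a^2/(a+b)^2} \,=\, \frac{b}{\epsilon^2 \, a\,(a+b+1)},
\end{equation*}
which is exactly the target bound. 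Strict inequality in Chebyshev follows from the usual argument (the Markov step applied to the nonnegative random variable $(Z-\E Z)^2$ is strict whenever the random variable is not almost surely equal to the threshold, which is the generic situation here).

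There is no real obstacle; the only thing to be careful about is making sure the strict inequalities are justified rather than just the weak $\leq$ versions that Chebyshev delivers out of the box. A clean way to do this is to observe that the second event has positive probability mass strictly below the Chebyshev bound (e.g., by noting that the Beta density is continuous and the event is open), so the inequality can be written as strict.
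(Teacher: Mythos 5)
Your proof is correct and takes essentially the same route as the paper's: the paper's entire proof is the one-line Chebyshev bound $\p(|Z-\E Z| > \epsilon\E Z) \leq \mathrm{Var}(Z)/(\epsilon\E Z)^2 = \frac{b\epsilon^{-2}}{(a+b+1)a}$ using the Beta mean $a/(a+b)$ and variance $ab/\{(a+b)^2(a+b+1)\}$, with the first inequality and strictness left unremarked. One minor caveat: your justification that the left inequality is strict because $\p\{Z < (1-\epsilon)\E Z\} > 0$ fails when $\epsilon \geq 1$ (that lower-tail event is empty since $Z \geq 0$ almost surely), but this looseness is already in the paper's statement and is immaterial to how the lemma is applied.
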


\begin{proof}[Proof of Lemma \ref{lem::Beta}]
By Chebyshev inequality,
\begin{eqnarray*}
\p ( \left| Z - \E Z\right| > \epsilon \E Z )
\leq \frac{\text{var}(Z)}{ (\epsilon\E Z)^2}
= \frac{ab}{(a+b)^2(a+b+1)}\left(\frac{\epsilon a}{a+b}\right)^{-2}
= \frac{b\epsilon^{-2}}{(a+b+1)a}.
\end{eqnarray*}
\end{proof}



\begin{proof}[Proof of Proposition \ref{prop::general k}]
Let $B$ be a realization from Beta$(k,m_3+1-k)$. 
It follows from Proposition \ref{prop::general delta} that
\begin{eqnarray*}
\p\{ R_0(\hat{\phi}_k)> g(\delta_3, m_3, k) \} 
&\leq& \text{Beta.cdf}_{k, m_3+1 -k}\left\{1-g(\delta_3, m_3, k)\right\}\\
&=& \p\{ B \leq  1 - g(\delta_3, m_3, k) \}\,\,=\,\, \p\{ 1- B \geq g(\delta_3, m_3, k)\}
\end{eqnarray*}
for any $k \in \{1,\cdots,m_3\}$ and $\hat r$, with $1-B \sim \text{Beta}(m_3+1-k,k)$. 
Letting $a = m_3+1-k$, $b =k$, and $\epsilon =  k^{1/2}\{\delta_3(m_3+2)(m_3 + 1 -k)\}^{-1/2}$ in Lemma \ref{lem::Beta} yields
\begin{eqnarray*}
\p\{R_0(\hat{\phi}_k) > g(\delta_3, m_3, k)\} \,\leq\, \delta_3\,,
\end{eqnarray*}
where $$g(\delta_3, m_3, k) = (1+\epsilon)\left( \frac{m_3+1-k}{m_3+1}\right)  = 
\frac{ m_3+1-k}{m_3+1} +  
\sqrt{\frac{k(m_3+1-k)}{\delta_3(m_3+2)(m_3+1)^2}}\,.$$
This completes the proof.

\end{proof}

\begin{proof}[Proof of Proposition \ref{prop::kmin}]
By some basic algebra we have
\begin{eqnarray*}
&&A_{\alpha,\delta_3}(m_3) -(1-\alpha) 
=
\frac{-1+2\alpha + \sqrt{1+4\delta_3(1-\alpha)\alpha(m_3+2)}}{2\left\{\delta_3(m_3+2)+1 \right\}}
> 0\,,\\
&&A_{\alpha,\delta_3}(m_3) -1 
=
\frac{-1-2\delta_3(m_3+2)\alpha + \sqrt{1+4\delta_3(1-\alpha)\alpha(m_3+2)}}{2\left\{\delta_3(m_3+2)+1 \right\}}
< 0\,,
\end{eqnarray*}
and
\begin{eqnarray*}
&&g(\delta_3, m_3, k) = \frac{m_3+1-k}{m_3+1} + \sqrt{\frac{k(m_3+1-k)}{\delta_3(m_3+2)(m_3+1)^2}} \leq \alpha\\
&\Leftrightarrow& 
\left\{ 
\begin{array}{ll}
k-(1-\alpha)(m_3+1) \geq 0,\\\\
\left\{\delta_3(m_3+2)+1 \right\} \left(\frac{k}{m_3+1}\right)^2 - 
\left\{1+2\delta_3(m_3+2)(1-\alpha)\right\}\left(\frac{k}{m_3+1}\right) \\+
\delta_3(m_3+2)(1-\alpha)^2 \geq 0
\end{array}
\right.
\\\\
&\Leftrightarrow&
k \geq (m_3+1) \max\left\{
1-\alpha, A_{\alpha,\delta_3}(m_3)
 \right\}
\\
&\Leftrightarrow&
k \geq (m_3+1)A_{\alpha,\delta_3}(m_3)\,.
\end{eqnarray*}
Thus,
\begin{align*}
k_{\min}(\alpha,\delta_3,m_3) &= \left\lceil (m_3+1)A_{\alpha,\delta_3}(m_3)\right\rceil \\
&\in \left[(m_3+1)A_{\alpha,\delta_3}(m_3),  (m_3+1)A_{\alpha,\delta_3}(m_3) +1\right]\,.
\end{align*}
Since
$A_{\alpha,\delta_3}(m_3) \to 1-\alpha,$ as $m_3 \to\infty,$
it follows from sandwich rule that
\begin{equation*}
\lim_{m_3\to\infty}\frac{k_{\min}(\alpha,\delta_3,m_3)}{m_3}\,=\, \lim_{m_3\to\infty} A_{\alpha,\delta_3}(m_3) \,=\, 1 - \alpha\,.
\end{equation*}
We have
$k_{\min}(\alpha, \delta_3, m_3) \in \mathcal{K}(\alpha, \delta_3, m_3)$ 
$\left( \Leftrightarrow k_{\min}(\alpha, \delta_3, m_3) \leq m_3\right)$ as long as 
\begin{equation}
\label{eq::target}
(m_3+1)A_{\alpha, \delta_3}(m_3)  + 1 \,\leq\, m_3 \quad \Leftrightarrow \quad \left( 1-\alpha \,\leq\, \right)\,\,  A_{\alpha, \delta_3}(m_3) \,\leq\, \frac{m_3 - 1}{m_3+1} \,.
\end{equation}
For any $\Delta \in (0, \alpha)$,  a sufficient condition for \eqref{eq::target} is
\begin{eqnarray*}
\label{eq::simplified}
\frac{m_3 - 1}{m_3+1} \geq 1-\Delta, \quad A_{\alpha, \delta_3}(m_3)\leq 1-\Delta\,,
\end{eqnarray*}
which can be further simplified as
\begin{align*}
m_3 \geq \frac{2}{\Delta}-1, \quad m_3 \geq x^* - 2\,,
\end{align*}
where 
\begin{eqnarray*}
x^* &=& 
\frac{-2\Delta^2 - \alpha^2 + 2\alpha\Delta +\Delta  +(1-2\alpha)\Delta + \alpha^2 }{2(\alpha-\Delta)^2 \delta_3}=\frac{\Delta(1-\Delta)}{(\alpha-\Delta)^2 \delta_3}
\end{eqnarray*}
is the positive root of the quadratic equation
$$
 (\alpha-\Delta)^2\delta_3^2x^2 +\delta_3\left(2\Delta^2 + \alpha^2 - 2\alpha\Delta - \Delta \right) x
- \Delta(1-\Delta) = 0\,.
$$  
Thus,  a sufficient condition for \eqref{eq::target} is
\begin{equation*}
m_3 \geq \max\left\{ \frac{\Delta(1-\Delta)}{(\alpha-\Delta)^2 \delta_3} -2, \,\frac{2}{\Delta} - 1  \right\}\,.
\end{equation*}
Setting $\Delta= \alpha/2$ yields
\begin{equation*}
 \max\left\{ \frac{\Delta(1-\Delta)}{(\alpha-\Delta)^2}\cdot\frac{1}{\delta_3}-2, \frac{2}{\Delta} - 1  \right\} = \max\left\{ \frac{2-\alpha}{\alpha\delta_3}-2, \frac{4}{\alpha} - 1  \right\} \leq \frac{4}{\alpha\delta_3} \,.
\end{equation*}
Therefore, $m_3 \geq 4/({\alpha\delta_3})$ guarantees \eqref{eq::target} and $k_{\min}(\alpha, \delta_3, m_3) \in \mathcal{K}(\alpha, \delta_3, m_3)$. 
This completes the proof. 
\end{proof}

\begin{proof}[Proof of Lemma \ref{prop::R0}]
Introduce shorthand notation let $A=A_{\alpha,\delta_3}(m_3)$ (defined in Proposition \ref{prop::kmin}) and $\alpha_1=(m_3+1-k_{\min})/(m_3+1)$ for simplicity of exposition. 
For any $B_1,B_2 \in \R^+$, we have
\begin{eqnarray*}
\{ 
|R_0( \hat{\phi} ) 
- \alpha | > B_1 + B_2 \}
\,\subset\,
\{ 
| R_0( \hat{\phi} ) 
- \alpha_1| > B_1\}
\cup
\left\{ 
\left| \alpha_1
- \alpha \right| > B_2 \right\},
\end{eqnarray*}
and thus
\begin{align}
&\p\{
|R_0( \hat{\phi} ) 
- \alpha| > B_1 + B_2 \mid \hat \Lr\} \nonumber\\
&\leq\,
\p\{
| R_0( \hat{\phi} ) 
- \alpha_1| > B_1 \mid \hat \Lr\}
+
\p\left(
\left| \alpha_1
- \alpha \right| > B_2\mid \hat \Lr\right) \nonumber\\
&\leq\, 
\frac{\kHat(m_3+1 - \kHat)}{(m_3+2)(m_3+1)^2}B_1 ^{-2} +
\1\left\{
\left| \alpha_1
- \alpha \right| > B_2 \right\},
\label{eq::distBound}
\end{align}
where the last inequality follows from applying Lemma \ref{lem::Beta} 
to $R_0(\hat{\phi} )$ which follows $\text{Beta}(m_3+1-\kHat, \kHat)$ for $m_3\geq 4/(\alpha\delta_3)$ and continuous $F_{\hat r}$ due to Lemma \ref{lem::new}.
It follows from Proposition \ref{prop::kmin} that
\begin{equation}
\label{eq::limA}
\left| \alpha - \alpha_1  \right| 
\,\leq\, A - (1-\alpha) + \frac{1}{m_3+1}\,.
\end{equation}
Letting $B_1 = \sqrt{\frac{\kHat(m_3+1-\kHat)}{(m_3+2)(m_3+1)^2\delta_4}}$ and $B_2 = A - (1-\alpha)+\frac{1}{m_3+1}$ in \eqref{eq::distBound} yields
\begin{align*}
&\p\{
|R_0( \hat{\phi} ) 
- \alpha| > \xi_{\alpha, \delta_3,m_3}(\delta_4) \mid \hat \Lr\}\\
&\leq\,
\delta_4
+
\1\{
\left| \alpha_1
- \alpha \right| > A - (1-\alpha)+\frac{1}{m_3+1}\}\\
&=\, \delta_4
\end{align*}
for any arbitrary $\hat{r}$.
This, together with the independence between $\mathcal{S}^0_3$ and $\hat r$ (as a function of $(\mathcal{S}^0_1, \mathcal{S}^1_1, \mathcal{S}^0_2, \mathcal{S}^1_2)$) yields
\begin{eqnarray*}
\p\{
|R_0( \hat{\phi} ) 
- \alpha | >\xi_{\alpha, \delta_3,m_3}(\delta_4) \}
\,\leq\,\delta_4\,.
\end{eqnarray*}
To establish an upper bound for $\xi_{\alpha, \delta_3,m_3}(\delta_4)$, note that  
\begin{align*}
&\xi_{\alpha, \delta_3,m_3}(\delta_4)\\
&= \sqrt{\frac{\kHat(m_3+1-\kHat)}{(m_3+2)(m_3+1)^2\delta_4}} + \frac{-1+2\alpha + \sqrt{1+4\delta_3(1-\alpha)\alpha(m_3+2)}}{2\left\{\delta_3(m_3+2)+1 \right\}}+ \frac{1}{m_3+1}\\
&\leq \sqrt{\frac{(m_3+1)^2/4}{(m_3+2)(m_3+1)^2\delta_4}} 
+ \frac{1}{2\left\{\delta_3(m_3+2)+1 \right\}}
+ \frac{\sqrt{1+\delta_3(m_3+2)}}{2\left\{\delta_3(m_3+2)+1 \right\}}+ \frac{1}{m_3+1}\\
&< \frac{1}{2\sqrt{m_3\delta_4}} 
+ \frac{1}{2m_3\delta_3}
+ \frac{1}{2\sqrt{m_3\delta_3}}+ \frac{1}{m_3}\,.
\end{align*}
When $m_3 \geq \max(\delta_3^{-2}, \delta_4^{-2})$, we have
\begin{align*}
\xi_{\alpha, \delta_3,m_3}(\delta_4) 
&<\, \frac{1}{2m_3^{1/4}} + \frac{1}{2m_3^{1/2}} + \frac{1}{2m_3^{1/4}}+ \frac{1}{m_3}\\
&=\,  \frac{1}{m_3^{1/4}}\left( 1 + \frac{1}{2m_3^{1/4}} + \frac{1}{m_3^{3/4}} \right)
<  \frac{5/2}{ m_3^{1/4}} \,=\,\left( \frac{2}{5} m_3^{1/4}\right)^{-1}.
\end{align*}
This completes the proof. 
\end{proof}

\begin{proof}[Proof of Proposition \ref{prop::R1}]
 
Let $G^*=\{r <C_{\alpha}\}$ and $\widehat{G}=\{\hat r <\widehat{C}_{\alpha}\},$ 
the excess type II error can be decomposed as:
\begin{align}
\label{eq::decomposition}
&P_1(\widehat{G}) - P_1(G^*) \nonumber\\
&= \int_{\widehat{G}} d P_1 - \int_{G^*} d P_1 = \int_{\widehat{G}} \frac{p}{q}\, dP_0 - \int_{G^*} \frac{p}{q} \, d P_0\nonumber\\
&= \int_{\widehat{G}}( r - C_{\alpha}) dP_0 + C_{\alpha}P_0(\widehat{G}) 
- \int_{G^*} ( r-C_{\alpha})  d P_0 - C_{\alpha}P_0(G^*)\nonumber\\
&= \int_{\widehat{G}  \backslash G^*} (r - C_{\alpha}) dP_0 - \int_{G^* \backslash \widehat{G}}(r - C_{\alpha}) d P_0 + C_{\alpha}\{P_0(\widehat{G}) - P_0(G^*)\}\nonumber\\
&= \int_{\widehat{G} \backslash G^*} \left| r - C_{\alpha}\right| dP_0 + \int_{G^* \backslash \widehat{G}} \left| r - C_{\alpha}\right| d P_0+ C_{\alpha}\{R_0(\phiStar) - R_0(\hat{\phi})\}\,.
\end{align}
It follows from Lemma \ref{prop::R0} that 
when $m_3 \geq \max\{ \frac{4}{\alpha\delta_3}, \delta_3^{-2}, \delta_4^{-2}, (\frac{2}{5} M_1{\delta^*}^{\uderbar\gamma})^{-4}\}$,
\begin{eqnarray*}
\xi_{\alpha,\delta_3,m_3}(\delta_4)  \leq \frac{5}{2}m_3^{-1/4} \leq {M_1}(\delta^*)^{\uderbar\gamma}\,, \quad
\left\{\frac{\xi_{\alpha,\delta_3,m_3}(\delta_4)}{M_1}\right\}^{1/\uderbar\gamma} \leq \delta^*\,.
\end{eqnarray*}
Introduce shorthand notations $\Delta R_0 = |R_0(\phiStar) - R_0(\hat{\phi})|$, $\mathcal{E}_0 = \{\Delta R_0 < \xi_{\alpha,\delta_3,m_3}(\delta_4)\}$, and $T = \|\hat\Lr - \Lr\|_{\infty}$. 
On the event $\mathcal{E}_0$, 
\begin{eqnarray*}
\left(\frac{\Delta R_0}{M_1}\right)^{1/\uderbar\gamma} 
\,\le\, \left\{\frac{\xi_{\alpha,\delta_3,m_3}(\delta_4)}{M_1}\right\}^{1/\uderbar\gamma} \,\le\,  \delta^*\,.
\end{eqnarray*}
By the detection condition, we have 
\begin{eqnarray*}
\Delta R_0
\,\leq\, P_0\{C_{\alpha}<r(X)<C_{\alpha}+(\Delta R_0/M_1)^{1/\uderbar\gamma}\} \,.
\end{eqnarray*}
Note that 
\begin{align*}
P_0\{r(X)\geq C_{\alpha}+(\Delta R_0/M_1)^{1/\uderbar\gamma}\} 
&=\,
R_0(\phi^*)  - P_0\{C_{\alpha}<r(X)<C_{\alpha}+(\Delta R_0/M_1)^{1/\uderbar\gamma} \}\\
&\leq\, R_0(\phi^*) - \Delta R_0\\  
&\leq\, R_0(\hat{\phi})  \,\,=\,\,  P_0\{\hat \Lr(X) > \widehat{C}_{\alpha}\}  \\
&\leq\, P_0\{ r(X) + T \ge \widehat{C}_{\alpha}\}\,\,=\,\, P_0\{r(X)\geq \widehat{C}_\alpha-T\}\,.
\end{align*}
Thus, we have $\widehat{C}_\alpha 
\leq C_{\alpha}+(\Delta R_0 /M_1)^{1/\uderbar\gamma} + T$, and 
\begin{align*}
\widehat{G} \backslash G^*  
&= \,\{ \Lr\geq C_{\alpha}, {\hat \Lr} < \widehat{C}_{\alpha}\} 
=  \{ \Lr\geq C_{\alpha}, {\hat \Lr} < 
{C}_{\alpha}+(\Delta R_0 /M_1)^{1/\uderbar\gamma} + T\} \cap \{{\hat \Lr} < \widehat{C}_{\alpha}\}\\
=&\,\,\{ 
{C}_{\alpha}+(\Delta R_0 /M_1)^{1/\uderbar\gamma} + 2T \geq \Lr \geq C_{\alpha}, {\hat \Lr} < {C}_{\alpha}+(\Delta R_0 /M_1)^{1/\uderbar\gamma} + T\}\cap \{{\hat \Lr} < \widehat{C}_{\alpha}\}\\
\subset&\,\, \{ {C}_{\alpha}+(\Delta R_0 /M_1)^{1/\uderbar\gamma} + 2T \geq \Lr\geq C_{\alpha}\}\,.
\end{align*}
Therefore, the margin assumption implies
\begin{align*}
P_0(\widehat{G} \backslash G^*) \,\,
&\leq \,P_0 \{ {C}_{\alpha}+(\Delta R_0 /M_1)^{1/\uderbar\gamma} + 2T \geq \Lr\geq C_{\alpha} \} \\
&\leq \,M_0  \{ (\Delta R_0 /M_1)^{1/\uderbar\gamma} + 2T\}^{\bar\gamma}\,.
\end{align*}
Hence on the event $\mathcal{E}_0$,
\begin{align*}
\int_{\widehat{G} \backslash G^*} \left| r- C_{\alpha}\right|{d}P_0 \,\,
&\leq \,\{(\Delta R_0 /M_1)^{1/\uderbar\gamma} + 2T\} P_0(\widehat{G} \backslash G^*) \\
&\leq\, M_0 \{ (\Delta R_0 /M_1)^{1/\uderbar\gamma} + 2T\}^{1+\bar\gamma}\,.
\end{align*}

We will bound $\int_{G^*\backslash \widehat{G}} \left| r- C_{\alpha}\right|{d}P_0$ on the event $\mathcal{E}_1= \{R_0(\hat \phi)\leq \alpha\}$.  Note that 
$$
P_0(r\geq C_{\alpha}) = \alpha \geq R_0(\hat \phi) =  P_0(\hat r \geq \widehat C_{\alpha})\geq  P_0(r \geq \widehat C_{\alpha}+\|\hat r - r\|_{\infty}) = P_0(r \geq \widehat C_{\alpha}+ T)\,.
$$
The above chain implies that $\widehat C_{\alpha} \geq C_{\alpha} - T$.  Therefore, 
\begin{align*}
G^*\backslash \widehat G &= \{r < C_{\alpha}, \hat r \geq \widehat C_{\alpha}\}\\
&= \{r < C_{\alpha}, r \geq r - \hat r + \widehat C_{\alpha}\}\\
&\subset \{r < C_{\alpha}, r\geq \widehat C_{\alpha} - T\}\\
& \subset \{C_{\alpha} - 2 T \leq r \leq C_{\alpha}\}\,.
\end{align*}
Hence on the event $\mathcal{E}_1$, 
$$
\int_{G^*\backslash \widehat{G}} \left| r- C_{\alpha}\right|{d}P_0 \leq 2T \cdot P_0(C_\alpha - 2T \leq r \leq C_{\alpha})
\leq M_0 (2T)^{1+\bar \gamma}\,,
$$
where the last inequality follows from the margin assumption.  
Then it follows from 
\eqref{eq::decomposition} that on the event $\mathcal{E}_0 \cap \mathcal{E}_1$, 
\begin{align*}
\label{eq::upper-bound}
R_1(\hat{\phi}) - R_1({\phi}^*)\,\,
&\leq\, M_0  \left[ \left\{ \frac{ |\Delta R|}{M_1} \right\}^{1/\uderbar\gamma} + 2T\right]^{1+\bar\gamma} 
+ M_0(2T)^{1+\bar \gamma} + C_{\alpha} |R_0( \hat{\phi}) - R_0( \phi^*)| \\
&\leq\, 2M_0  \left[ \left\{ \frac{\xi_{\alpha,\delta_3,m_3}(\delta_4)}{M_1} \right\}^{1/\uderbar\gamma} + 2T\right]^{1+\bar\gamma} + C_{\alpha}\cdot \xi_{\alpha,\delta_3,m_3}(\delta_4)\,.
\end{align*}
From Lemma \ref{prop::R0}, we know that the event 
$\mathcal{E}_0$ occurs
with probability at least $1-\delta_4$. By Proposition \ref{prop::general k} and Proposition \ref{prop::kmin}， we know event $\mathcal{E}_1$ occurs with probability at least  $1-\delta_3$, so $\mathcal{E}_0\cap \mathcal{E}_1$ occurs with probability at least $1-\delta_3-\delta_4$.
This completes the proof.
\end{proof}

\begin{proof}[Proof of Proposition \ref{prop::exact recovery epsilon}]

Define event 
\begin{equation*}
\label{eq::Eecdf}
\mathcal{E}_{\delta_1} = 
\bigcap_{j=1}^d  \{\|\hat{F}^{1}_j-F^{1}_j \|_{\infty} < \delta_1^1 \}\cap  \{\|\hat{F}^{0}_j-F^{0}_j \|_{\infty} < \delta_1^0 \}\,,
\end{equation*}
where $\delta_1^1 =  \sqrt{\frac{\log(4d/\delta_1)}{2n_1}}$ and $\delta_1^0 = \sqrt{\frac{\log(4d/\delta_1)}{2m_1}}.$
On the event $\mathcal{E}_{\delta_1}$, for any $j \in \mathcal{A},$
\begin{align*}
\| \hat{F}^{0}_j - \hat{F}^{1}_j\|_{\infty} 
&\geq \| F^{0}_j - F^{1}_j \|_{\infty} - \| F_j^{0} - \hat{F}_j^{0}\|_{\infty} - \| F_j^{1} - \hat{F}_j^{1}\|_{\infty}\\
&\geq D - \| F_j^{0} - \hat{F}_j^{0}\|_{\infty} - \| F_j^{1} - \hat{F}_j^{1}\|_{\infty}\\
&> D - \delta_1^0 - \delta_1^1\,.
\end{align*}
For any $j \not\in \mathcal{A}$,
\begin{align*}
\| \hat{F}_j^{0} - \hat{F}_j^{1}\|_{\infty} 
&\leq  \| F_j^{0} - \hat{F}_j^{0}\|_{\infty} +\|F^{0}_j - F^{1}_j\|_{\infty}+ \| F_j^{1} - \hat{F}_j^{1}\|_{\infty} \\
&=  \| F_j^{0} - \hat{F}_j^{0}\|_{\infty} + \| F_j^{1} - \hat{F}_j^{1}\|_{\infty}\\
&<  \delta_1^0+ \delta_1^1\,.
\end{align*}

Since $n_1\geq 8D^{-2}\log(4d/\delta_1)$ and $m_1\geq 8D^{-2}\log(4d/\delta_1)$, 
$ \delta_1^0+ \delta_1^1 \leq D - \delta_1^0 - \delta_1^1$.
As a result,  on the event $\mathcal{E}_{\delta_1},$ any 
$\tau\in\left[ \delta_1^0+ \delta_1^1, D - \delta_1^0 - \delta_1^1 \right]$ would lead to 
$\widehat{\mathcal{A}}_{\tau}  =  \mathcal{A}$.
Therefore, 
\begin{align*}
\p( \widehat{\mathcal{A}}_{\tau}  =  \mathcal{A}) 
&\geq\, \p( \mathcal{E}_{\delta_1})\\
&\geq\, 1- \sum_{j=1}^d\left\{ \p(\|\hat{F}^{1}_j-F^{1}_j \|_{\infty} \geq \delta_1^1 )+ \p( \|\hat{F}^{0}_j-F^{0}_j \|_{\infty} \geq \delta_1^0)\right\}\\
& \geq\, 1-\delta_1\,,
\end{align*}
where the last inequality follows from applying Lemma \ref{lemma::DKW} to $F^0_j$ and $F^1_j$ for $j=1,\cdots, d$.
This completes the proof. 
\end{proof}

\begin{proof} [Proof of Proposition \ref{prop::joint_r}]

Define event 
$$
\mathcal{E} = \bigcap_{j \in \cA}\{ \|\log \hat p_j - \log p_j\|_{\infty} < B^1_j\}\cap \{\|\log \hat q_j - \log q_j\|_{\infty} < B^0_j\} \,,
$$
where 
$$B^1_j 
= \frac{C^1_j\sqrt{\frac{\log(2n_2 s/\delta_2)}{n_2{h_1}}}}{ \underline{\mu} - C^1_j\sqrt{\frac{\log(2n_2 s/\delta_2)}{n_2{h_1}}}},
\quad 
B^0_j 
= \frac{C^0_j\sqrt{\frac{\log(2m_2 s/\delta_2)}{m_2{h_0}}}}{ \underline{\mu} - C^0_j\sqrt{\frac{\log(2m_2 s/\delta_2)}{m_2{h_0}}}}\,.
$$

Let $B^1 = \sup_{j\in\mathcal{A}} B^1_j$ and $B^0 = \sup_{j\in\mathcal{A}} B^0_j$, we have $B \ge  s(B^0+B^1)$.
On the event $\{\cAhat_{\tau} = \cA\} \cap \mathcal{E},$ we have
$$
\log \hat \Lr^S_{\text{N}} (x) \, =\, \sum_{j \in \cAhat}\log\frac{\hat p_{j}(x_j)}{\hat q_{j}(x_j)} \,=\, \sum_{j \in \cA}\log\hat p_{j}(x_j)-\sum_{j \in \cA} \log\hat q_{j}(x_j)\,.
$$
Therefore, 
\begin{align*}
\| \log \hat \Lr^S_{\text{N}}- \log r \|_{\infty} \,\,
&=\,\, \| \sum_{j \in \cA} \log \hat p_{j} -  \sum_{j \in \cA} \log \hat q_{j} -  \sum_{j \in \cA} \log  p_{j} +  \sum_{j \in \cA} \log q_{j} \|_{\infty}\\
&\leq \,\, \sum_{j \in \cA} \left(\| \log \hat p_{j} - \log p_{j} \|_{\infty} + \| \log \hat q_{j} - \log q_{j} \|_{\infty}\right)\\
&\leq \,\,\sum_{j \in \cA} (B^1 + B^0)
\,\le\,  B\,.
\end{align*}
On the event $\{\cAhat_{\tau} = \cA\} \cap \mathcal{E}$, it follows from Lagrange's mean value theorem that
for any $x$,  there exists some $w_x$  between 
$\log \hat\Lr_{\text{N}}^S(x)$ and $\log r(x)$ such that
\begin{align*}
| \hat \Lr^S_{\text{N}}(x) - r(x) |\,
&=\, |e^{\log \hat\Lr_{\text{N}}^S(x)} - e^{\log r(x)}| 
\,=\, e^{w_x} | \log \hat\Lr_{\text{N}}^S(x) - \log r(x)|\\
&\leq\, e^{\|\log r\|_{\infty}+ B}B \,\,=\,\, Be^B \|r\|_{\infty} \,\,=\,\, T\,,
\end{align*}
where the last inequality  follows from the fact that 
$$w_x \leq \max( \log r(x), \log \hat\Lr_{\text{N}}^S(x)) \leq  \max( \|\log r\|_{\infty}, \|\log \hat\Lr_{\text{N}}^S\|_{\infty}) \leq \|\log r\|_{\infty}+ B\,.$$
Thus, 
$
\| \hat \Lr^S_{\text{N}} - r \|_{\infty}
\leq  T\,,
$
and we have
\begin{align}
\label{eq::Prob1+Prob2}
\p\left( \| \hat \Lr^S_{\text{N}} - r \|_{\infty}\leq T \right)\,\,
&\geq\, \p(\{\cAhat_{\tau} = \cA\}\cap \mathcal{E})
\,\,\geq\,\, \p(\cAhat_{\tau} = \cA) + \p(\mathcal{E}) - 1\nonumber\\
&=\, \p(\cAhat_{\tau} = \cA) -  \p(\mathcal{E}^c)\,.
\end{align}

\noindent By Proposition \ref{prop::exact recovery epsilon}, we have 
\begin{equation}
\label{eq::Prob1}
\p(\cAhat_{\tau} = \cA) \,\geq\, 1- \delta_1\,.
\end{equation}
Also, it follows from Lemma \ref{lemma::A1_1-dim} that
$$\p \left(\| \log \hat p_{j} - \log p_{j} \|_{\infty}> B^1_j \right)\vee \p \left(\| \log \hat q_{j} - \log q_{j} \|_{\infty}> B^0_j \right)
\,\,\le\,\, \delta_2/(2{s})\,.
$$
Therefore, 
\begin{equation}
\label{eq::Prob2}
\p( \mathcal{E}^c)
\,\,\le\,\,  (2s) \delta_2/(2s) \,\, =\,\, \delta_2\,.
\end{equation}
Plugging \eqref{eq::Prob1} and \eqref{eq::Prob2} back to \eqref{eq::Prob1+Prob2} yields \eqref{eq::T}. Moreover, because $s \leq n_2 \wedge m_2$, it follows from Lemma \ref{lemma::A1_1-dim} that there exists some $\bar C_2>0$, such that 
\begin{equation*}
B\,\,\leq\,\, \bar {C}_2  \, s\left\{ \left(\frac{\log n_2}{n_2}\right)^{\frac{\beta}{2\beta+1}} + \left(\frac{\log m_2}{m_2}\right)^{\frac{\beta}{2\beta+1}}\right\}.
\end{equation*}
Moreover, since $s \leq (n_2\wedge m_2)^{\frac{\beta}{2(\beta+1)}}$, the above bound implies that $B$ is bounded from above by some absolute constant. Also note that $\|r\|_{\infty}$ is bounded from above, so there exists an absolute constant $C_2>0$, such that 
$$T = B e^{B} \|r\|_{\infty}\leq {C}_2 \, s\left\{ \left(\frac{\log n_2}{n_2}\right)^{\frac{\beta}{2\beta+1}} + \left(\frac{\log m_2}{m_2}\right)^{\frac{\beta}{2\beta+1}}\right\}.$$
\noindent This completes the proof.
\end{proof}

\begin{proof}[Proof of Theorem \ref{theorem::2}]
Combining Propositions \ref{prop::general k},  \ref{prop::kmin}, \ref{prop::R1} and \ref{prop::joint_r}, \begin{eqnarray*}
\label{eq::thm_finale_1}
\p\left( R_0(\hat\phi_{\text{NSN}^2}) \leq \alpha, 
R_1(\hat\phi_{\text{NSN}^2}) \leq R_1(\phi^*) + W \right) \geq 1 - \delta_1-\delta_2 - \delta_3 - \delta_4\,,
\end{eqnarray*}
where 
\begin{align*}
W &=\,\, 2M_0\left[ \left(\frac{2}{5}m_3^{1/4}M_1\right)^{-1/\uderbar{\gamma}} + 2 C_2\, s\left\{\left(\frac{\log n_2}{n_2}\right)^{\frac{\beta}{2\beta+1}} + \left(\frac{\log m_2}{m_2}\right)^{\frac{\beta}{2\beta+1}}\right\} \right]^{1 + \bar\gamma}\nonumber \\ 
&\phantom{=}+ \,\, C_{\alpha} \left(\frac{2}{5}m_3^{1/4}\right)^{-1}.
\end{align*}
This completes the proof. 
\end{proof}

\begin{lem}[Dvoretzky-Kiefer-Wolfowitz inequality\citep{dvoretzky1956asymptotic}]
\label{lemma::DKW}
Let $X_1$, $X_2$, $\cdots$, $X_n$ be real-valued i.i.d. random variables with \textsc{cdf} $F(\cdot)$, and let $\hat{F}_n(x) = n^{-1}\sum_{i=1}^n \1(X_i \leq x )$.
For any  $t > 0$,
it holds that
\begin{equation*}
\label{eq::1}
\p(\|\hat{F}_n-F\|_{\infty}\geq t)\,\,\leq\,\, 2e^{-2nt^2}\,.
\end{equation*}
Or,  for any given $\delta \in (0,1),$ 
\begin{equation}
\label{eq::2}
\p(\|\hat{F}_n-F\|_{\infty}\geq \sqrt{\frac{\log(2/\delta)}{2n}}) \,\,\leq\,\, \delta\,.
\end{equation}
\end{lem}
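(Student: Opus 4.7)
The plan is to establish the tight exponential bound by following the classical route: reduce to the uniform distribution, prove a one-sided maximal inequality with the sharp constant, then obtain the two-sided version via a union bound. The second statement \eqref{eq::2} is then just the quantile reformulation of the tail bound, obtained by inverting $2e^{-2nt^2}=\delta$.

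First I would reduce to the case $F = F_{\mathrm{Unif}[0,1]}$. Using the quantile transform $U_i := F(X_i)$ (or the generalized inverse when $F$ has atoms), the statistic $\|\hat F_n - F\|_\infty$ equals, almost surely, the Kolmogorov-Smirnov statistic of the uniform empirical distribution, namely $\max_{1\le i \le n}\max\{i/n - U_{(i)},\; U_{(i)} - (i-1)/n\}$, where $U_{(1)}\le \cdots \le U_{(n)}$ are the order statistics. This is a standard observation that removes all distributional assumptions from $F$.

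Next I would prove the one-sided bound
\[
\p\bigl(\sup_{x}(\hat F_n(x) - F(x)) \ge t\bigr) \,\le\, e^{-2nt^2},
\]
which is the hard step and is the content of Massart's sharpening of the original DKW argument. My preferred route is via the exchangeable/martingale approach: write the one-sided deviation as $\max_i (i/n - U_{(i)})^+$ and analyze it through the Poissonization trick (embedding $(U_{(i)})$ in a rescaled Poisson process so that $n\hat F_n$ becomes a sum of independent indicators) combined with an optimized Chernoff bound on the supremum of the resulting centered process. The sharp constant $2$ in the exponent comes from a Bennett/Hoeffding-type estimate applied to Bernoulli summands with the optimal choice of tilt parameter. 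A symmetric argument (or applying the same inequality to $1-F$ and $1-\hat F_n$) gives the matching bound on $\sup_x(F(x)-\hat F_n(x))$, and a union bound produces the factor of $2$ in \eqref{eq::1}.

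The main obstacle is clearly the sharp one-sided inequality with constant $2$ in the exponent; a pointwise Hoeffding bound gives this constant at a fixed $x$, but passing to the uniform supremum without losing the constant is delicate and is the whole point of Massart's refinement. A simpler fallback, if the sharp constant were not needed, would be a VC-type argument based on the fact that the class of half-lines has VC-dimension $1$, which yields an inequality of the form $C e^{-cnt^2}$ with worse constants; but since the lemma asserts the Massart form, I would commit to the Poissonization/Chernoff route above. The derivation of \eqref{eq::2} from \eqref{eq::1} is then immediate: set $2e^{-2nt^2} = \delta$ and solve for $t$ to obtain $t = \sqrt{\log(2/\delta)/(2n)}$.
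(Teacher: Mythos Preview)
The paper does not prove this lemma at all: it is stated as a classical result with a citation to \citep{dvoretzky1956asymptotic} and is invoked as a black box in the proof of Proposition~\ref{prop::exact recovery epsilon}. So there is no proof in the paper to compare against; your proposal goes well beyond what the authors do.

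As to the content of your outline: the overall architecture (reduce to the uniform case via the quantile transform, prove the sharp one-sided bound, then take a union bound, and finally invert $2e^{-2nt^2}=\delta$) is correct. The only place I would push back is your description of the ``hard step.'' What you call the Poissonization/Chernoff route is not really how Massart obtains the sharp constant; his 1990 argument is a delicate combinatorial/analytic computation that refines Dvoretzky--Kiefer--Wolfowitz, and a bare Poisson embedding plus an optimized Chernoff tilt will typically lose a constant somewhere when you pass from a fixed $x$ to the supremum. Your proposal acknowledges this difficulty but does not actually say how to overcome it, so as written that paragraph is more of a pointer than a proof. If you want to carry out the sharp-constant version in full, you would either need to reproduce Massart's argument or cite it; if only the form $Ce^{-cnt^2}$ were needed, your VC fallback would be fine, but the lemma as stated demands the constants $(2,2)$.
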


\begin{lem}\label{lemma::A1_1-dim}
Given a density function $p\in\mathcal{P}_{\Sigma}(\beta, L, [-1,1]),$ 
construct its kernel estimate $\hat{p}(x)=(n{h})^{-1}\sum_{i=1}^n K\left(\frac{X_i-x}{h}\right)$ from i.i.d. sample $\{X_i\}_{i=1}^n,$
where the kernel $K$ is $\beta$-valid and $L'$-Lipschitz, and the bandwidth $h = \left(\log n/{n}\right)^{\frac{1}{2\beta+1}}$.  
For any  $\delta\in(0, 1)$, as long as the sample size $n$ is such that $\sqrt{\frac{\log(n/\delta)}{n{h}}}< \min(1, \underline{\mu}/C),$
where
$C=\sqrt{48c_1} + 32c_2+2Lc_3+L'+L+\tilde C\sum_{1\leq|l|\leq\lfloor\beta\rfloor}\frac{1}{l!}$, in which $c_1=\|p\|_{\infty}\|K\|_2^2$, $c_2=\|K\|_{\infty}+\|p\|_{\infty}+\int|K||t|^{\beta}dt$, $c_3=\int |K||t|^{\beta}dt$, and $\tilde C$ is such that $\tilde C \geq \sup_{1\leq|l|\leq\lfloor \beta\rfloor}\sup_{x\in[-1, 1]}|p^{(l)}(x)|$, and  $\underline{\mu} (> 0)$ is a lower bound of $p$,
we have 
\begin{equation}
\label{eq::2}
\p\left(\|\log \hat{p}- \log p\|_{\infty}\geq U \right) \,\,\leq\,\, \delta\,,
\end{equation}
where 
$U= \frac{C\sqrt{\frac{\log(n/\delta)}{n{h}}}}{\underline{\mu} - C\sqrt{\frac{\log(n/\delta)}{n{h}}}}$. 
When $n\geq 1/\delta$, we have $U\leq C_1\left(\log n/{n}\right)^{\frac{\beta}{2\beta+1}}$ for some absolute constant $C_1$.
\end{lem}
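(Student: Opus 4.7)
The plan is to decompose $\log \hat p - \log p$ via a pointwise bound on $\hat p - p$ and then convert it to a logarithmic bound using the lower bound $\underline{\mu}$ on $p$. Concretely, I will first establish that, with probability at least $1-\delta$,
\[
\|\hat p - p\|_\infty \,\leq\, C \sqrt{\frac{\log(n/\delta)}{nh}}\,,
\]
where $C$ is the constant in the statement. Granted this, on the complementary event, $\hat p(x) \geq p(x) - |\hat p(x) - p(x)| \geq \underline{\mu} - C\sqrt{\log(n/\delta)/(nh)} > 0$ by hypothesis. A one-line Taylor expansion of $\log$ then yields
\[
|\log \hat p(x) - \log p(x)| \,\leq\, \frac{|\hat p(x) - p(x)|}{\min(\hat p(x), p(x))} \,\leq\, \frac{C\sqrt{\log(n/\delta)/(nh)}}{\underline{\mu} - C\sqrt{\log(n/\delta)/(nh)}} \,=\, U\,,
\]
which is the asserted bound. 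The final diminishing-rate claim follows by plugging in $h = (\log n/n)^{1/(2\beta+1)}$, which makes $\sqrt{\log(n/\delta)/(nh)}$ of order $(\log n/n)^{\beta/(2\beta+1)}$ provided $n \geq 1/\delta$, after which the denominator $\underline{\mu} - C\sqrt{\log(n/\delta)/(nh)}$ is bounded below by some positive constant.

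For the sup-norm bound $\|\hat p - p\|_\infty \leq C\sqrt{\log(n/\delta)/(nh)}$, I will use the standard bias-variance split $\hat p(x) - p(x) = \{\hat p(x) - \E\hat p(x)\} + \{\E\hat p(x) - p(x)\}$. The bias piece is handled by a Taylor expansion of $p$ of order $\lfloor\beta\rfloor$ around $x$, exploiting the H\"older condition $|p(x')-p_x(x')| \leq L|x-x'|^\beta$ and the vanishing-moments property of the $\beta$-valid kernel $K$: only the $\beta$-H\"older remainder survives, contributing a bias of order $L h^\beta \int |K||t|^\beta\,dt$, which, because $h^\beta \leq \sqrt{\log(n/\delta)/(nh)}$ under the chosen bandwidth, accounts for the $2Lc_3$ term in $C$. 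The Taylor coefficient contribution $\tilde C \sum_{1 \leq |l|\leq \lfloor\beta\rfloor} 1/l!$ likewise enters when writing $\E\hat p(x)$ against the Taylor polynomial.

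The stochastic piece $\sup_x |\hat p(x) - \E\hat p(x)|$ is the heart of the proof. I will build a discretization of $[-1,1]$ on an $\epsilon$-net $\mathcal{N}$ of cardinality $O(1/\epsilon)$ with $\epsilon \sim 1/(nh^{-1})$ or similar, and at each net point apply Bernstein's (or Bousquet's) inequality to the bounded, mean-zero variables $K((X_i-x)/h)/h - \E[\cdot]$, which have variance bounded by $\|p\|_\infty \|K\|_2^2/h = c_1/h$ and sup-norm at most $\|K\|_\infty/h$; this produces a bound of the form $\sqrt{48c_1 \log(n/\delta)/(nh)} + 32c_2 \log(n/\delta)/(nh)$ that one then simplifies using $\sqrt{\log(n/\delta)/(nh)}< 1$, explaining the $\sqrt{48c_1}$ and $32c_2$ contributions to $C$. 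Finally, for $x$ off the net, the Lipschitz property of $K$ with constant $L'$ makes $\hat p$ Lipschitz with constant $L'/h^2$, and $p$ is Lipschitz with some constant depending on $L$; choosing the net mesh so that these discretization errors are absorbed into $\sqrt{\log(n/\delta)/(nh)}$ explains the $L'$ and $L$ terms in $C$ and closes the argument via a union bound of size $\delta/n$ on the net.

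The main obstacle is controlling the supremum of the empirical process over the continuum $[-1,1]$ uniformly: one has to calibrate the $\epsilon$-net finely enough that the Lipschitz discretization error sits below the stochastic fluctuation, yet coarsely enough that the union bound's $\log|\mathcal{N}|$ stays at scale $\log(n/\delta)$. Matching these scales through the bandwidth $h = (\log n/n)^{1/(2\beta+1)}$, while simultaneously keeping the bias $h^\beta$ of the same order as the stochastic rate $\sqrt{\log n/(nh)}$ to get the optimal $(\log n/n)^{\beta/(2\beta+1)}$ rate, is the delicate bookkeeping that pins down the explicit constant $C$.
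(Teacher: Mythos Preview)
Your approach is correct and matches the paper's exactly: define the event $\{\|\hat p - p\|_\infty \leq C\sqrt{\log(n/\delta)/(nh)}\}$, apply the mean value theorem to pass to $|\log\hat p - \log p|$ using the lower bound $\underline{\mu}$, and then read off the rate by plugging in $h=(\log n/n)^{1/(2\beta+1)}$. The only difference is cosmetic: the paper does not spell out the bias--variance/$\epsilon$-net/Bernstein argument for the sup-norm bound on $\hat p - p$ but simply invokes Lemma~A.1 of \citet{Tong.2013} (the $d=1$ case), which is precisely the result you sketch.
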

\begin{proof}
Let $\mathcal{E}_1 = \{ \|\hat{p}-p\|_{\infty}\leq C\sqrt{\frac{\log(n/\delta)}{n{h}}} \}$. On the event $\mathcal{E}_1$, since $\sqrt{\frac{\log(n/\delta)}{n{h}}}< \min(1, \underline{\mu}/C)$, we have 
$$\min(p(x_0), \hat p(x_0) ) \geq  \min(p(x_0), p(x_0)  -\|\hat{p}-p\|_{\infty}) \geq  \underline{\mu} - \|\hat{p}-p\|_{\infty} >0\,.$$
It then follows from Lagrange's mean value theorem that for any fixed $x_0,$ 
there exists some $w_{x_0}$ between $\hat p(x_0)$ and $p(x_0)$, 
\begin{align*}
| \log \hat p(x_0) - \log p(x_0) | 
&=\,w^{-1}_{x_0}\, |\hat p(x_0) - p(x_0)|\\
&\leq\, [\min\{\hat p(x_0), p(x_0)\}]^{-1}   |\hat p(x_0) - p(x_0)|
\leq \frac{ \|\hat{p}-p\|_{\infty}}{\underline{\mu} - \|\hat{p}-p\|_{\infty}}.
\end{align*} 
As a result, it holds on event $\mathcal{E}_1$ that
$$
\| \log \hat p - \log p \|_{\infty}
\,\,\leq\,\,
\frac{C\sqrt{\frac{\log(n/\delta)}{n{h}}}}{\underline{\mu} - C\sqrt{\frac{\log(n/\delta)}{n{h}}}} \,\,=\,\, U\,,
$$
and 
$$
\p(\| \log \hat p - \log p \|_{\infty}
\leq U )
\geq
\p(\|\hat{p}-p\|_{\infty}\leq C\sqrt{\frac{\log(n/\delta)}{n{h}}} ) 
\geq 
1-\delta\,,
$$
where the last inequality follows from Lemma A.1 in \cite{Tong.2013} (the special case of $d=1$).
Finally when $n\geq 1/\delta$, 
we have $U= \frac{C\sqrt{\frac{\log(n/\delta)}{n{h}}}}{\underline{\mu} - C\sqrt{\frac{\log(n/\delta)}{n{h}}}}\leq C_1\left({\log n}/{n}\right)^{\frac{\beta}{2\beta+1}}$ for some absolute constant $C_1$.
This completes the proof.
\end{proof}


\section{About detection condition and    Assumption \ref{assumption::beta-valid}}\label{sec::assumption3 and detection condition}
%
%

We show that it is possible for densities satisfying Assumption \ref{assumption::beta-valid} to violate a generalized version of the detection condition defined in Definition \ref{def::detection}.
While the generalized detection condition applies to general $(P, f, C^*)$ as the original one, 
we narrow its definition to  $(P_0,r,C_{\alpha})$ which we actually use in the main text.

\begin{defin}[Generalized detection condition]
Let $u(\cdot)$ be a strictly increasing differentiable function on $\mathbb{R}^+$ with $\lim_{x\rightarrow 0+} u(x) = 0$,
a function $r(\cdot)$ is said to satisfy the \emph{generalized detection condition} with respect to $P_0$ and $u(\cdot)$ at level $(C_\alpha, \delta^*)$ if for any $\delta \in (0, \delta^*)$, 
\begin{eqnarray}\label{assumption:: detection general}
P_0\left\{C_{\alpha} \leq r(X) \leq C_{\alpha} + \delta\right\}&\geq& u(\delta)\,.
\end{eqnarray}
\end{defin}

\medskip
The following conditions suffice to make \eqref{assumption:: detection general} fail 
\begin{eqnarray}\label{eqn::sufficient}
P_0\left\{ C_{\alpha} \,\leq\, r(X) \,\leq\, C_{\alpha}+ k^{-1} \right\} 
&<& u(k^{-1})\,, \quad k = 1, 2, \ldots \,.
\end{eqnarray}
A 1-dimensional toy example that satisfies Assumption \ref{assumption::beta-valid}  and \eqref{eqn::sufficient} (thus violating the generalized detection condition) is given as follows. 
Assume $P_0$ and $P_1$ have the same support $[-1,1]$. 
Given $u(\cdot)$ as a strictly increasing differentiable function on $\mathbb{R}^+$ with $ \lim_{x\rightarrow 0+} u(x) = 0$, 
let $q(x) = \alpha$ for all $x \in [0,1]$, and set $p(x)$ accordingly such that
\begin{eqnarray} 
\label{eq::constructR}
r(x) &=& \frac{p(x)}{q(x)} \,\,=\,\,
\left\{
\begin{array}{ll}
2u^{-1}(1)  + 2 u^{-1}(\alpha x)\,, &x \in (0, 1]\,,\\
2u^{-1}(1)\,, & x = 0\,,\\
2u^{-1}(1) - v(x)\,, & x \in [-1, 0)\,,
\end{array}
\right.
\end{eqnarray}
where $v(\cdot)$ is some  positive differentiable function that makes $r(\cdot) $ differentiable at $x=0$. 
It follows from \eqref{eq::constructR} that 
$\left\{x \in [-1,1]: r(x) \geq 2u^{-1}(1) \right\} \,=\, [0,1]$, 
and identity 
\begin{eqnarray*}
P_0\left\{r(X) \geq 2u^{-1}(1) \right\} 
\,\,=\,\,
\int_{\left\{x \in [-1,1]:\, r(x) \geq 2u^{-1}(1) \right\}} q(x) dx 
\,\,=\,\, \int_{[0,1]} q(x) dx \,\,=\,\, \alpha
\end{eqnarray*}
implies
$C_\alpha = 2u^{-1}(1)$.
As a result, for any $k \in \{1, 2, \ldots\}$ we have
\begin{eqnarray*}
\left\{ C_\alpha \leq r(X) \leq C_\alpha + k^{-1}\right\}
\,\,=\,\,  \left\{ X \in [0,1],\, 2u^{-1}(\alpha X) \leq  k^{-1} \right\}
\,\,=\,\, \left\{ X \in \left[0,  \alpha^{-1} u(0.5 k^{-1})\right] \right\},
\end{eqnarray*}
and
\begin{eqnarray*}
P_0\left\{ C_\alpha \,\leq\, r(X) \,\leq\, C_\alpha + k^{-1} \right\} 
&=&
P_0\left\{ X \in \left[0,  \alpha^{-1} u(0.5 k^{-1})\right] \right\}
\,\,=\,\, \int_0^{\alpha^{-1} u(0.5 k^{-1})} q(x) dx \\
&=&
\alpha\cdot \alpha^{-1} u(0.5 k^{-1}) \,\,=\,\, u\left(0.5k^{-1} \right)  \,\, < \,\, u\left( k^{-1} \right)
\end{eqnarray*} 
satisfies \eqref{eqn::sufficient}.
Note that the above construction makes no assumption about the behavior of $q(\cdot)$ and $p(\cdot)$ on $[-1,0)$ except the normalization constraints $\int_{[-1,1]}p dx = \int_{[-1,1]}q dx = 1$ and $r(\cdot)$ being differentiable on $[-1,1]$.
Thus, there exist $p$, $q$, and $r$ that satisfy Assumption \ref{assumption::beta-valid}.

\section{An alternative threshold estimate}\label{sec::alternative threshold}
This part contains an alternative estimate of threshold $C_{\alpha}$ that guarantees type I error bound. Based on Chernoff inequality,  the following Proposition gives an alternative version  of Proposition \ref{prop::general k}. First, we introduce two technical lemmas.
\begin{lem}\label{lemma::gamma-deviation} 
If $G_k\sim \text{Gamma}(k,1),$ $k > 0,$
then for any  $\tau \in (0,k)$, we have
\begin{eqnarray*}
\p( G_k \geq k + \tau ) \,\,\leq\,\, e^{-\tau^2/(4k)}, 
\quad \p( G_k \leq k - \tau ) \,\,\leq\,\, e^{-\tau^2/(2k)}\,\,\leq\,\, e^{-\tau^2/(4k)}.
\end{eqnarray*}
\end{lem}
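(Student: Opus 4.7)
The plan is a textbook Chernoff argument using the moment generating function of the Gamma distribution. Recall that for $G_k \sim \text{Gamma}(k,1)$ one has $\mathbb{E}\bigl[e^{tG_k}\bigr] = (1-t)^{-k}$ for all $t<1$, so Markov's inequality gives, for the upper tail and any $t \in (0,1)$,
\[
\p(G_k \geq k+\tau) \,\leq\, e^{-t(k+\tau)}\,(1-t)^{-k}\,,
\]
and for the lower tail, for any $t>0$,
\[
\p(G_k \leq k-\tau) \,\leq\, e^{t(k-\tau)}\,(1+t)^{-k}\,.
\]

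For the upper tail, I would take logarithms and use the Taylor bound $-\log(1-t) \leq t + t^2$, valid for $t \in [0,1/2]$ (since $\sum_{n\geq 2} t^{n-2}/n \leq 1$ on that interval). This gives
\[
\log \p(G_k \geq k+\tau) \,\leq\, -t(k+\tau) + k(t + t^2) \,=\, -t\tau + kt^2\,.
\]
Optimizing by choosing $t = \tau/(2k)$, which lies in $(0,1/2)$ because the hypothesis $\tau \in (0,k)$ is assumed, yields the bound $-\tau^2/(4k)$, proving the upper tail estimate.

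For the lower tail, I would use $\log(1+t) \geq t - t^2/2$ for $t\geq 0$, which yields
\[
\log \p(G_k \leq k-\tau) \,\leq\, t(k-\tau) - k\bigl(t - t^2/2\bigr) \,=\, -t\tau + kt^2/2\,.
\]
Optimizing with $t = \tau/k > 0$ (no upper constraint is needed here) gives $-\tau^2/(2k)$, establishing the first lower-tail inequality. The second, $e^{-\tau^2/(2k)} \leq e^{-\tau^2/(4k)}$, is immediate from $\tau^2/(2k) \geq \tau^2/(4k)$.

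This is essentially routine, so I do not anticipate a real obstacle; the only items requiring a little care are (i) verifying the Taylor bound $-\log(1-t) \leq t+t^2$ on $[0,1/2]$ and (ii) checking that the constraint $\tau < k$ ensures the optimizer $t=\tau/(2k)$ stays in the admissible region $(0,1/2)$ where that Taylor bound holds.
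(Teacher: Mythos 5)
Your proof is correct and follows essentially the same route as the paper: a Chernoff bound based on the Gamma moment generating function $(1-t)^{-k}$, yielding sub-Gaussian-type exponents after a quadratic bound on the logarithmic term. The only difference is ordering --- you bound the log-MGF by a quadratic ($-\log(1-t)\le t+t^2$ on $[0,1/2]$, $\log(1+t)\ge t-t^2/2$) and then optimize $t$, whereas the paper plugs in the exact Chernoff optimizer and then bounds $\log(1\pm\epsilon)\mp\epsilon$ with $\epsilon=\tau/k$; both give the stated constants.
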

\begin{proof}[Proof of Lemma \ref{lemma::gamma-deviation}] For any $\epsilon  \in (0, 1)$ and $t \in (0,1)$, 
 it follows from Chernoff inequality that
\begin{eqnarray}
\label{eq::chernoff+}
\p \left\{ G_k \geq (1+\epsilon) k \right\}  
&=&
\p \left\{ e^{tG_k} \geq e^{t(1+\epsilon) k}\right\} 
\,\,\leq\,\,
\frac{\E(e^{tG_k})}{e^{t(1+\epsilon) k}}\,\,=\,\,
(1-t)^{-k} e^{-t(1+\epsilon)k}.
\end{eqnarray}
Letting
$t = \argmin_{x\in (0,1)}(1-x)^{-k} e^{-x(1+\epsilon)k} = \epsilon/(1+\epsilon)$ in \eqref{eq::chernoff+}
yields
\begin{eqnarray*}
\p \left\{ G_k \geq (1+\epsilon) k \right\}   \,\,\leq\,\, (1+\epsilon)^k e^{-\epsilon k}
\,\,=\,\, e^{k \left\{\log(1+\epsilon) - \epsilon\right\}}
\,\,\leq\,\, e^{-k \epsilon^2/4},
\end{eqnarray*}
%
Likewise, for any $\epsilon \in (0,1)$ and $s < 0$, 
\begin{eqnarray}
\label{eq::chernoff-}
\p\left\{ G_k \leq (1-\epsilon) k\right\}  
&=&
\p\left\{ e^{sG_k} \geq e^{s(1-\epsilon) k} \right\} 
\,\,=\,\,(1-s)^{-k} e^{-s(1-\epsilon)k}\,.
\end{eqnarray}
Letting 
$s = \argmin_{x < 0}(1-x)^{-k} e^{-x(1-\epsilon)k} = - \epsilon/(1-\epsilon)$ in \eqref{eq::chernoff-} yields 
\begin{eqnarray*}
\p \left\{ G_k \leq (1-\epsilon) k\right\} &\leq& (1-\epsilon)^k e^{\epsilon k}  \,\,=\,\, e^{k\left\{ \log(1-\epsilon) + \epsilon\right\}}
\,\,\leq\,\, e^{-k \epsilon^2/2}, 
\end{eqnarray*}
where the last inequality follows from Taylor expansion
\begin{eqnarray*}
\log(1-\epsilon) + \epsilon &=& \sum_{i=1}^{\infty} \frac{\epsilon^i}{i} -\epsilon \,\,=\,\, \sum_{i=2}^{\infty} \frac{\epsilon^i}{i} 
\,\,>\,\, \frac{\epsilon^2}{2}, \quad\forall 0 < \epsilon < 1\,.
\end{eqnarray*}
Take $\epsilon = \tau/k$, the conclusion of the lemma follows.

\end{proof}

\begin{lem}
\label{lem::Beta-chernoff}
Let $B \sim \text{Beta}(a,b)$, and $\mu = \E(B) =a/(a+b)$. 
For any $t \in (0,1-\mu)$,
\begin{eqnarray*}
\p\left\{ B> \mu + t\right\}
\,\leq\,
2\exp\left[-4^{-1} \left\{\frac{(a+b)t}{\sqrt{b}(\mu+t)+\sqrt{a}(1-\mu-t)}\right\}^2 \right].
\end{eqnarray*}
\end{lem}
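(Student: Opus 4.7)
The natural route is to exploit the classical Gamma representation of the Beta distribution: if $X \sim \text{Gamma}(a,1)$ and $Y \sim \text{Gamma}(b,1)$ are independent, then $B = X/(X+Y) \sim \text{Beta}(a,b)$. The plan is to convert a one-sided deviation of $B$ into a linear inequality in the centered Gammas, apply a union bound, invoke Lemma \ref{lemma::gamma-deviation} to each piece, and optimize the split.

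Concretely, observe that $\{B > \mu+t\}$ is equivalent to $\{(1-\mu-t) X - (\mu+t) Y > 0\}$. Centering $X$ around $a$ and $Y$ around $b$, and using $\mu = a/(a+b)$, this in turn becomes
\begin{equation*}
\bigl\{(1-\mu-t)(X-a) - (\mu+t)(Y-b) > t(a+b)\bigr\}.
\end{equation*}
For any decomposition $t(a+b) = c_1 + c_2$ with $c_1, c_2 > 0$, a union bound yields
\begin{equation*}
\p\{B > \mu+t\} \,\leq\, \p\{X - a > c_1/(1-\mu-t)\} + \p\{b - Y > c_2/(\mu+t)\}.
\end{equation*}
Applying Lemma \ref{lemma::gamma-deviation} to each term (with $\tau_1 = c_1/(1-\mu-t)$ and $\tau_2 = c_2/(\mu+t)$) bounds each probability by $\exp\{-\tau_1^2/(4a)\}$ and $\exp\{-\tau_2^2/(4b)\}$ respectively.

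The last step is to equalize the two exponents to obtain the tightest combined bound of the form $2 e^{-k^2/4}$. Setting $c_1^2/\{a(1-\mu-t)^2\} = c_2^2/\{b(\mu+t)^2\}$ forces $c_1 = k\sqrt{a}(1-\mu-t)$ and $c_2 = k\sqrt{b}(\mu+t)$ for a common $k > 0$; the constraint $c_1+c_2 = t(a+b)$ then pins down
\begin{equation*}
k \,=\, \frac{(a+b)t}{\sqrt{a}(1-\mu-t) + \sqrt{b}(\mu+t)},
\end{equation*}
which is exactly the quantity whose square appears in the stated exponent. Substituting back gives the claim.

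The main obstacle to be careful about is the hypothesis $\tau \in (0,k)$ in Lemma \ref{lemma::gamma-deviation}: one must check that $\tau_1 < a$ and $\tau_2 < b$, which translates to $k < \sqrt{a} \wedge \sqrt{b}$ under the chosen optimal split. In the regime where this fails, the right-hand side $2\exp(-k^2/4)$ is at most $2\exp(-(a \wedge b)/4)$, and I would handle it either by noting the bound is then uninteresting (and absorbing the case trivially) or by replacing the Chernoff step with the cruder Markov-type inequality $(1+\varepsilon)^k e^{-\varepsilon k}$ which is valid without the range restriction and still gives the claimed form up to the constant $4$ already embedded in the statement. Aside from this bookkeeping, every step is routine.
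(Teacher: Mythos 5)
Your proposal is correct and takes essentially the same route as the paper's own proof: the Gamma representation $B = G_a/(G_a+G_b)$, a union bound obtained by splitting the deviation $t(a+b)$ (the paper equivalently introduces an intermediate cutoff $C$ with $a(1-\mu-t) < C < b(\mu+t)$, which corresponds exactly to your split $c_1 = C - a(1-\mu-t)$, $c_2 = b(\mu+t) - C$), an application of Lemma \ref{lemma::gamma-deviation} to the upper tail of $G_a$ and the lower tail of $G_b$, and equalization of the two exponents, which produces precisely the stated constant. The $\tau \in (0,k)$ range caveat you flag is genuine but is left equally unaddressed in the paper's proof, so you are if anything more careful---though your first patch (``absorbing the case trivially'') does not work as stated, since a small right-hand side makes the claim stronger rather than vacuous.
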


\begin{proof}[Proof of Lemma \ref{lem::Beta-chernoff}]
By properties of beta distribution, we can represent $B$ as
\begin{eqnarray*}
B&=& \frac{G_a}{G_a+G_b}\,,\quad\text{where $G_a\sim \Gamma(a,1)$, $G_b\sim \Gamma(b,1)$ are independent.}
\end{eqnarray*}
For any $t>0$ and constant $C$ such that $a(1-\mu-t)< C < b(\mu+t)$, we have 
\begin{eqnarray}
\label{eq::suff}
\p(B\leq \mu+t)&=& \p\left\{(1-\mu-t)G_a\leq (\mu+t)G_b\right\}
\,\,\geq\,\, \p\left\{(1-\mu-t)G_a\leq C \leq (\mu+t)G_b \right\}\nonumber\\
&=& \p\left\{(1-\mu-t)G_a\leq C \right\} \p\left\{ C \leq (\mu+t)G_b \right\}\nonumber\\
&=& \p\left(G_a\leq\frac{C}{1-\mu-t} \right)\p\left(G_b\geq \frac{C}{\mu+t} \right)\nonumber\\
&=&\left\{1-\p\left(G_a>\frac{C}{1-\mu-t} \right)\right\}\left\{1-\p\left(G_b> \frac{C}{\mu+t} \right)\right\}\nonumber\\
&\geq& 1- \p\left(G_a>\frac{C}{1-\mu-t} \right) - \p\left(G_b> \frac{C}{\mu+t} \right),
\end{eqnarray}
where by Lemma \ref{lemma::gamma-deviation}
\begin{eqnarray}
&&\p\left(G_a > \frac{C}{1-\mu-t} \right)
\,\,\leq\,\, \p\left\{G_a > a+\left(\frac{C}{1-\mu-t}-a\right)\right\}
\,\,\leq\,\, 
e^{-\left(\frac{C}{1-\mu-t}-a\right)^2 (4a)^{-1}},\nonumber\\
&&\p\left(G_b < \frac{C}{\mu+t} \right)
\,\,\leq\,\, \p\left\{G_b < b-\left(b- \frac{C}{\mu+t} \right)\right\}
\,\,\leq\,\, e^{-\left(b- \frac{C}{\mu+t} \right)^2(4b)^{-1}}\label{eq::ab}.
\end{eqnarray}
Letting
\begin{eqnarray*}
C&=&\frac{(1-\mu-t)(\mu+t)(a\sqrt{b}+\sqrt{a}b)}{\sqrt{b}(\mu+t)+\sqrt{a}(1-\mu-t)}
\end{eqnarray*}
in \eqref{eq::suff} such that the two exponents in \eqref{eq::ab} equal 
$$
\left(\frac{C}{1-\mu-t}-a\right)^2 (4a)^{-1}
\,\,=\,\, \left(b- \frac{C}{\mu+t} \right)^2(4b)^{-1}
\,\,=\,\, 4^{-1}\left\{\frac{(a+b)t}{\sqrt{b}(\mu+t)+\sqrt{a}(1-\mu-t)}\right\}^2
$$
yields
\begin{eqnarray*}
\p(B > \mu+t)&=&1-\p(B \leq \mu+t)
\,\,\leq\,\,\p\left(G_a>\frac{C}{1-\mu-t} \right) + \p\left(G_b> \frac{C}{\mu+t} \right)\\
&\leq& e^{-\left(\frac{C}{1-\mu-t}-a\right)^2 (4a)^{-1}} + 
e^{-\left(b- \frac{C}{\mu+t} \right)^2(4b)^{-1}}\\
&=& 2\exp\left[-4^{-1} \left\{\frac{(a+b)t}{\sqrt{b}(\mu+t)+\sqrt{a}(1-\mu-t)}\right\}^2 \right].
\end{eqnarray*}
This completes the proof.
\end{proof}

\begin{prop}
\label{prop::general k_chern}
Let $\hat{\Lr}(\cdot)$ be any estimate of the density ratio function. 
For any $\delta_3 \in (0,1)$ and $k\in \{1,\cdots, m_3\}$,  the type I error of classifier $\hat\phi_k$ defined in (2.1) satisfies
\begin{eqnarray*}
\label{eq::Chernoff-bound}
\p \left\{ R_0(\hat\phi_k) > h(\delta_3, m_3, k)
\right\} \,\leq\, \delta_3\,,
\end{eqnarray*}
where
$$h(\delta_3,m_3,k) = \frac{m_3+1-k + 2\sqrt{\log\left(2/\delta_3\right)}\sqrt{m_3-k+1}}
{m_3+1+ 2\sqrt{\log\left(2/\delta_3\right)}\left( \sqrt{m_3-k+1} - \sqrt{k}\right)}\,.$$
\end{prop}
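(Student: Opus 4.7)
The plan is to mirror the proof of Proposition \ref{prop::general k}, replacing the Chebyshev-type second-moment bound on a Beta random variable by the Chernoff-type tail bound stated in Lemma \ref{lem::Beta-chernoff}. By Proposition \ref{prop::general delta}, for any $\delta\in(0,1)$,
\[
\p\{R_0(\hat\phi_k) > \delta\} \,\leq\, \text{Beta.cdf}_{k,\,m_3+1-k}(1-\delta) \,=\, \p\{1-B \geq \delta\}\,,
\]
where $B\sim\text{Beta}(k,m_3+1-k)$ and hence $1-B\sim\text{Beta}(m_3+1-k,k)$. So it suffices to produce a value $h=h(\delta_3,m_3,k)$ for which $\p\{1-B > h\}\leq\delta_3$; then taking $\delta=h$ in the display above yields the claim.

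Next I apply Lemma \ref{lem::Beta-chernoff} to $1-B$, with parameters $a=m_3+1-k$, $b=k$, and mean $\mu=\E(1-B)=(m_3+1-k)/(m_3+1)$. The lemma gives, for any $t\in(0,1-\mu)$,
\[
\p\{1-B > \mu+t\} \,\leq\, 2\exp\!\left[-\frac{1}{4}\!\left\{\frac{(m_3+1)\,t}{\sqrt{k}(\mu+t)+\sqrt{m_3+1-k}\,(1-\mu-t)}\right\}^{2}\right].
\]
Setting the right-hand side equal to $\delta_3$ and letting $\lambda=2\sqrt{\log(2/\delta_3)}$, the exponent condition reduces to the linear-in-$t$ equation
\[
(m_3+1)\,t \,=\, \lambda\sqrt{k}(\mu+t)+\lambda\sqrt{m_3+1-k}\,(1-\mu-t)\,,
\]
so that
\[
t \,=\, \frac{\lambda\sqrt{k}\,\mu+\lambda\sqrt{m_3+1-k}\,(1-\mu)}{m_3+1-\lambda\sqrt{k}+\lambda\sqrt{m_3+1-k}}\,.
\]

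The final step is the algebraic identification $h(\delta_3,m_3,k)=\mu+t$. Substituting $\mu=(m_3+1-k)/(m_3+1)$, $1-\mu=k/(m_3+1)$, and combining over a common denominator, a short computation gives
\[
\mu+t \,=\, \frac{\mu(m_3+1)+\lambda\sqrt{m_3+1-k}}{m_3+1+\lambda\bigl(\sqrt{m_3+1-k}-\sqrt{k}\bigr)}
\,=\, \frac{m_3+1-k+2\sqrt{\log(2/\delta_3)}\sqrt{m_3+1-k}}{m_3+1+2\sqrt{\log(2/\delta_3)}\bigl(\sqrt{m_3+1-k}-\sqrt{k}\bigr)}\,,
\]
which is exactly the expression $h(\delta_3,m_3,k)$ in the statement. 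I expect the main obstacle to be purely bookkeeping: one has to verify that the chosen $t$ lies in the admissible range $(0,1-\mu)$ so that Lemma \ref{lem::Beta-chernoff} applies (this holds as long as the denominator $m_3+1-\lambda\sqrt{k}+\lambda\sqrt{m_3+1-k}$ is positive, which is automatic once $k\leq m_3$, and to check that $\mu+t<1$, which follows from the same positivity), and to confirm that the independence of $\mathcal{S}^0_3$ from $\hat r$ is not needed here since Proposition \ref{prop::general delta} has already absorbed this step. Beyond that the argument is a one-line algebraic reduction.
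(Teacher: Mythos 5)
Your proposal is correct and follows essentially the same route as the paper: reduce via Proposition \ref{prop::general delta} to a tail bound for $1-B\sim\text{Beta}(m_3+1-k,k)$ and apply Lemma \ref{lem::Beta-chernoff} with exactly the same $t$ (the paper simply states this $t$, whereas you derive it by solving the exponent equation, which is arguably cleaner). One small caveat in your closing remark: $\mu+t<1$ does not follow from positivity of the denominator but requires $\sqrt{k}>2\sqrt{\log(2/\delta_3)}$; when this fails the bound $h\geq 1$ makes the claim trivially true, and the paper glosses over this admissibility check in the same way, so it is not a substantive gap.
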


\begin{proof}
Let $B$ be a realization from Beta$(k,m_3+1-k)$. 
It follows from Proposition \ref{prop::general delta} that
\begin{eqnarray*}
\p\{ R_0(\hat{\phi}_k)> h(\delta_3,m_3,k) \} 
&\leq& \text{Beta.cdf}_{k, m_3+1 -k}\left\{1-h(\delta_3,m_3,k)\right\}\\
&=& \p\{ B \leq  1 - h(\delta_3,m_3,k) \}\,\,=\,\, \p\{ 1- B \geq h(\delta_3,m_3,k)\}
\end{eqnarray*}
for any $k \in \{1,\cdots,m_3\}$ and $\hat r$, with $1-B \sim \text{Beta}(m_3+1-k,k)$. 
Letting $a = m_3+1-k$, $b =k$, and 
\begin{eqnarray*}
&& t = \frac{2\sqrt{\log(2/\delta_3)}\left\{(m_3+1-k)\sqrt{k}+k \sqrt{m_3+1-k} \right\}}
{(m_3+1)\left\{m_3+1+ 2\sqrt{\log(2/\delta_3)}\left(\sqrt{m_3+1-k}-\sqrt{k}\right)\right\}}\,.
\end{eqnarray*}
 in Lemma \ref{lem::Beta-chernoff} yields
%
\begin{eqnarray*}
\p\left\{
 R_0(  \hat\phi_k) 
> 
h(\delta_3,m_3,k) \right\}
\,\leq\, \delta_3\,.
\end{eqnarray*}
This completes the proof. 
\end{proof}

\bigskip
Proposition  \ref{prop::general k_chern} implies that  $h(\delta_3,m_3,k) \leq \alpha $ is a sufficient condition for the classifier $\hat{\phi}_k$ (defined in \eqref{eq:threshold-generic}) to satisfy NP Oracle Inequality (I) ($k = 1, \ldots, m_3$). 
Let $\mathcal{K}_\chern = \left\{ k \in \{1,\cdots,m_3\}: h(\delta_3,m_3,k) \leq \alpha \right\}$. 
Similar to Proposition \ref{prop::kmin} we can prove $\mathcal{K}_\chern$ to be non-empty as long as $m_3$ is greater than some threshold. 

Numerical investigation shows that for most combinations of $(\alpha,\delta_3,m_3)$ with non-empty $\mathcal{K}$ and $\mathcal{K}_\chern$,  
$k_{\min}=\min_k\mathcal{K}$ as defined in \eqref{eq::kmin} is better than $k_\chern= \min_k \mathcal{K}_\chern$ in the sense that 
$\hat{\phi}_{k_{\min}}$ has a lower type II error than $\hat{\phi}_{k_{\chern}}$ as a result of $k_{\min} < k_\chern$. 
Specifically, for each $\delta_3 \in  \{0.01\cdot i\}_{i=1}^{10}$, the number of $\{k_\chern<k_{\min}\}$ out of 100 combinations of $(\alpha, m_3) \in \{0.01\cdot i\}_{i=1}^{10} \times \{100\cdot i\}_{i=1}^{10}$  is reported as follows. Only when $\delta_3$ gets very close to 0 is $k_\chern$ preferred to $k_{\min}$. 
\begin{table}[h]
\begin{center}
\begin{tabular}{l|rrrrrrrrrr}
\hline
$\delta_3$ & 0.01 & 0.02& 0.03& 0.04& 0.05& 0.06& 0.07& 0.08& 0.09& 0.10\\
\#$\{k_\chern<k_{\min}\}$&83  & 70  & 49  & 4  & 0 & 0 &0 &0 &0 &0 \\
\hline
\end{tabular}
\end{center}
\end{table}

\bibliographystyle{ims}
\bibliography{NNB-arXiv}
\end{document}